\title{Federated Functional Gradient Boosting}
\date{} 					
\author{ 
\quad	\begin{tabular}{c}
	Zebang Shen\\
University of Pennsylvania\\
\texttt{zebang@seas.upenn.edu}
	\end{tabular}
	\and
\quad
	\begin{tabular}{c}
	Hamed Hassani \\
University of Pennsylvania\\
\texttt{hassani@seas.upenn.edu} 
	\end{tabular}
	\and
	\and
	\begin{tabular}{c}
	Satyen Kale \\
Google Research\\
\texttt{satyenkale@google.com}
	\end{tabular}
\quad
	\and
	\quad
	\begin{tabular}{c}
	Amin Karbasi \\
Yale \\
\texttt{amin.karbasi@yale.edu} 
	\end{tabular}
}
\newcommand{\fedavg}{\textsc{FedAvg}\xspace}
\newcommand{\scaffold}{\textsc{SCAFFOLD}\xspace}
\newcommand{\ffgd}{\textsc{FFGB}\xspace}
\newcommand{\ffgdc}{\textsc{FFGB.C}\xspace}
\newcommand{\ffgdl}{\textsc{FFGB.L}\xspace}
\newcommand{\Proj}[1]{\text{Proj}_{#1}}
\newcommand{\Clip}[1]{\Pi_{#1}}
\newcommand{\TV}{\mathrm{TV}}
\newcommand{\lip}{\mathrm{lip}}
\begin{document}
\maketitle


\begin{abstract}
	In this paper, we initiate a study of functional minimization in Federated Learning. First, in the semi-heterogeneous setting, when the marginal distributions of the feature vectors on client machines are identical, we develop the federated functional gradient boosting (\ffgd) method that provably converges to the global minimum. Subsequently, we extend our results to the fully-heterogeneous setting (where marginal distributions of feature vectors may differ) by designing an efficient variant of \ffgd called \ffgdc, with provable convergence to a neighborhood of the global minimum within a radius that depends on the total variation distances between the client feature distributions. For the special case of square loss, but still in the fully heterogeneous setting, we design the \ffgdl method that also enjoys provable convergence to a neighborhood of the global minimum but within a radius depending on the much tighter Wasserstein-1 distances. For both \ffgdc and \ffgdl, the radii of convergence shrink to zero as the feature distributions become more homogeneous. Finally, we conduct proof-of-concept experiments to demonstrate the benefits of our approach against natural baselines. 
\end{abstract}

\section{Introduction} \label{section_introduction}
Federated learning (FL) is a machine learning paradigm in which multiple clients cooperate to learn a model under the orchestration of a central server \citep{mcmahan2017communication}. In FL, clients tend to have very heterogeneous data, and this data is never sent to the central server due to privacy and communication constraints. Thus, it is necessary to offload as much computation as possible to the client devices. The challenge in FL is to train a single unified model via decentralized computation even though the clients have heterogeneous data.

Current research in FL mainly focuses on training parametric models where the variable to be optimized is typically a vector containing the parameters of a neural network. While convenient, the parametric model limits the feasible domain of the output function. Expert domain knowledge must be incorporated when designing the structure of the parametric model to ensure that a sufficiently good solution is feasible.

In this paper, we initiate a study of {\em non-paramteric functional minimization} methods for FL as a means of overcoming the aforementioned limitations of parametric models. To the best of our knowledge, this is the first work to study the use of functional minimization methods in the context of FL. In functional minimization, the variable to be trained is the target function itself. The feasible domain of such a non-parametric learning paradigm can be as large as the family of the square-integrable functions, which allows tremendous flexibility in representing the desired models.

To solve the functional minimization problem, a powerful tool is the {\em restricted functional gradient descent} method (RFGD), also known as functional gradient boosting. Methods in this class select and aggregate simple functions (or the so-called {\em weak learners}) in an iterative manner and have been proved to perform well in terms of minimizing the training loss as well as generalization.

\paragraph{Contributions.} This paper formulates the federated functional minimization problem and develops algorithms and convergence analyses for various settings of heterogeneity of clients. The main contributions  are as~follows:

\begin{enumerate}[wide, labelwidth=!, labelindent=0pt]
	\item \textbf{Algorithms for the semi-heterogeneous setting.} We first consider the semi-heterogeneous setting where the marginal distributions of the feature vectors on the clients are identical and heterogeneity arises due to differing conditional distributions of labels. In this setting, we propose the {\em federated functional gradient boosting} (\ffgd) method. The algorithm relies on the clients running multiple RFGD steps locally, but augmented with \emph{residual} variables that are crucial for proving global convergence of the algorithm.
	\item \textbf{Benefit of local steps in the semi-heterogeneous setting.} Our analysis of \ffgd suggests that the number of local steps of \ffgd should be $\Omega(\sqrt{T})$ in order to have the best convergence rate where $T$ is the number of global communication rounds. This is a rare result that shows the benefit of taking multiple local steps in FL.
	\item \textbf{Algorithms for the fully-heterogeneous setting.} Next, we consider the fully heterogeneous setting where the entire joint distribution of features and labels varies across clients. In this setting, we propose two different algorithms, \ffgdc and \ffgdl, which are variants of \ffgd. Our analysis shows that \ffgdc converges to a neighborhood of the global optimal solution, where the radius of the neighborhood depends on the heterogeneity of the data measured in terms of the {\em total variation} (TV) distance between the marginal distributions of the features of the clients. The algorithm \ffgdl operates under the square loss and our analysis shows a similar convergence result to that of \ffgdc, with the difference being that the degree of heterogeneity and radius of the neighborhood is measured in terms of the {\em Wasserstein-1} distance, which can be significantly smaller than the TV distance. Importantly, for both \ffgdc and \ffgdl, the radius of the convergence neighborhood gracefully \emph{diminishes to zero} when problem degenerates to the semi-heterogeneous case.
	\item \textbf{Experiments.} While the main focus of this paper is to develop the theoretical foundations of federated functional minimization, we also performed some basic proof-of-concept experiments to provide evidence for the benefits of the methods developed in this paper on practical datasets. Our experiments show superior performance of our methods compared to natural baselines.
\end{enumerate}

\subsection{Related work}

For the standard parametric optimization problem, \fedavg \citep{mcmahan2017communication} is the most popular algorithm in the federated setting. In every round of \fedavg, the server sends a global average parameter to the local machines.
The local machines then perform multiple local steps (usually stochastic gradient descent) to update the received parameters.
These improved local parameters are then aggregated by the server for the next round. It has been noted by several papers (see, e.g. \citep{karimireddy2020scaffold} and the references therein) that \fedavg deteriorates in the presence of client heterogeneity. Federated optimization algorithms such as \scaffold \citep{karimireddy2020scaffold}, \textsc{FedProx}~\citep{fedprox}, \textsc{Mime}~\citep{karimireddy2020mime}, etc. were designed to tackle this issue.

{\em Boosting} \citep{freund-schapire-book} is a classical ensemble method for building additive models in a greedy, stagewise manner. In each stage, a new classifier is added to the current ensemble to decrease the training loss. In the regression setting, this is typically done by finding an approximation to the functional gradient of the loss in the base model class via a training procedure (also referred to as a ``weak learning oracle'') for the base class (see, e.g., \citep{mason2000boosting,friedman}). Rigorous analysis with rates of convergence for procedures of this form were given by \citep{duffy-helmbold,ratsch-et-al,zhang-yu,grubb2011generalized}. The algorithms in this paper and their analysis build upon the prior work of \citep{grubb2011generalized}, with the main challenges coming from the restriction in the FL setting that no client data be shared with the server, as well as the heterogeneity in client data.

In the FL setting, stagewise boosting algorithms as described above have not been studied, to the best of our knowledge. An ensemble method in the FL setting called \textsc{FedBoost} was developed by \citet{hamer2020fedboost} to compute a convex combination of \emph{pre-trained} classifiers to minimize the training loss. The algorithm is quite different from standard classical boosting methods since it doesn't operate in a greedy stagewise manner, nor does it incorporate training of the base classifiers. Since this setting is quite different from the one considered in this paper, we do not provide further comparison with this method.


\section{Preliminaries} \label{section_preliminary}
In this section, we define the necessary notation as well as the functional minimization problem.
\vspace{-2mm}
\subsection{Notation.} 
For a positive integer $n$, we define $[n] := \{1, 2, \ldots, n\}$. For a vector $x\in\RBB^d$, we use $x_i$ to denote its $i^{th}$ entry. We use $\langle\cdot,\cdot\rangle$ to denote the standard Euclidean inner product in $\RBB^d$ and use $\|\cdot\|$ for the corresponding standard Euclidean~norm.

For some ground set $\XM\subseteq\mathbb{R}^d$, we use $\MM_+^1(\XM)$ to denote the set of probability measures on $\XM$. 
For two measures $\alpha, \beta\in\MM_+^1(\XM)$, we use $TV(\alpha, \beta)$ to denote their total variation distance and use $\mathrm{W}_p(\alpha, \beta)$ to denote their Wasserstein-p distance.
The definition of these two distances are provided in Appendix \ref{section_TV_wasserstein}.
Let $\XM\subseteq\RBB^d$ be an input space. For a fixed distribution $\alpha\in\MM_+^1(\XM)$, we define the corresponding weighted $\LM^p$ space ($1\leq p <\infty$) of functions from $\XM$ to $\RBB^c$ as follows:
\vspace{-2mm}
 $$\LM^p(\alpha) \defi \left\{f:\XM\rightarrow\RBB^c \, \bigg| \,  \left(\int_{\XM}\|f(x)\|^p d \alpha(x)\right)^{\frac{1}{p}}<\infty \right\}.$$
The space $\LM^2(\alpha)$ is endowed with natural inner product and norm: For two functions $f, g \in \LM^2(\alpha)$,
\vspace{-2mm}
\begin{equation*}
	\langle f, g\rangle_{\alpha} = \int_{\XM} \langle f(x), g(x)\rangle d \alpha(x)\ \textrm{and}\ \|f\|_{\alpha} = \sqrt{\langle f, f\rangle_{\alpha}}.
\end{equation*}
In the limiting case when {$p \to \infty$}, we define $$\LM^\infty(\alpha) \defi \left\{f:\XM\rightarrow\RBB^c \,\big|\,\forall x\in\supp(\alpha), \|f(x)\|  <\infty\right\},$$
where $\supp(\alpha)$ denotes the support of $\alpha$. 
We define the $\alpha$-infinity norm of a function $f:\XM\rightarrow\RBB^c$ by $\|f\|_{\alpha, \infty} \defi \sup_{x\in\supp(\alpha)} \|f(x)\|$.
We will also use 
$$\LM^\infty \defi \left\{f:\XM\rightarrow\RBB^c \,\big|\,\forall x\in\XM, \|f(x)\|  <\infty\right\},$$
and define $\|f\|_{\infty} \defi \sup_{x\in\XM} \|f(x)\|$.
We use $\BM^\infty(R)$ to denote the $\LM^\infty$ ball around $0$ with radius $R$.
For a Lipschitz continuous function $f:\XM\rightarrow\RBB^c$, we denote by $\|f\|_\lip$ its Lipschitz constant, i.e. 
\vspace{-2mm}
\begin{equation*}
	\|f\|_\lip \defi \inf \{L:\ \forall x, x' \in \XM,\ \|f(x) - f(x')\| \leq L\|x - x'\|\}.
\end{equation*}
For a function $f:\XM\rightarrow\YM$ and {$R > 0$}, we define the clipping function $\Clip{R}$ to be the projection on $\BM^\infty(R)$, i.e. $\Clip{R}(f)(x) \defi \max\{\min\{f(x), R\}, -R\}.$\\
For a given function $f:\XM\rightarrow \RBB$ and a probability measure $\alpha\in\MM_+^1(\XM)$, their inner product is defined in the standard way: $\langle f, \alpha\rangle = \int_{\XM} f(x) d\alpha(x)$.

\subsection{Functional Minimization in Weighted $\LM^2$ Space}
For some output space $\YM$, let $\ell:\RBB^c\times\YM\rightarrow\RBB$ be a loss function that is convex in the first argument. { An important example of $\ell$ is the cross entropy loss, where $\YM = [c]$, and
\begin{equation} \label{eqn_loss_cross_entropy}
	\ell(y', y) = -\log\left(\frac{\exp(y'_y)}{\sum_{i=1}^{c}\exp(y'_i)}\right).
\end{equation}}
Given a joint distribution $P$ on $\XM \times \YM$, we use $\alpha$ to denote the marginal distribution of $P$ on $\XM$, and for every $x\in\XM$, we let $\beta^x\in\MM_+^1(\YM)$ be the distribution on $\YM$ under $P$ conditioned on $x$. We then define the risk functional $\RM:\LM^2(\alpha)\rightarrow \RBB$ as
\begin{equation}\label{eqn_def_obj_func}
	\RM[f]\ \defi\ \EBB_{(x, y) \sim P}[\ell(f(x), y)].
\end{equation}
We consider the following Tikhonov regularized functional minimization problem:
\begin{equation} \label{eqn_functional_minimization}
	\min_{f\in\LM^2(\alpha)} \FM[f] \defi \RM[f] + \frac{\mu}{2}\|f\|^2_\alpha.
\end{equation}
Here, $\mu\geq0$ is some regularization parameter.
To solve such a problem, we define the \emph{subgradient} $\nabla \FM[f]$ of $\FM$ at $f$ as the set of all $g\in\LM^2(\alpha)$ such that
\begin{equation}
	\FM[g] \geq \FM[f] + \langle g - f, \nabla \FM[f]\rangle_{\alpha}.
\end{equation}
Since $\LM^2(\alpha)$ is a Hilbert space, according to the Riesz representation theorem, the subgradient can also be represented by a function in $\LM^2(\alpha)$.
Concretely, the subgradient can be explicitly computed as follows: $\nabla \FM[f]$ is the collection of functions $h\in\LM^2{(\alpha)}$, such that for any $x \in \supp(\alpha)$,
\begin{equation} \label{sub-comp}
	h(x) = \EBB_{y\sim\beta^x} [\nabla_1 \ell(f(x), y)] + \mu f(x),
\end{equation}
where $\nabla_1 \ell(y', y) = {\partial \ell(y',y)}/{\partial y'}$. In particular, when only empirical measures of $\alpha$ and $\beta^x$ are available, i.e. when $P$ is the empirical distribution on the set $\{(x_1, y_1), (x_2, y_2), \ldots, (x_M, y_M) \in \XM \times \YM\}$, with $x_i \neq x_j$ for $1 \leq i < j \leq M$, the empirical version of the above subgradient computation \eqref{sub-comp} is as follows: For all $j \in [M]$
\begin{equation} \label{eqn_subgradient_empirical}
	h(x_j) = \nabla_1 \ell(f(x_j), y_j) + \mu f(x_j).
\end{equation}

\paragraph{Restricted Functional Gradient Descent (Boosting).}
A standard approch to solve the functional minimization problem \eqref{eqn_functional_minimization} is the functional gradient descent method
\begin{equation}
	f^{t+1} := f^{t} - \eta^t h^t,\ h^t\in \nabla \RM[f^t].
\end{equation}
In the empirical case \eqref{eqn_subgradient_empirical}, one can construct $h^t \in \nabla \RM[f^t]$ by interpolating $\{x_j,  \nabla_1 \ell(f^t(x_j), y_j)\}_{j=1}^M$. However, this choice of $h^t$ has at least two drawbacks: 1. Evaluating $h^t$ at a single point requires going through the whole dataset; 
2. $h^t$ is constructed explicitly on the data points $(x_j, y_j)$ which provides no privacy protection.\\
One alternative to the explicit functional gradient descent method is the restricted functional gradient descent, also known as Boosting \citep{mason2000boosting}:
\begin{equation} \label{eqn_rfgd}
	f^{t+1} := f^{t} - \eta^t h_{weak}^t,\ h_{weak}^t = \mathrm{WO}^2_{\alpha} (\nabla \RM[f^t]).
\end{equation}
Here, $\mathrm{WO}^2_{\alpha}$ is a \emph{weak learning oracle} such that for any $\phi\in\LM^2(\alpha)$, the output $h = \mathrm{WO}^2_{\alpha}(\phi)$ is a function in $\LM^2(\alpha)$ satisfying the following \emph{weak learning assumption}:
\begin{equation} \label{eqn_weak_oracle_contraction}
	\|h -  \phi\|_{\alpha} \leq (1-\gamma)\|\phi\|_{\alpha}, 
\end{equation}
for some positive constant $0< \gamma\leq 1$.
Replacing the interpolating $h^t$ with $h_{weak}^t$ alleviates the aforementioned two drawbacks and the restricted functional gradient descent can be proved \citep{grubb2011generalized} to converge to the global optimal under standard regularity conditions.

\section{Federated Functional Minimization} \label{section_problem_formulation}
In this paper, we consider the federated functional minimization problem:
We assume that there are $N$ client machines. Client machine $i$ draws samples from distribution $P_i$ over $\XM \times \YM$. We denote the marginal distribution of $P_i$ on $\XM$ by $\alpha_i$ and the conditional distribution on $\YM$ given $x$ by $\beta_i^x$. Due to the heterogeneous nature of the federated learning problems, the $P_i$'s differ across different clients.

We define $\alpha$ to be the ``arithmetic" mean of the local input probability measures, i.e.\footnote{More precisely, for any Borel set subset to $\XM$, its measure under $\alpha$ is the average of the measures under $\alpha_i's$.}
\begin{equation} \label{eqn_average_measure}
	\alpha \defi \frac{1}{N}\sum_{i=1}^{N}\alpha_i.
\end{equation} 
It is easy to see that $\LM^2(\alpha) \subseteq \LM^2(\alpha_i)$ for all $i$. 
We define $\RM_i: \LM^2(\alpha) \rightarrow \RBB$ to be $\RM_i[f] \defi \EBB_{(x, y) \sim P_i}[\ell(f(x), y)]$ and denote 
$\FM_i[f] = \RM_i[f] + \frac{\mu}{2} \|f\|_{\alpha_i}^2$.
The goal of federated learning is to minimize the average loss functional
\begin{equation} \label{eqn_fed_functional_minimization_2}
	\min_{f\in\CM} \FM[f] = \frac{1}{N}\sum_{i=1}^{N} \FM_i[f],
\end{equation}
where $\CM\subseteq\LM^2(\alpha)$ is some convex set of functions.
In the rest of the paper, we use $f^*$ to denote the optimizer of \eqref{eqn_fed_functional_minimization_2}.
We emphasize that in federated optimization, the local inner product structure $\langle\cdot,\cdot\rangle_{\alpha_i}$ as well as the subgradient \eqref{eqn_subgradient_empirical} \emph{cannot} be shared during training phase.

We consider two notions of client heterogeneity:\\
	 {\bf Semi-heterogeneous setting:} The local distributions on the input space is the same, but the conditional distributions on the output space varies among different machines, i.e. $\forall i \in[N], \alpha_i = \alpha$, but $\beta_i^x$ may not be equal to $\beta_j^x$ for $i\neq j$.\\
	 {\bf Fully-heterogeneous setting:} The local distributions on the input space are different from the global average measure, i.e. $\alpha_i \neq \alpha$;
	 
For the semi-heterogeneous setting, we propose a federated functional gradient boosting method, \ffgd, that provably finds the global optimizer in sublinear time. For the fully-heterogeneous setting, we propose two variants \ffgdc and \ffgdl of \ffgd that converge sublinearly to a neighborhood of the global optimizer. 
We emphasize that in both cases, in order to obtain the best convergence rate, the local machines need to take multiple local steps (specifically $O(\sqrt{T})$ where $T$ is the number of communication rounds) between every round of communication (see Theorems~\ref{thm_ffgd},\ref{thm_ffgdc},\ref{thm_ffgdl}). Such results showing the benefits of taking multiple local steps are rare in the FL literature.


\subsection{Semi-heterogeneous Federated Optimization}
\begin{algorithm}[t]
	\caption{Server procedure}
	\begin{algorithmic}[1]
		\Procedure {Server}{$f^0$, $T$, $\CM$}
		\For {$t \leftarrow 0$ to $T-1$}
		\State Sample set $\SM^{t}$ of clients.
		\State $f^{t+1} = \frac{1}{|\SM^{t}|}\sum_{i\in\SM^{t}}\textsc{Client}(i, t, f^{t})$ \label{eqn_server_update}
		\EndFor
		\State \Return $\Proj{\CM}(f^T)$. \label{eqn_return}
		\EndProcedure
	\end{algorithmic}
	\label{algorithm_server}
\end{algorithm}

\begin{algorithm}[t]
	\caption{{Client} procedure for Federated Functional Gradient Descent (\ffgd)}
	\begin{algorithmic}[1]
		\Procedure {Client}{$i$, $t$, $f$}
		\State $\Delta_{i}^{0, t} = 0$, $g_i^{1, t} = f^t$ \label{eqn_zero_initialization};
		\For{$k\leftarrow 1$ to $K$}
		\State $h_{i}^{k,  t}\ := \mathrm{WO}^2_{\alpha_i}(\Delta_{i}^{k-1,  t} + \nabla \RM_i[g_{i}^{k, t}])$ \label{eqn_residual_plus_gradient}
		\State $g_{i}^{k+1, t} := g_{i}^{k, t} - {\eta^{k, t}} {\left(h_{i}^{k,  t} + \mu g_i^{k, t}\right)}$ \label{eqn_update_variable}
		\State $\Delta_{i}^{k,  t} := \Delta_{i}^{k-1,  t} + \nabla \RM_i[g_{i}^{k, t}] - h_{i}^{k,  t}$ \label{eqn_update_residual}
		\EndFor
		\State
		\Return $g_i^{K+1, t}$.
		\EndProcedure
	\end{algorithmic}
	\label{algorithm_ffgd}
\end{algorithm}
In this section, we consider the unconstrained case, i.e. $\CM = \LM^2(\alpha)$.
The \ffgd shares a similar structure as \fedavg. After every round of global variable averaging, the server sends the global consensus function $f^t$ to the workers (see procedure \textsc{Server} in Algorithm \ref{algorithm_server}), which is followed by $K$ local steps of the RFGD update \eqref{eqn_rfgd} tracked via the local variable $g_i^{k, t}$, for $k = 1, 2, \ldots, K$, with the initialization $g_i^{1, t} = f^t$. 
(All the line numbers refer to Algorithm \ref{algorithm_ffgd} in the rest of this paragraph.)
The crucial twist on the RFGD update is the use of an additional residual variable $\Delta_i$, which is initialized to the constant zero function (line \ref{eqn_zero_initialization}). This residual variable accumulates the approximation error of the descent direction $h_i^{k, t}$ incurred by the weak oracle (line \ref{eqn_update_residual}). Such a residual is then used to compensate the next functional subgradient (line \ref{eqn_residual_plus_gradient}) and is used in the query to the weak learning oracle. 
The local variable function $g_i^{k, t}$ is then refined by the approximated functional gradient $\left(h_{i}^{k,  t} + \mu g_i^{k, t}\right)$ (line \ref{eqn_update_variable}).
Note that the residual $\Delta_i$ only tracks the error of estimating $\nabla \RM_i$ since the rest of $\nabla \FM_i$ is exactly available.
After $K$ such updates, the local function $g^{K+1, t}$ is communicated to the server which aggregates these functions across the clients.
Since $\CM = \LM^2(\alpha)$, the \textsc{Server} procedure in Algorithm \ref{algorithm_server} simply returns $f^T$.

The residual $\Delta_i$ used in \ffgd\ (Algorithm \ref{algorithm_ffgd}) is crucial to prove the convergence to the global minimum (see Theorem \ref{thm_ffgd}): in the absence of the residual variable, the error accumulated by the RFGD updates via the calls to the weak learning oracle may not vanish, since in federated learning the local subgradient is \emph{non-zero} even at the global optimal solution due to client heterogeneity. This is in sharp contrast to the single machine case where, at the global optimal solution, the subgradient vanishes and so does the approximation error incurred by the weak oracle, leading to convergence of RFGD. The design of the residual is inspired by the error-feedback technique of the SignSGD method \citep{karimireddy2019error} to mitigate errors in gradient compression in distributed training. This technique has also been applied for functional minimization in the much simpler single machine setting by \citep{grubb2011generalized}.

We now analyze the convergence of \ffgd. 
We make the following standard regularity assumption, which is satisfied e.g. by the cross entropy loss \eqref{eqn_loss_cross_entropy}.
\begin{assumption} \label{ass_bounded_gradient}
	For all $i \in [N]$, the subgradients of $\RM_i[f]$ are $G$-bounded under the $\LM^2(\alpha_i)$ norm, i.e. for any $f\in\LM^2(\alpha)$, we have $\|\nabla \RM_i[f]\|_{\alpha_i} \leq G$.
\end{assumption}


Under this assumption, we have the convergence result for \ffgd. For simplicity, in this section and subsequent ones for \ffgdc and \ffgdl, we present the convergence rate in the case when all $N$ clients are sampled in each round. The analysis easily extends to the case when only a few clients are sampled in each round, incurring a penalty for the variance in the sampling. Qualitatively the convergence bound stays the same. Details can be found in the appendix.
\begin{theorem}[Convergence result of \ffgd] \label{thm_ffgd}
	Let $f^0$ be the global initializer function. Suppose that Assumption \ref{ass_bounded_gradient} holds, and suppose the weak learning oracle $\mathrm{WO}_\alpha^2$ satisfies \eqref{eqn_weak_oracle_contraction} with constant $\gamma$.
	Using the step size $\eta^{k, t} = \frac{2}{\mu(tK+k+1)}$, the output of \ffgd satisfies
	\begin{equation*}
		\begin{aligned}
			\| f^{T} - f^*\|_\alpha^2&\ = O\Bigg(\frac{\| f^{0} - f^*\|_\alpha^2}{KT} + \frac{KG^2\log (KT)}{T\gamma^2\mu^2} \\
			&+ \frac{(1-\gamma)G^2}{K\mu^2\gamma^2} + \frac{(1-\gamma)G^2\log (KT)}{KT\mu\gamma^2}\Bigg).
		\end{aligned}
	\end{equation*}
\end{theorem}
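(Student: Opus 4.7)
The plan is to recast one local update as an inexact gradient step on $\FM_i$ plus an error-feedback correction. Substituting the definition of $h_i^{k,t}$ from line \ref{eqn_update_residual} into line \ref{eqn_update_variable} yields the identity
$$g_i^{k+1,t} \;=\; g_i^{k,t} \;-\; \eta^{k,t}\,\nabla\FM_i[g_i^{k,t}] \;+\; \eta^{k,t}\bigl(\Delta_i^{k,t} - \Delta_i^{k-1,t}\bigr),$$
so the residual increments play the role of structured noise that would telescope within a round if $\eta^{k,t}$ were constant. The first lemma I would establish is a uniform residual bound: since $\Delta_i^{0,t}=0$, iterating the weak-learning contraction \eqref{eqn_weak_oracle_contraction} together with Assumption \ref{ass_bounded_gradient} gives $\|\Delta_i^{k,t}\|_\alpha \le (1-\gamma)\bigl(\|\Delta_i^{k-1,t}\|_\alpha + G\bigr)$, hence $\|\Delta_i^{k,t}\|_\alpha \le (1-\gamma)G/\gamma$ uniformly in $k$.

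Next I would expand $\|g_i^{k+1,t}-f^*\|_\alpha^2$ using the identity above. The $\mu$-strong convexity of $\FM_i$ in $\|\cdot\|_\alpha$ (available in the semi-heterogeneous case because $\alpha_i=\alpha$) contracts the distance by $(1-\eta^{k,t}\mu)$; the remaining contributions split into (i) a squared-step term $O((\eta^{k,t})^2 G^2)$ bounded via $\|\nabla\FM_i[g]\|_\alpha\le G+\mu\|g\|_\alpha$, (ii) a squared residual term of size $O((\eta^{k,t})^2(1-\gamma)^2 G^2/\gamma^2)$, and (iii) the cross inner product $\eta^{k,t}\langle g_i^{k,t}-f^*,\Delta_i^{k,t}-\Delta_i^{k-1,t}\rangle_\alpha$. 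Averaging over clients and using convexity of $\|\cdot\|_\alpha^2$ passes from $\tfrac1N\sum_i\|g_i^{K+1,t}-f^*\|_\alpha^2$ down to $\|f^{t+1}-f^*\|_\alpha^2$, so the whole argument reduces to a single scalar recursion $U^{s+1}\le(1-\eta_s\mu)U^s + E_s$ indexed by the global step $s=tK+k$.

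With the schedule $\eta_s=2/(\mu(s+1))$, multiplying by $(s+1)^2$ telescopes $s(s+1)U^{s+1}\le s^2 U^s + O((s+1)^2 E_s/\mu^2)$, and summing from $s=0$ to $KT-1$ converts the three error species $E_s$ into the four terms in the theorem. The main obstacle I anticipate is term (iii): because $\eta^{k,t}$ varies with $k$, the sum $\sum_{k=1}^K \eta^{k,t}(\Delta_i^{k,t}-\Delta_i^{k-1,t})$ does not collapse to a pure boundary term. Abel's summation by parts turns it into a boundary piece $\eta^{K,t}\Delta_i^{K,t}$, directly controlled by the residual-bound lemma, plus a drift $\sum_{k=1}^{K-1}(\eta^{k,t}-\eta^{k+1,t})\Delta_i^{k,t}$; propagating the drift through the outer strong-convexity recursion is what produces the $(1-\gamma)G^2/(K\mu^2\gamma^2)$ persistent error floor as well as the $\log(KT)$ factor. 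The tension between this floor, which decreases in $K$, and the gradient-variance term $KG^2\log(KT)/(T\mu^2\gamma^2)$, which increases in $K$, is precisely what pins down $K=\Omega(\sqrt T)$ as the optimal number of local steps.
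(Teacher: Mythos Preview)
Your identity $g_i^{k+1,t}=g_i^{k,t}-\eta^{k,t}\nabla\FM_i[g_i^{k,t}]+\eta^{k,t}(\Delta_i^{k,t}-\Delta_i^{k-1,t})$ and the residual bound $\|\Delta_i^{k,t}\|_\alpha\le(1-\gamma)G/\gamma$ are both correct and match the paper. The telescoping of the residual cross term via Abel summation is also essentially what the paper does in \eqref{eqn_residual_cancel}. However, there is a genuine gap in your recursion.

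You claim that strong convexity of $\FM_i$ yields the contraction $(1-\eta^{k,t}\mu)$ of $\|g_i^{k,t}-f^*\|_\alpha^2$. What strong convexity actually gives is
\[
-2\eta^{k,t}\langle g_i^{k,t}-f^*,\nabla\FM_i[g_i^{k,t}]\rangle_\alpha \;\le\; -\eta^{k,t}\mu\|g_i^{k,t}-f^*\|_\alpha^2 \;-\;2\eta^{k,t}\bigl(\FM_i[g_i^{k,t}]-\FM_i[f^*]\bigr),
\]
and the last term is \emph{not} nonpositive: $f^*$ minimizes the global $\FM$, not the local $\FM_i$, so $\FM_i[g_i^{k,t}]-\FM_i[f^*]$ can be negative whenever $g_i^{k,t}$ drifts toward the local minimizer of $\FM_i$. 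Even after averaging over $i$, the quantity $\tfrac1N\sum_i\FM_i[g_i^{k,t}]$ is evaluated at \emph{different} points and cannot be compared to $\FM[f^*]$ without further work. This is the client-drift phenomenon at the heart of federated optimization, and your three error species (i)--(iii) do not contain it.

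The paper handles this by working with the averaged iterate $\bar g^{k,t}=\tfrac1N\sum_i g_i^{k,t}$ rather than the individual $g_i^{k,t}$, splitting $\langle\bar g^{k,t}-f^*,\hat h_i^{k,t}\rangle_\alpha$ into a piece involving $g_i^{k,t}-\bar g^{k,t}$ and a piece involving $g_i^{k,t}-f^*$, and then bounding the drift $\tfrac1N\sum_i\|g_i^{k,t}-\bar g^{k,t}\|_\alpha^2=O\bigl((\eta^{k,t})^2K^2G^2/\gamma^2\bigr)$. It is precisely this $K^2$ factor from the drift bound that produces the $KG^2\log(KT)/(T\gamma^2\mu^2)$ term in the theorem; your term (i), which is only $O((\eta^{k,t})^2G^2)$, would yield $G^2\log(KT)/(KT\gamma^2\mu^2)$ after summation, so your accounting of where the $K$ in the numerator comes from is also off. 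To repair the argument you must either introduce and bound the drift $\|g_i^{k,t}-\bar g^{k,t}\|_\alpha$ explicitly, or switch the Lyapunov to $\|\bar g^{k,t}-f^*\|_\alpha^2$ as the paper does.
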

In the limit case when $\gamma=1$, the weak oracle exactly approximates its input and hence the above result degenerates to the one of \fedavg \citep{li2019convergence}: $$\| f^{T} - f^*\|_\alpha^2 = O\left(\frac{\| f^{0} - f^*\|_\alpha^2}{KT} + \frac{KG^2\log (KT)}{T\mu^2}\right).$$
Note that when $\gamma<1$, in order to have the best convergence rate (up to log factors), we need to set the number of local steps $K = \Omega(\sqrt{T})$, which leads to the following corollary.
\begin{corollary}
	Under the same conditions as Theorem \ref{thm_ffgd}, when $\gamma < 1$, the best convergence rate (up to log factors) is $\|f^{T} - f^*\|_\alpha^2 = O(\frac{1}{\sqrt{T}})$ by choosing $K = \Omega(\sqrt{T})$.
\end{corollary}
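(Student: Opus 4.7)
The plan is to substitute $K = c\sqrt{T}$ into the four-term upper bound of Theorem \ref{thm_ffgd} and verify, term by term, that the resulting rate is $\widetilde{O}(1/\sqrt{T})$, then justify that this choice is essentially optimal by balancing the two dominant terms.

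First I would inspect each of the four summands in the bound as a function of $K$ (with $T$, $\gamma$, $\mu$, $G$, $\|f^0-f^*\|_\alpha$ treated as given). Term (1) $\|f^0-f^*\|_\alpha^2/(KT)$ is decreasing in $K$; term (2) $KG^2\log(KT)/(T\gamma^2\mu^2)$ is increasing in $K$; term (3) $(1-\gamma)G^2/(K\mu^2\gamma^2)$ is decreasing in $K$ and, crucially, has no factor of $T$, so if $K$ is held constant the bound does not vanish as $T\to\infty$; term (4) $(1-\gamma)G^2\log(KT)/(KT\mu\gamma^2)$ is a lower-order version of (3). The only two terms whose balance drives the rate are therefore (2) and (3): setting $KG^2/(T\gamma^2\mu^2)$ equal to $(1-\gamma)G^2/(K\mu^2\gamma^2)$ and ignoring logarithmic factors yields $K^2 \asymp (1-\gamma)T$, i.e.\ $K = \Theta(\sqrt{T})$.

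Next I would plug $K = c\sqrt{T}$ back in and bound each term. Term (1) becomes $O(T^{-3/2})$, term (2) becomes $O(\log(T)/\sqrt{T})$, term (3) becomes $O(1/\sqrt{T})$, and term (4) becomes $O(\log(T)/T^{3/2})$. Summing, the overall bound is
\begin{equation*}
\|f^T - f^*\|_\alpha^2 \;=\; O\!\left(\frac{\log T}{\sqrt{T}}\right),
\end{equation*}
which is $O(1/\sqrt{T})$ up to log factors, as claimed.

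Finally I would briefly argue optimality of the scaling $K=\Omega(\sqrt{T})$ under the stated bound. If $K = o(\sqrt{T})$, then term (3) dominates and gives a rate strictly worse than $1/\sqrt{T}$; if $K = \omega(\sqrt{T})$, then term (2) dominates and again gives a rate worse than $\widetilde{O}(1/\sqrt{T})$. Hence within the family of bounds produced by Theorem \ref{thm_ffgd}, the choice $K=\Theta(\sqrt{T})$ is rate-optimal and yields the advertised $O(1/\sqrt{T})$ rate. Since this is essentially just a one-parameter optimization of an explicit expression, I do not anticipate any real obstacle; the only subtlety is tracking the log factor inside $\log(KT)=\log(cT^{3/2})=\Theta(\log T)$ so that it does not get lumped incorrectly with the polynomial dependence on $T$.
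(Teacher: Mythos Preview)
Your proposal is correct and takes essentially the same approach as the paper: the corollary is stated there without a separate proof, simply as the observation that balancing the $K$-increasing term $KG^2\log(KT)/(T\gamma^2\mu^2)$ against the $K$-decreasing term $(1-\gamma)G^2/(K\mu^2\gamma^2)$ forces $K=\Theta(\sqrt{T})$ and yields the $\widetilde O(1/\sqrt{T})$ rate. Your term-by-term verification and the optimality discussion are exactly the details the paper leaves implicit.
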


\subsection{Fully-heterogeneous Federated Optimization}
\begin{algorithm}[t]
	\caption{\textsc{Client} procedure for Federated Functional Gradient Descent with Clipping (\ffgdc)}
	\begin{algorithmic}[1]
		\Procedure {Client}{$i$, $t$, $f$}
		\State $\Delta_{i}^{0, t} = 0$, $g_i^{1, t} = \Clip{B}(f^t)$ \label{eqn_zero_initialization_c}
		\For{$k\leftarrow 1$ to $K$}
		\State $h_{i}^{k, t}\ := \mathrm{WO}_{\alpha_i}^\infty(\Delta_{i}^{k-1, t} + \nabla \RM_i[g_{i}^{k, t}])$ \label{eqn_residual_plus_gradient_c}
		\State $g_{i}^{k+1, t} := g_{i}^{k, t} - {\eta^{k, t}}\cdot \Clip{ G^2_\gamma}\left(h_{i}^{k, t}\right)$ \label{eqn_update_variable_c}
		\State $\Delta_{i}^{k, t} := \Clip{G^1_\gamma}\left(\Delta_{i}^{k-1, t} + \nabla \RM_i[g_{i}^{k, t}] - h_{i}^{k, t}\right)$ \label{eqn_update_residual_c}
		\EndFor
		\State
		\Return $g_i^{K+1, t}$.
		\EndProcedure
	\end{algorithmic}
	\label{algorithm_ffgdc}
\end{algorithm}
The fact that \ffgd is able to find the global optimal solution heavily relies on the consensus of the local inner product structure: $\langle\cdot ,\cdot \rangle_\alpha = \langle\cdot ,\cdot \rangle_{\alpha_i}$,
which is absent in the fully-heterogeneous setting. So we can only guarantee that the algorithm finds a solution within a neighborhood of the global optimal whose radius diminishes to zero with diminishing heterogeneity of the $\alpha_i$'s.

To achieve this goal, we consider minimizing the objective functional $\mathcal{F}[f]$ of problem \eqref{eqn_fed_functional_minimization_2} over the ball $\BM^\infty(B)$ for a given radius $B > 0$, i.e. $\CM = \BM^\infty(B)$.

This feasible domain allows us to exploit the variational formulation of the total variation (TV) distance in order to relate the local inner product structure to the global one:
\begin{equation} \label{eqn_change_inner_product_TV}
	|\langle f, g\rangle_{\alpha_i} - \langle f, g\rangle_{\alpha}| \leq 2\|f\|_\infty\|g\|_\infty TV(\alpha, \alpha_i).
\end{equation}

We now present \ffgdc,  a variant of \ffgd with additional clipping operations, in Algorithm \ref{algorithm_ffgdc} (the constants $G^1_\gamma$ and $G^2_\gamma$ are specified in Theorem \ref{thm_ffgdc}). Only the \textsc{Client} procedure is shown; the \textsc{Server} procedure is identical to that of \ffgd. 
\ffgdc\ relies on a different weak learning oracle $\mathrm{WO}_{\alpha_i}^\infty$ that satisfies an $\LM^\infty(\alpha_i)$ version of \eqref{eqn_weak_oracle_contraction}. Specifically, for any query $h\in\LM^\infty(\alpha_i)$, we assume that
\begin{equation} \label{eqn_weak_oracle_contraction_c}
	\|\mathrm{WO}_{\alpha_i}^\infty(h) -  h\|_{\alpha_i, \infty} \leq (1-\gamma)\|h\|_{\alpha_i, \infty},  
\end{equation}
for some positive constant $0< \gamma\leq 1$.

The clipping step in line \ref{eqn_zero_initialization_c} of the \textsc{Client} procedure actually implements the global projection to $\CM$ as $f^t$ is consensus among all machines.
Together with the clipping steps in lines \ref{eqn_residual_plus_gradient_c} and \ref{eqn_update_residual_c} in the \textsc{Client} procedure, these operations ensure that during the entire optimization procedure, the (local and global) variable functions are uniformly bounded\footnote{Note that such a boundedness property is non-trivial even when the weak oracle only returns bounded functions. This is because the $\LM^\infty$ norm of the variable function potentially diverges as $t \rightarrow \infty$ since standard choices of step sizes are not summable.
} in the whole domain $\XM$.
Therefore, the bound \eqref{eqn_change_inner_product_TV} can be exploited. Another merit of the clipping step is its non-expansiveness since it is exactly the projection operator onto the $\LM^\infty$ ball. This is important to our analysis.

We next analyze the convergence of \ffgdc in the fully-heterogeneous setting. 
We need a slightly strengthened version of Assumption \ref{ass_bounded_gradient}: 
\begin{assumption} \label{ass_bounded_gradient_c}
	The subgradients of $\RM_i[f]$ are $G$-bounded under the $\LM^\infty(\alpha_i)$ norm, i.e.
	\begin{equation*}
		\forall f\in \LM^\infty(\alpha_i), \|\nabla \RM_i[f]\|_{\alpha_i, \infty} \leq G.
	\end{equation*}
\end{assumption}
The following theorem states that \ffgdc\ converges to a neighborhood of the global minimizer, with a radius $r$ proportional to the average TV distance between $\alpha$ and $\alpha_i$, i.e. $r = O(\frac{1}{N}\sum_{i=1}^{N} \TV(\alpha, \alpha_i))$.
\begin{theorem}[Convergence result of \ffgdc]\label{thm_ffgdc}
	Let $f^0$ be the global initializer function. Let $\omega = \frac{1}{N}\sum_{i=1}^{N} \TV(\alpha, \alpha_i)$. Set $G_\gamma^1 = \frac{1-\gamma}{\gamma}\cdot G$ and $ G_\gamma^2 = \frac{2-\gamma}{\gamma}\cdot G$.
	Under Assumption \ref{ass_bounded_gradient_c}, and supposing the weak learning oracle $\mathrm{WO}_\alpha^\infty$ satisfies \eqref{eqn_weak_oracle_contraction_c} with constant $\gamma$, using the step sizes $\eta^{k, t} = \frac{4}{\mu(tK+k+1)}$, the output of \ffgdc satisfies
	\begin{equation*}
		\begin{aligned}
			\|\Clip{B}({f^T}) - &\ f^*\|_\alpha^2 = O\bigg({\frac{\|f^0 - f^*\|_\alpha^2}{KT}} + {\frac{KG^2 log(KT)}{T\mu^2\gamma^2}} \\
			&\ + \frac{(1-\gamma)^2G^2}{K\mu^2\gamma^2} + {\frac{GB\omega}{\mu\gamma}} + {\frac{G^2\log(TK)\omega}{\gamma^2\mu^2 T}}\bigg)
		\end{aligned}
	\end{equation*}
\end{theorem}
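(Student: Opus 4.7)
The plan is to follow the structure of the proof of Theorem~\ref{thm_ffgd}, while carefully paying the extra toll incurred whenever we must swap the local geometry $\langle\cdot,\cdot\rangle_{\alpha_i}$ for the global geometry $\langle\cdot,\cdot\rangle_\alpha$. Concretely, I would first derive a one-step descent recursion
\begin{equation*}
\|g_i^{k+1,t}-f^*\|_\alpha^2 \leq \|g_i^{k,t}-f^*\|_\alpha^2 - 2\eta^{k,t}\bigl\langle g_i^{k,t}-f^*,\,\Clip{G^2_\gamma}(h_i^{k,t})\bigr\rangle_\alpha + (\eta^{k,t})^2\bigl\|\Clip{G^2_\gamma}(h_i^{k,t})\bigr\|_\alpha^2,
\end{equation*}
average over the sampled clients (using non-expansiveness of the server averaging in $\LM^2(\alpha)$), and telescope over the $K$ local steps and $T$ global rounds, with weights $tK+k$ chosen to match the step size $\eta^{k,t}=4/(\mu(tK+k+1))$.

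The first preparatory step is to obtain uniform $\LM^\infty$ control on every iterate. The outer projection $\Clip{B}$ in line~\ref{eqn_zero_initialization_c} enforces $\|g_i^{1,t}\|_\infty\le B$; the clip in line~\ref{eqn_update_residual_c} enforces $\|\Delta_i^{k,t}\|_\infty\le G^1_\gamma=(1-\gamma)G/\gamma$; and combining Assumption~\ref{ass_bounded_gradient_c} with \eqref{eqn_weak_oracle_contraction_c} yields $\|h_i^{k,t}\|_{\alpha_i,\infty}\le(2-\gamma)G/\gamma=G^2_\gamma$, so that $\Clip{G^2_\gamma}$ is a no-op on $\supp(\alpha_i)$ while still bounding $h_i^{k,t}$ off-support for the aggregation step. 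The constants $G^1_\gamma, G^2_\gamma$ are calibrated so that these bookkeeping inequalities are tight. The same calculation shows the clip on $\Delta_i^{k,t}$ is a no-op on $\supp(\alpha_i)$, so the residual essentially telescopes the weak-oracle errors; in the spirit of the error-feedback analysis underlying Theorem~\ref{thm_ffgd}, its cumulative $\LM^2(\alpha_i)$-norm carries a geometric $(1-\gamma)/\gamma$ factor.

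With these ingredients in hand, the cross term in the recursion is decomposed by adding and subtracting $\nabla\FM_i[g_i^{k,t}]$, using the residual to cancel the weak-oracle approximation error, and invoking $\mu$-strong convexity of $\FM_i$ in $\LM^2(\alpha_i)$. Each passage between $\langle\cdot,\cdot\rangle_{\alpha_i}$ and $\langle\cdot,\cdot\rangle_\alpha$ via \eqref{eqn_change_inner_product_TV} costs an additive $O(BG\cdot\TV(\alpha,\alpha_i))$ per local step; summing these tolls under the weighted telescoping reproduces the neighborhood radius $O(GB\omega/(\mu\gamma))$, and a similar but lower-order bookkeeping on the squared-gradient term produces the $O(G^2\omega\log(KT)/(\gamma^2\mu^2 T))$ contribution. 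The non-heterogeneity terms $\|f^0-f^*\|_\alpha^2/(KT)$, $KG^2\log(KT)/(T\mu^2\gamma^2)$, and the feedback term $(1-\gamma)^2G^2/(K\mu^2\gamma^2)$ arise exactly as in Theorem~\ref{thm_ffgd}; applying non-expansiveness of $\Clip{B}$ at the end converts $\|f^T-f^*\|_\alpha^2$ into the quantity $\|\Clip{B}(f^T)-f^*\|_\alpha^2$ stated in the theorem, since $f^*\in\BM^\infty(B)$.

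The main obstacle I anticipate is the tension between two incompatible geometries: strong convexity is only available in each local $\LM^2(\alpha_i)$, while the quantity we must contract lives in $\LM^2(\alpha)$. The $\LM^\infty$ boundedness must be established tightly enough that the TV-distance bound \eqref{eqn_change_inner_product_TV} has the claimed $O(BG)$ prefactor, and the clipping in the residual update must be shown not to corrupt the error-feedback telescoping on $\supp(\alpha_i)$. The specific choices of $G^1_\gamma,G^2_\gamma$ and $\eta^{k,t}$ are exactly those that allow these two constraints to be met simultaneously, and it is only with this calibration that the heterogeneity terms $O(GB\omega/(\mu\gamma))$ and $O(G^2\omega\log(KT)/(\gamma^2\mu^2 T))$ vanish in the limit $\omega\to 0$, recovering the semi-heterogeneous rate of Theorem~\ref{thm_ffgd}.
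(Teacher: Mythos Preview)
Your plan matches the paper's proof in all its key ingredients: the $\LM^\infty$ bookkeeping showing the clips are no-ops on $\supp(\alpha_i)$, the TV toll \eqref{eqn_change_inner_product_TV} at each measure swap, the residual telescoping for the weak-oracle error, and the weighted summation tuned to $\eta^{k,t}=4/(\mu(tK+k+1))$. The one structural difference is that the paper tracks the \emph{averaged} potential $\|\bar g^{k,t}-f^*\|_\alpha^2$ with $\bar g^{k,t}=\frac{1}{N}\sum_i g_i^{k,t}$, rather than averaging a per-client recursion as you propose; because $\bar g^{k,t}$ is client-independent, the identity $\frac{1}{N}\sum_i\|\,\cdot\,\|_{\alpha_i}^2=\|\,\cdot\,\|_\alpha^2$ converts the strong-convexity term $-\frac{\mu}{2N}\sum_i\|g_i^{k,t}-f^*\|_{\alpha_i}^2$ into $-\frac{\mu}{4}\|\bar g^{k,t}-f^*\|_\alpha^2$ (plus a client-drift error) without an additional TV toll, whereas your per-client route would need one more measure swap there. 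One technical point you should also be aware of: the paper actually proves the theorem for a corrected update $g_i^{k+1,t}=g_i^{k,t}-\eta^{k,t}\bigl(\Clip{G^2_\gamma}(h_i^{k,t})+\mu g_i^{k,t}\bigr)$ (the $\mu g_i^{k,t}$ term is missing by typo in Algorithm~\ref{algorithm_ffgdc}), and under the extra hypothesis $B\ge 2G/(\mu\gamma)$ this Tikhonov term is precisely what furnishes both the uniform $\LM^\infty$ bound on the iterates and the clean cancellation $\hat h_i^{k,t}-\nabla\FM_i[g_i^{k,t}]=h_i^{k,t}-\nabla\RM_i[g_i^{k,t}]$ that your add-and-subtract step relies on.
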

As previously discussed, the key to prove the above theorem is to ensure that the variable function remains bounded during the entire optimization procedure. Moreover, the choice of the constants $ G_\gamma^1$ and $ G_\gamma^2$ ensures that the clipping operations do not affect the value of $\Delta_i^{k, t}$ and $h_i^{k, t}$ on the support of $\alpha_i$.
The existence of these two constants is due to the stronger weak learner oracle \eqref{eqn_weak_oracle_contraction_c}: while the standard $\LM^2(\alpha)$ oracle \eqref{eqn_weak_oracle_contraction} ensures the residual is reduced in average, it may still have large spiky values on $\supp(\alpha_i)$.

In contrast to the semi-heterogeneous setting, even in the ideal case when $\gamma = 1$, \ffgdc does not converge to the global minimum due to the forth term in Theorem \ref{thm_ffgdc}. This is because the fully-heterogeneous setting is fundamentally harder: local strong convexity due to Tikhonov regularization does not imply global strong convexity in the fully-heterogeneous setting which hinders the convergence to the global minimum of \ffgdc.


\subsection{Federated Functional Least Squares Minimization} \label{section_fflsm}
\begin{algorithm}[t]
	\caption{\textsc{Client} procedure for Federated Functional Gradient Descent for $\ell_2$ Regression among Lipschitz Continuous Functions (\ffgdl)}
	\begin{algorithmic}[1]
		\Procedure {Client}{$i$, $t$, $f$}
		\State $\Delta_{i}^0 = 0$, $g_i^{1, t} = f^t$ \label{eqn_zero_initialization_L};
		\For{$k\leftarrow 1$ to $K$}
		\State $h_{i}^{k}\ := \mathrm{WO}^\lip_{\alpha_i}(\Delta_{i}^{k-1} - u_i)$ \label{eqn_residual_plus_gradient_L};
		\State $g_{i}^{k+1, t} := g_{i}^{k, t} - {\eta^{k, t}} (g_{i}^{k, t} - h_{i}^{k})$ \label{eqn_update_variable_L};
		\State $\Delta_{i}^{k} := \Delta_{i}^{k-1} - u_i + h_{i}^{k}$; \label{eqn_update_residual_L}
		\EndFor
		\State
		\Return $g_i^{K, t}$.
		\EndProcedure
	\end{algorithmic}
	\label{algorithm_ffgd_L}
\end{algorithm}
In this section, we show that when the loss $\ell$ is the square loss $\ell(y', y) = \frac{1}{2}\|y' - y\|^2$ for $y, y'\in\YM\subseteq\RBB^c$, we can improve the bound on radius of convergence $r$ by replacing the TV distance with the Wasserstein-1 distance, if the feasible domain is further restricted to the family of Lipschitz continuous functions. 
Note that even in the semi-heterogeneous case, the square loss requires special treatment as it does not satisfy Assumption \ref{ass_bounded_gradient}.

For simplicity, we assume that the output domain $\YM\subseteq\RBB$, the extension to $\RBB^c$ is analogous. For some parameter $L > 0$ and a constant $\gamma$ (defined in \eqref{eqn_weak_oracle_contraction_l2}), we consider the federated functional least squares minimization problem
\begin{equation} \label{eqn_fed_functional_minimization_L}
	\min_{\|f\|_\lip\leq L/\gamma} \FM[f] = \frac{1}{N}\sum_{i=1}^{N}\EBB_{(x,y)\sim P_i}[\tfrac{1}{2}(f(x) - y)^2].
\end{equation}
The domain of optimization is the set $\CM = \{f:\ \|f\|_\lip\leq L/\gamma\}$. This feasible domain allows us to to exploit the variational formulation of the Wasserstein-1 ($\mathrm{W}_1$) distance in order to relate the local inner product structure to the global one: Denote $\xi = \|f\|_{\lip}\|g\|_{\alpha, \infty} + \|g\|_{\lip}\|f\|_{\alpha, \infty}$,
\begin{equation} \label{eqn_W1_variational_formulation}
	|\langle f, g\rangle_{\alpha_i} - \langle f, g\rangle_{\alpha}| \leq \xi \mathrm{W}_1(\alpha, \alpha_i).
\end{equation}

We assume in this section that $P_i$ is the empirical measure on a finite set of client data $\{(x_{i,j}, y_{i,j}):\ j \in [M]\}$. We also assume that the data satisfies the following Lipschitzness property: for any $j, j' \in [M]$, we have 
\begin{equation} \label{eqn_lipschitz_data}
|y_{i,j} - y_{i,j'}| \leq L\|x_{i,j} - x_{i,j'}\|.
\end{equation}
This assumption implicitly suggests that the labels are generated from the inputs via an $L$-Lipschitz function with no additive noise. The assumption implies that it is possible to construct an $L$-Lipschitz function $u_i: \RBB^d \to \RBB$ that interpolates the data $\{(x_{i,j}, y_{i,j}):\ j \in [M]\}$: specifically, this Lipschitz extension is defined as
\begin{equation} \label{eqn_lipschitz_extension}
	u_i(x) \defi \min_{j \in [M]}\left(y_{i,j} + L\|x-x_{i,j}\| \right).
\end{equation}
It is easy to check that $u_i$ is an $L$-Lipschitz function such that $u_i(x_{i,j}) = y_{i,j}$ for all $j \in [M]$. If the output domain $\YM$ is high dimensional, i.e. $\YM \subseteq \RBB^c$, then the construction of $u_i$ follows from Kirszbaum's Lipschitz extension theorem~\citep{schwartz1969nonlinear}. The function $u_i$ is used in the \textsc{Client} procedure in Algorithm \ref{algorithm_ffgd_L} which describes a variant of \ffgd named \ffgdl. It is designed to solve the federated least squares minimization problem over the Lipschitz continuous functions. The \textsc{Server} procedure for \ffgdl is identical to that of \ffgd.
Note that while $\CM$ is not the whole $\LM^2(\alpha)$ space, \ffgdl ensures the feasibility of $f^T$ and hence the projection step in the \textsc{Server} procedure simply returns $f^T$.


\ffgdl differs from \ffgd as it exploits the following observation: for problem \eqref{eqn_fed_functional_minimization_L}, the functional subgradient at $g$ can be computed is any function $h$ satisfying
\begin{equation} \label{eqn_subgradient_l2_regression}
	\forall j\in[M],\ h(x_j) = g(x_{i, j}) - y_{i,j}.
\end{equation}
In particular, the function $h = g - u_i$ is one such subgradient.
Thus, given a function $g$, to approximate $g - u_i$, it suffices to approximate $u_i$. In \ffgdl, we use the weak learning oracle to approximate $u_i$ in \eqref{eqn_subgradient_l2_regression} instead of approximating the whole subgradient like \ffgd and \ffgdc. Note that \ffgdl uses a higher order weak learner oracle $\mathrm{WO}_{\alpha_i}^l$ which will be described momentarily. 
The function $h_i^{k}$ represents the approximation to $u_i$ after $k$ steps in the \textsc{Client} procedure. We use $(g^{k, t}_i - h_i^{k})$ as descent direction (line \eqref{eqn_update_variable_L}) and the residual variable $\Delta_i^k$ accumulates the error incurred from approximating $u_i$ (lines \eqref{eqn_residual_plus_gradient_L} and \eqref{eqn_update_residual_L}). We emphasize that the subgradient \eqref{eqn_subgradient_l2_regression} cannot be directly used to update the local variable since $u_i$ involves the data points.

Now we analyze the convergence of \ffgdl. 
We first clarify the weak oracle $\mathrm{WO}_{\alpha}^\lip$ required by \ffgdl: on any query $\phi\in\LM^2(\alpha)$ with $\|\phi\|_\lip<\infty$, $\mathrm{WO}_{\alpha}^\lip$ outputs a function $h$ such that $\|h\|_\lip\leq\|\phi\|_\lip$ and the following two conditions hold simultaneously
\begin{align}
	\|h -  \phi\|_{\alpha, \infty} \leq&\ (1-\gamma)\|\phi\|_{\alpha, \infty},  \label{eqn_weak_oracle_contraction_l1}\\
	\|h -  \phi\|_\lip \leq&\ (1-\gamma)\|\phi\|_\lip,  \label{eqn_weak_oracle_contraction_l2}
\end{align}
for some positive constants $0< \gamma\leq 1$.
We can implement this oracle using the Sobolev training \citep{czarnecki2017sobolev}.
{This is further discussed in the appendix.}

Our analysis needs the following boundedness assumptions:
\begin{assumption} \label{ass_bounded_value_h}
	For some parameter $B > 0$, all labels are $B$-bounded: i.e. $\forall i\in[N], j\in[M], -B\leq y_{i,j} \leq B$.\\
	For every pair of measures $\alpha_i$ and $\alpha_{i'}$,  $\forall x_{i,j}\in\supp(\alpha_i)$, $\exists x_{i',j'}\in\supp(\alpha_{i'})$ such that $\|x_{i, j} - x_{i',j'}\|\leq D$.
\end{assumption}

We now present the convergence result of \ffgdl.
\begin{theorem}[Convergence result of \ffgdl]\label{thm_ffgdl}
	Let $f^0$ be the global initializer. Let $G^2 = \frac{2L^2}{N^2}\sum_{i, s=1}^{N} \mathrm{W}_2^2(\alpha_s, \alpha_i) + 2B^2$ and  $\omega = \frac{1}{N}\sum_{i=1}^{N}\mathrm{W}_1(\alpha, \alpha_i)$.
	Consider the federated functional least square minimization problem \eqref{eqn_fed_functional_minimization_L}.
	Under Assumption \ref{ass_bounded_value_h}, and supposing the weak learning oracle $\mathrm{WO}_\alpha^{\lip}$ satisfies \eqref{eqn_weak_oracle_contraction_l1} and \eqref{eqn_weak_oracle_contraction_l2} with constant $\gamma$, using the step sizes $\eta^{k,t} = \frac{4}{\mu(tK+k+1)}$, the output of \ffgdl satisfies
	\begin{equation*}
				\begin{aligned}
		\|f^T - f^*\|_\alpha^2 = O\Bigg({\frac{\|f^0 - f^*\|^2}{KT}} + {\frac{KG^2 log(KT)}{T\mu^2\gamma^2}} \\+ {\frac{(1-\gamma)^2B^2}{\mu^2\gamma^2 K}}+ \frac{(DL^2+ BL)\omega}{\gamma^2\mu}\Bigg).
				\end{aligned}
	\end{equation*}
\end{theorem}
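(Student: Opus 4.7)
The overall plan is to adapt the analyses behind Theorems~\ref{thm_ffgd} and~\ref{thm_ffgdc} to the least-squares setting, exploiting two structural simplifications: the functional subgradient of $\mathcal{F}_i$ at $g$ equals $g - u_i$ on $\supp(\alpha_i)$, and the square loss is strongly convex in $\|\cdot\|_{\alpha_i}$. Before setting up the contraction, I first establish uniform boundedness of all iterates in both Lipschitz and $L^\infty$ norms. Since $u_i$ is $L$-Lipschitz with $|u_i|\leq B$ by Assumption~\ref{ass_bounded_value_h}, and since the weak learning oracle $\mathrm{WO}_{\alpha_i}^\lip$ is assumed to not increase the Lipschitz constant of its argument and to contract the residual in both $L^\infty$ and Lipschitz norm via~\eqref{eqn_weak_oracle_contraction_l1}--\eqref{eqn_weak_oracle_contraction_l2}, an induction over $k$ shows $\|h_i^k\|_\lip \leq L/\gamma$, $\|\Delta_i^k\|_\lip = O((1-\gamma)L/\gamma)$, and analogous $L^\infty$ bounds of order $B/\gamma$. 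Because $g_i^{k+1,t}$ is a convex combination of $g_i^{k,t}$ and $h_i^k$, it inherits the Lipschitz bound $L/\gamma$, which establishes feasibility of $f^T$ in $\mathcal{C}$ and, crucially, keeps the Wasserstein variational bound applicable throughout the trajectory.

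Step two is a per-client one-step contraction. Using the update $g_i^{k+1,t} = g_i^{k,t} - \eta^{k,t}(g_i^{k,t} - h_i^k)$ and strong convexity of $\mathcal{F}_i$ in $\|\cdot\|_{\alpha_i}$, I bound
\begin{equation*}
\|g_i^{k+1,t} - f^*\|_{\alpha_i}^2 \leq (1-\eta^{k,t})\|g_i^{k,t}-f^*\|_{\alpha_i}^2 \;-\; 2\eta^{k,t}\langle g_i^{k,t}-f^*,\, u_i - h_i^k\rangle_{\alpha_i} \;+\; (\eta^{k,t})^2 G^2,
\end{equation*}
where the squared-norm term $\|g_i^{k,t}-h_i^k\|_{\alpha_i}^2$ is controlled by $G^2$ (the $W_2^2$ piece accounts for how far $u_s$ can be from $u_i$ in $\|\cdot\|_{\alpha_i}$, and $2B^2$ bounds $\|u_i\|_{\alpha_i}^2$). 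The residual design delivers the key cancellation: the update $\Delta_i^k = \Delta_i^{k-1} - u_i + h_i^k$ rewrites $u_i - h_i^k = \Delta_i^{k-1} - \Delta_i^k$, so summing over $k=1,\dots,K$ collapses the inner-product error to $\Delta_i^0 - \Delta_i^K$, whose $L^2$ norm is $O((1-\gamma)B/\gamma)$ by Step~1. Divided by $K$ through the averaging step, this produces the $(1-\gamma)^2 B^2/(\mu^2\gamma^2 K)$ term in the bound.

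The third ingredient, which is the new feature of \ffgdl, is to transfer the per-client contraction from $\|\cdot\|_{\alpha_i}$ to $\|\cdot\|_\alpha$. I apply the $W_1$ variational identity~\eqref{eqn_W1_variational_formulation} to both the linear inner product $\langle g_i^{k,t}-f^*, u_i\rangle_{\alpha_i}$ and to the quadratic term $\|g_i^{k,t}-f^*\|_{\alpha_i}^2$ (splitting the latter as $\langle g_i^{k,t}-f^*, g_i^{k,t}-f^*\rangle_{\alpha_i}$). Thanks to Step~1, both factors are Lipschitz with constants $O(L/\gamma)$ and $L^\infty$-bounded by $O(B)$; Assumption~\ref{ass_bounded_value_h} (the $D$-covering condition) is used to relate the $L^\infty$ bounds across different supports, yielding a per-step drift of $O((DL^2 + BL)W_1(\alpha,\alpha_i))$. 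Averaging over $i$ produces the $(DL^2+BL)\omega/(\gamma^2\mu)$ term. Finally, Step~4 averages the per-client recursions, uses $f^{t+1} = \tfrac{1}{N}\sum_i g_i^{K+1,t}$ together with Jensen, and telescopes over $k=1,\dots,K$ and $t=0,\dots,T-1$ with $\eta^{k,t} = 4/(\mu(tK+k+1))$; standard harmonic-sum estimates then yield the $\|f^0-f^*\|^2/(KT)$ and $KG^2\log(KT)/(T\mu^2\gamma^2)$ terms.

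The main obstacle I expect is the coupling between Steps~1 and~3: the Lipschitz and $L^\infty$ bounds on $g_i^{k,t}$, $h_i^k$, and $\Delta_i^k$ must be maintained \emph{uniformly} in $(k,t)$ without any explicit clipping (unlike \ffgdc), relying entirely on the Sobolev-type contraction properties~\eqref{eqn_weak_oracle_contraction_l1}--\eqref{eqn_weak_oracle_contraction_l2} of $\mathrm{WO}_{\alpha_i}^\lip$ and on the fact that $u_i$ itself is Lipschitz. If any of these bounds blew up with $t$, the $W_1$ transfer in Step~3 would produce a growing, rather than constant, drift, destroying the telescoping. A secondary technical point is that strong convexity holds with respect to a \emph{different} inner product on each client, so the transfer to $\|\cdot\|_\alpha$ via~\eqref{eqn_W1_variational_formulation} must be performed before averaging, and one has to verify that the Lipschitz control of $g_i^{k,t} - f^*$ — which requires $f^*$ itself to be $O(L/\gamma)$-Lipschitz — is consistent with the constraint set $\mathcal{C}$ used to define $f^*$.
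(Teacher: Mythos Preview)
Your Step~2 contains a genuine gap that invalidates the per-client approach. The claimed contraction
\[
\|g_i^{k+1,t} - f^*\|_{\alpha_i}^2 \leq (1-\eta^{k,t})\|g_i^{k,t}-f^*\|_{\alpha_i}^2 - 2\eta^{k,t}\langle g_i^{k,t}-f^*,\, u_i - h_i^k\rangle_{\alpha_i} + (\eta^{k,t})^2 G^2
\]
is false. Expanding the square and using $g_i^{k,t}-h_i^k = (g_i^{k,t}-u_i) + (u_i - h_i^k)$, strong convexity of $\FM_i$ gives
\[
-2\eta^{k,t}\langle g_i^{k,t}-f^*, g_i^{k,t}-u_i\rangle_{\alpha_i} \leq -\eta^{k,t}\|g_i^{k,t}-f^*\|_{\alpha_i}^2 - 2\eta^{k,t}\big(\FM_i[g_i^{k,t}] - \FM_i[f^*]\big),
\]
and you have silently dropped $\FM_i[g_i^{k,t}] - \FM_i[f^*]$. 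But $f^*$ is \emph{not} the minimizer of $\FM_i$; it minimizes only the average $\FM = \tfrac{1}{N}\sum_i \FM_i$. With a perfect oracle ($h_i^k = u_i$) the local iterate drifts toward $u_i$, not $f^*$, so $\|g_i^{k,t}-f^*\|_{\alpha_i}^2$ need not contract at all. Even after averaging over $i$, $\tfrac{1}{N}\sum_i(\FM_i[g_i^{k,t}]-\FM_i[f^*])$ cannot be signed because the $g_i^{k,t}$ differ across clients; the optimality of $f^*$ gives only $\tfrac{1}{N}\sum_i\FM_i[\bar g^{k,t}] \geq \FM[f^*]$ for a \emph{common} argument.

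This is precisely why the paper does \emph{not} run a per-client recursion. It tracks the global average $\bar g^{k,t} = \tfrac{1}{N}\sum_i g_i^{k,t}$ in $\|\cdot\|_\alpha$ from the start, expands $\|\bar g^{k+1,t}-f^*\|_\alpha^2$, and splits the cross term into four pieces: a local-to-global inner-product transfer (handled by~\eqref{eqn_W1_variational_formulation}, producing the $\omega$ term), a term using strong convexity of $\FM_i$ at $g_i^{k,t}$ followed by optimality of $f^*$ at $\bar g^{k,t}$, a local-global drift $\|g_i^{k,t}-\bar g^{k,t}\|_{\alpha_i}^2$ (bounded by $O((\eta^{k,t})^2 K^2 G^2/\gamma^2)$, which is the actual source of the $KG^2\log(KT)/(T\mu^2\gamma^2)$ term), and the residual telescoping. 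Your reading of $G^2$ is also off: the $W_2^2$ contribution does not come from ``how far $u_s$ can be from $u_i$'' in a per-client bound; it arises because the paper must bound $\|\bar g^{k,t}\|_{\alpha_s}$ and $\|\bar h^k\|_{\alpha_s}$, i.e.\ the \emph{average} of all clients' functions measured in a \emph{single} client's norm, and transporting $g_i^{k,t}$ from $\supp(\alpha_i)$ to $\supp(\alpha_s)$ via its Lipschitz constant is what picks up $L^2 W_2^2(\alpha_s,\alpha_i)$. Your Step~1 and the residual-telescoping portion of Step~2 are correct and match the paper, but you will need to rebuild the recursion around $\bar g^{k,t}$ and introduce the drift term $\|g_i^{k,t}-\bar g^{k,t}\|_{\alpha_i}$ to close the argument.
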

The key component in the analysis of \ffgdl\ is to ensure that the variable function remains Lipschitz continuous along the entire optimization trajectory.
However, unlike \ffgdc\ where the boundedness of the variable is a direct consequence of the clipping operation, maintaining the Lipschitz continuity of the variable in \ffgdl is more subtle and heavily relies on the structure of the update rule in line \eqref{eqn_update_variable_L}. We elaborate this in the appendix.

\begin{remark}
	While we present \ffgdl in a structure similar to \fedavg, in fact, it can be implemented using only one round of communication: Lines \ref{eqn_residual_plus_gradient_L} and \ref{eqn_update_residual_L} are independent of $t$. Therefore, we can compute $\{h_i^k\}_{k=1}^K$ once and send them to the server, where the update and the aggregation of the function ensembles are actually carried out.
\end{remark}



\section{Experiments} \label{section_experiment}
We conduct experiments on the two  datasets CIFAR10 and MNIST and compare with \fedavg as baseline to showcase the advantage of our functional minimization formulation and the proposed algorithms.
We will  investigate the effect of the number of local steps, number of communication rounds, communication cost, as well as effect of the residual technique used in our algorithms.

\subsection{Results on CIFAR10}
\paragraph{Accuracy vs Communication Cost.}
To make a fair comparison between \ffgd\ (which is a non-parametric method) and \fedavg\ (which is used for  parametric models), we use the number of models exchanged (or equivalently, the number of parameters exchanged) to measure the communication cost.
Recall that $N$ and $K$ are the number of local workers and local steps, respectively.
The per-iteration cost of \ffgd is $NK$: A client needs to upload its local increment which is an ensemble of $K$ models and it has to download $(N-1)K$ models from the server.
For \fedavg, this quantity is $2$, i.e. the client uploads its own model and downloads the shared model from the server. Note that the models have the same number of parameters. 
\begin{figure}[t]
	\begin{tabular}{c@{}c}
		\includegraphics[width=.49\columnwidth]{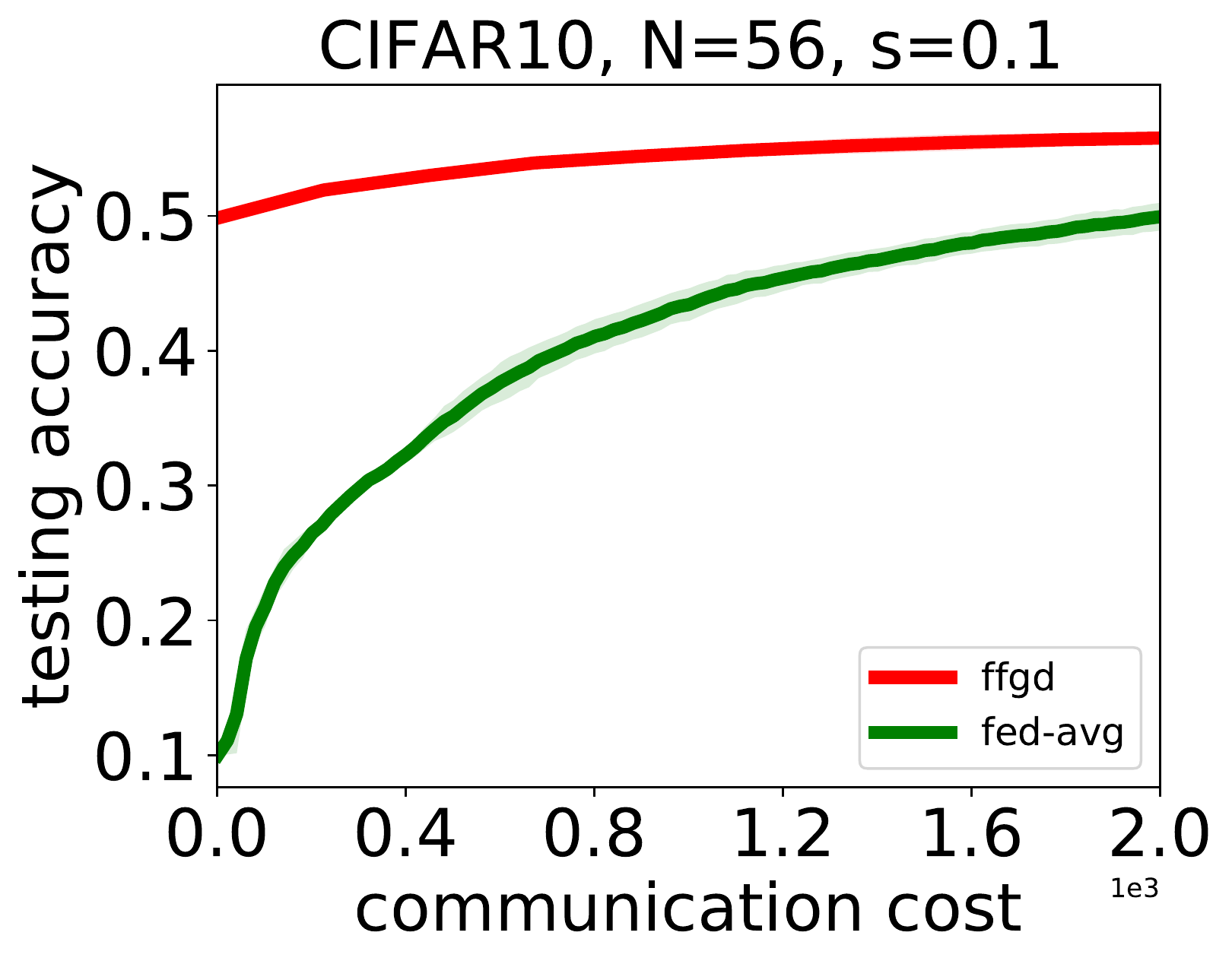} &
		\includegraphics[width=.49\columnwidth]{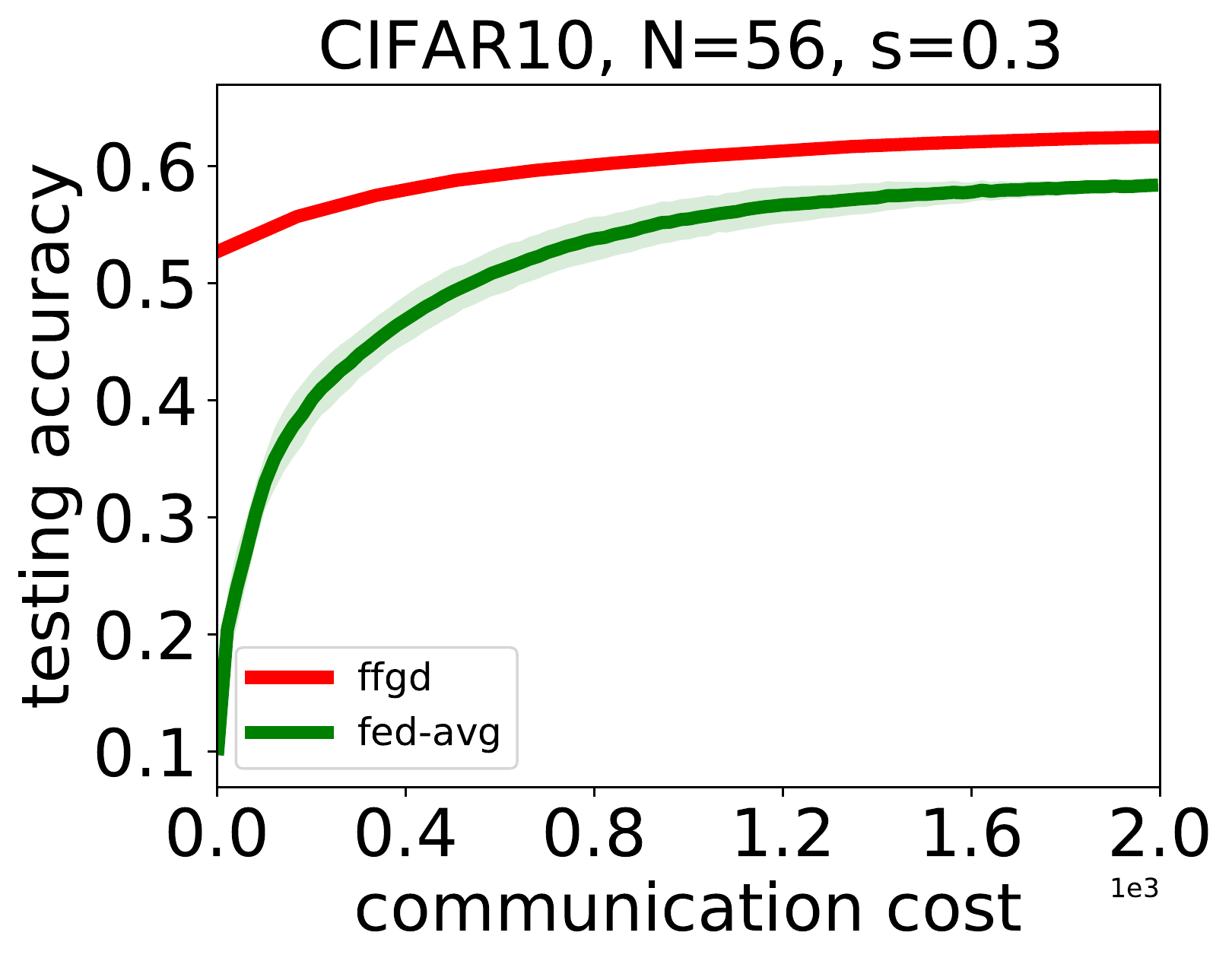}
	\end{tabular}
	\caption{Testing accuracy on CIFAR10.}
	\label{fig_obs1_cifar10}
\end{figure}
\paragraph{Setup.}
In our experiments, we consider the multiclass image classification problem  using the cross entropy loss given in \eqref{eqn_loss_cross_entropy}.
The class of weak learners is a CNN similar {to the one suggested by the PyTorch tutorial for CIFAR10}.
The heterogeneity across local datasets is controlled by dividing the dataset among $N=56$ clients in the following manner: we randomly select a portion $s\times100\%$ ($s\in[0, 1]$) of the data from the dataset and allocate them equally to all clients; for the remaining $(1-s)\times100\%$ portion of the data, we sort the data points by their labels and assign them to the workers sequentially.
This is a same scheme as employed in \citep{karimireddy2020scaffold, hsu2020federated} to enforce heterogeneity.
In \fedavg, each worker takes $10$ local steps and the step sizes are set to constants $5\times10^{-4}$ (a larger step size leads to divergence).
\begin{figure}[t]
	\begin{tabular}{c c}
		\includegraphics[width=.45\columnwidth]{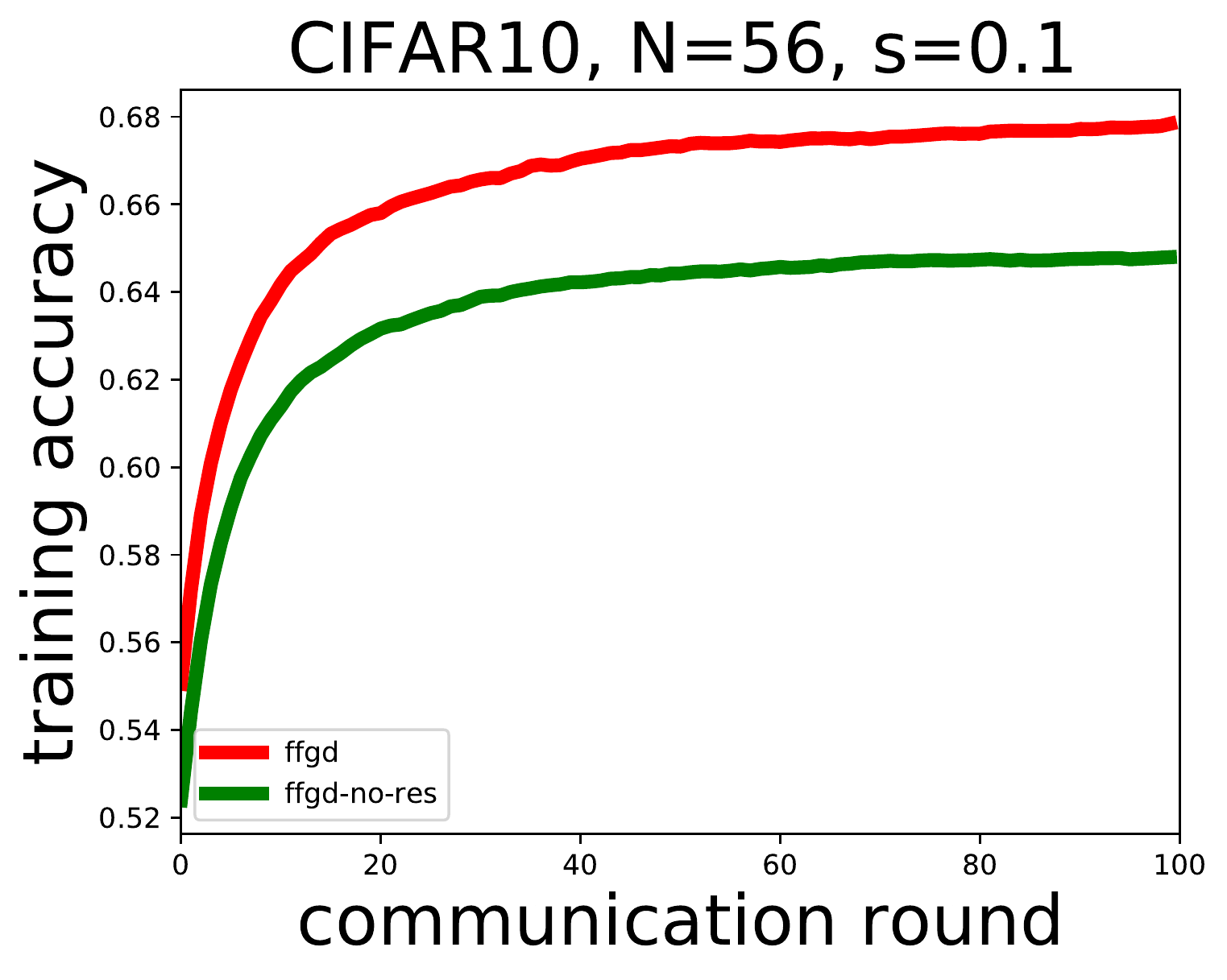} &
		\includegraphics[width=.45\columnwidth]{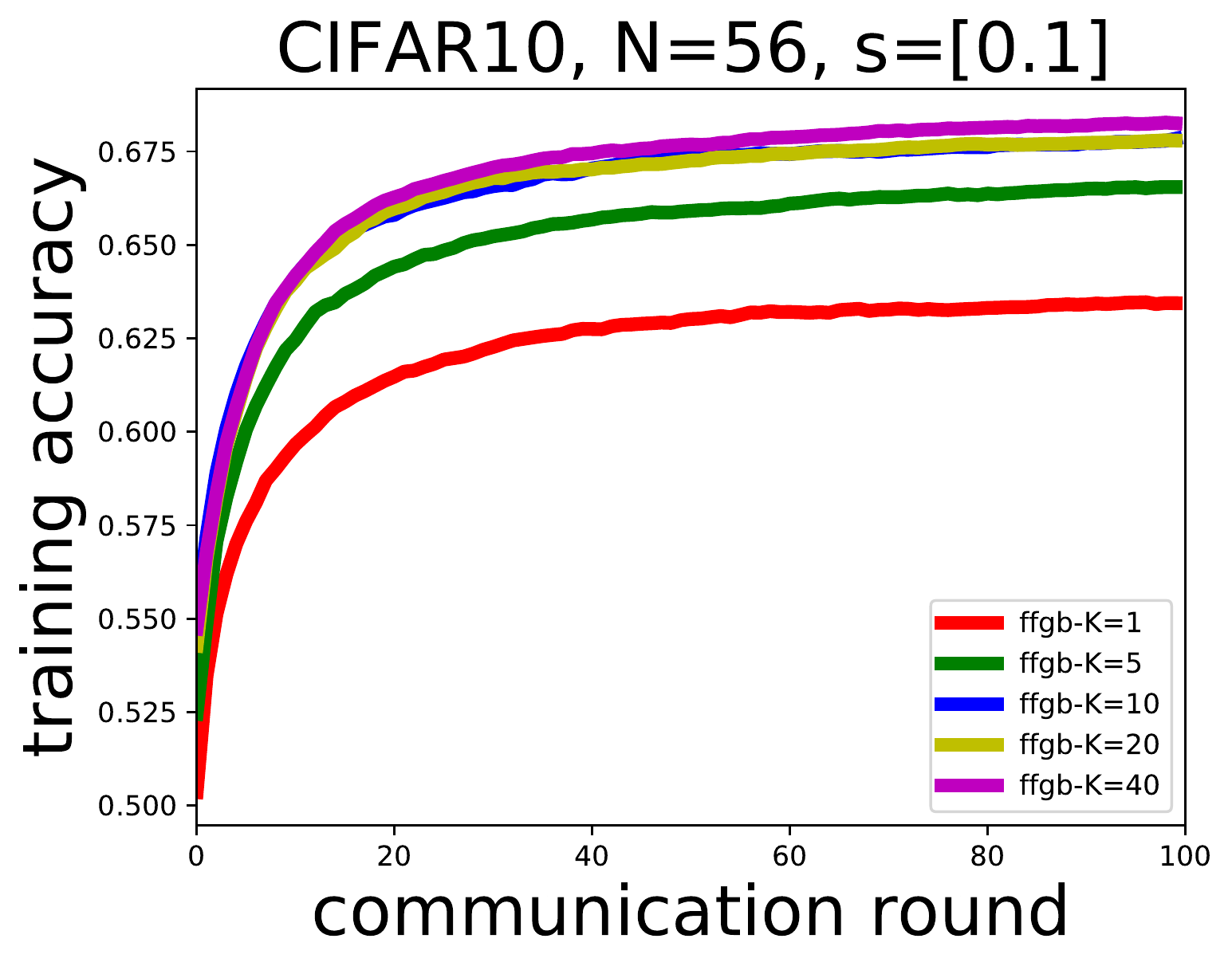}  \\
		(a) & (b)
	\end{tabular}
	\caption{(a) Comparison of \ffgd\ with a variant without the residual step; (b)  Effect of the number of local steps in FFGB.}
	\label{fig_obs2_cifar10}
\end{figure}
\paragraph{Comparing with \fedavg.}
We present the results corresponding to CIFAR10 in Figure \ref{fig_obs1_cifar10}, where each figure corresponds to a different heterogeneity setting. We consider  two values for $s$, i.e. $s=0.1,0.3$.
For \ffgd, the number of local steps $K$ is $4, 3$ for $s=0.1, 0.3$, respectively.
A larger $K$ leads to faster convergence rate according to our theory, however it also leads to higher communication complexity since the per-iteration communication cost of \ffgd\ is $NK$.
The step size follows the scheme $\eta^{k, t} = \frac{\eta_0}{K t+k+1}$, where $\eta_0$ is set to $10$, $20 $ respectively for $s=0.1, 0.3$.
In this experiment, we limit the total communication cost to be $2000$ for \ffgd\ and \fedavg\ (this corresponds to $1000$ rounds for \fedavg\ and $2000/N/K$ rounds for \ffgd).
As we observe from Figure~\ref{fig_obs1_cifar10},  even with the current  implementation of \ffgd\ which is quite communication-inefficient (i.e. it has a much high communication cost per round compared to \fedavg\ ), \ffgd\ achieves a higher accuracy than \fedavg. 
We acknowledge that when $N$ grows, a direct implementation of \ffgd is non-ideal due to the high per-iteration communication complexity.
We will  discuss in Section~\ref{section_distillation} ways to improve the communication cost of  \ffgd\  using knowledge distillation methods.

\paragraph{Residuals are necessary.}
We compare \ffgd with its no residual variant to show that the augmented residual is necessary to ensure a fast convergence.
For both methods, we set the local steps $K$ to $10$ and set the initial step size $\eta_0$ to $10$ for $s=0.1$.
We present the result in Figure \ref{fig_obs2_cifar10}(a) where we observe that \ffgd\ consistently achieves a higher training accuracy over the no-residual counterpart.
\paragraph{Local steps are necessary.}
To show  the necessity of the local steps, we vary the number of local steps $K\in\{1, 5, 10, 20, 40\}$ in \ffgd\ and compare the 
in Figure \ref{fig_obs2_cifar10}(b). We observe that a larger $K$ leads to higher training accuracy.

\subsection{Results on MNIST}
In Figure \ref{fig_obs1_mnist} we report the testing accuracy of the two algorithms  \ffgd\ and \fedavg in terms of  the number of \emph{communication rounds} for the MNIST data set.
As we observe from the figure,   \ffgd\  performs significantly superior w.r.t.  \fedavg under the same number of communication rounds. This shows how powerful functional minimization methods can be.  Recall that the per-round communication complexity of  
\ffgd\ is higher than \fedavg. We will provide the comparison w.r.t the communication cost in the appendix (with similar results as the previous section).

\begin{figure}[t]
	\begin{tabular}{cc}
		\includegraphics[width=.45\columnwidth]{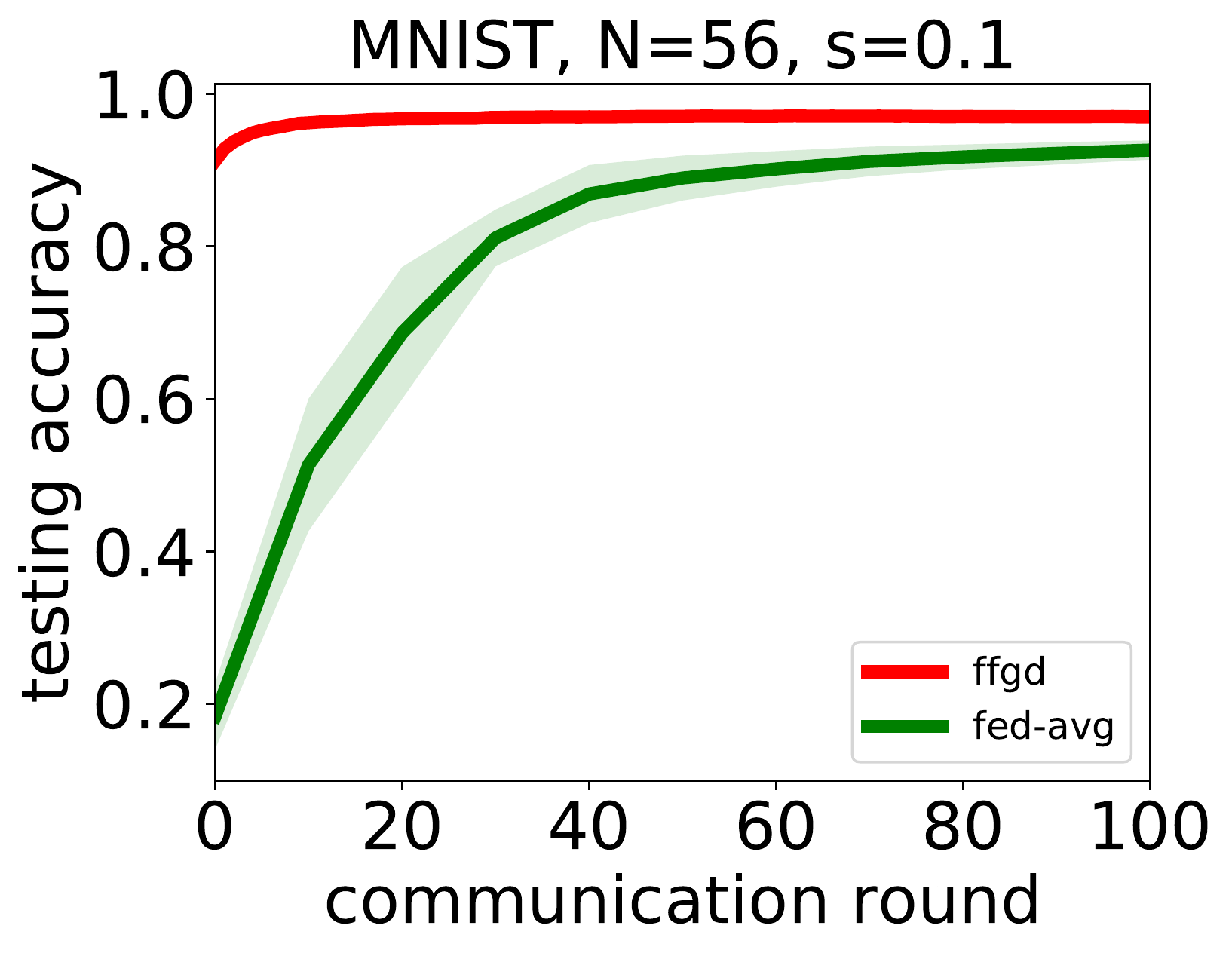} &
		\includegraphics[width=.45\columnwidth]{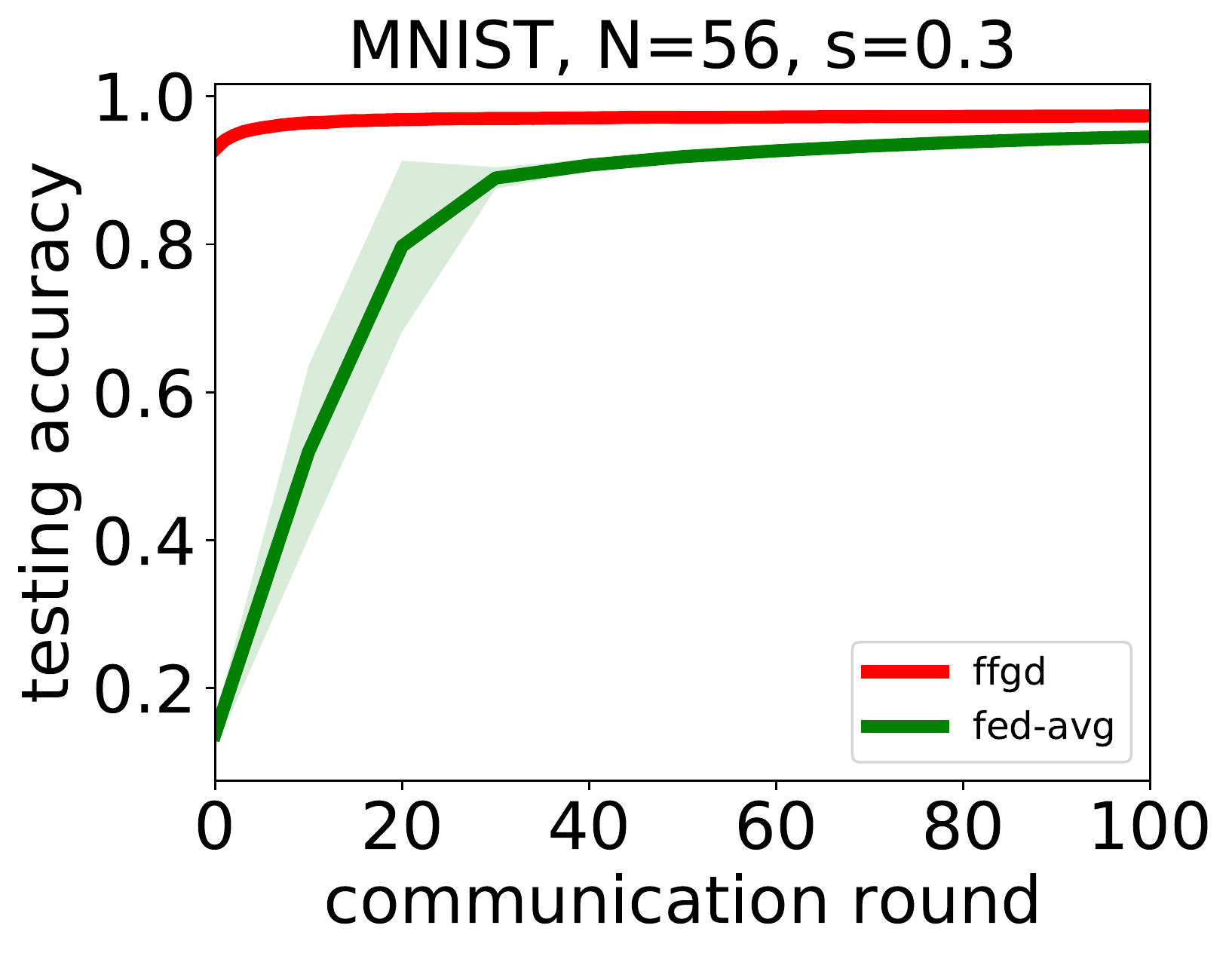}
	\end{tabular}
	\caption{Testing accuracy vs the number of communication rounds on MNIST.}
	\label{fig_obs1_mnist}
\end{figure}


\section{Towards a practical implementation} \label{section_distillation}


A direct implementation of the algorithms in this paper has one significant drawback in practice: it requires aggregation of a very large number of models. In each of the $T$ communication rounds, each of the $N$ clients computes $K$ functions that are communicated to the server, which aggregates all of them. Thus, the final ensemble has $TNK$ models, which is impracticably large. 

The focus of this paper is on developing the theory of federated functional minimization algorithms and the development of practical versions of our algorithms is a very important direction of future work. However, we now indicate how the technique of {\em knowledge distillation} \citep{hinton-distillation,caruana} can be used to reduce the final ensemble to a more reasonable size. Knowledge distillation converts an ensemble of models into a single model by training a new model to mimic the ensemble's predictions. This training procedure only needs access to {\em unlabelled} examples drawn from the appropriate marginal distribution. \citet{hinton-distillation} have shown this technique to be very effective (i.e. incurring minimal loss in performance) in practice when we have access to a large number of unlabelled examples. 

The implication for our algorithms is that since clients have access to their local distributions $P_i$, they can distill the ensemble of $K$ models they compute in each round to a single model sent to the server. The server simply averages the $N$ models it receives from the clients, so the final ensemble size is reduced to $N$. This can be further reduced to a \emph{single} model if we make the reasonable assumption that the server has access to the mean distribution $\alpha = \frac{1}{N}\sum_{i=1}^N \alpha_i$ on the feature vectors, because then the server can distill the aggregation of the $N$ models it receives from the clients down to a single model. Even if the server doesn't have access to $\alpha$, it is still possible to reduce the ensemble size down to a single model by interleaving distillation rounds between boosting rounds. Specifically, after every boosting round, the server executes a distillation round via \fedavg. Distillation via \fedavg is possible the clients simply need to compute the gradients of the loss incurred by the single model under training on their own local data.

\bibliographystyle{plainnat}
\bibliography{example_paper}  

\newpage
\appendix
\section{Total Variation Distance and Wasserstein-1 Distance between Probability Measures} \label{section_TV_wasserstein}
Given two probability distributions $\alpha, \beta\in\MM_+^1(\XM)$, the total variation distance between $\alpha$ and $\beta$ is
\begin{equation}
	\TV(\alpha, \beta) = \sup_{A \in \FM} |\alpha(A) - \beta(A)|,
\end{equation}
where $\FM$ is the Borel sigma algebra over $\XM$.

The p-Wasserstein metric between $\alpha$ and $\beta$ is defined as
\begin{equation} \label{eqn_Wasserstein_distance}
	W_p(\alpha, \beta) \defi \min_{\pi\in\Pi} \bigg(\int_{\XM^2} \|x - y\|^p d \pi(x, y) \bigg)^{1/p} ,
\end{equation}
where $\Pi(\alpha, \beta) \defi \{\pi\in\MM_+^1(\XM\times\XM) | \sharp_1\pi = \alpha, \sharp_2\pi = \beta\}$ is the set of joint distributions with given marginal distributions $\alpha$ and $\beta$.
Here $\sharp_i$ denotes the marginalization.

\section{Experiment} \label{appendix_experiment}
The code can be found in the anonymous repo \url{https://anonymous.4open.science/r/279e682d-09e3-43cb-ba2d-f77eeac0d53e/}.
\subsection{Structure of the Weak Learner}
For MNIST, the weak learner is a multilayer perceptron with two hidden layers with size 32 and 32. The activation function is leaky relu with negative slop being $0.01$.

For CIFAR10, the weak learner is a CNN with the same structure as the pytorch tutorial \url{https://pytorch.org/tutorials/beginner/blitz/cifar10_tutorial.html}.
The only difference is that for the fully connected layers, the hidden sizes are changed from 120 and 84 to 32 and 32.

\subsection{MNIST Result}
In Figure \ref{fig_mnist_appendix}, we present the results of MNIST with the y-axis denoting the testing accuracy and x-axis denoting the communication cost.
We use the same step size setting for \ffgd\ and the number of local steps $K$ is set to $2$.
For \fedavg\, the client uses 20\% of the local data per local step, and takes 25 local steps. 
These is the suggested parameter setting in \citep{karimireddy2020scaffold} for \fedavg. The local step size is set to 0.0003.
\begin{figure}[h]
	\begin{tabular}{cc}
		\includegraphics[width=.45\columnwidth]{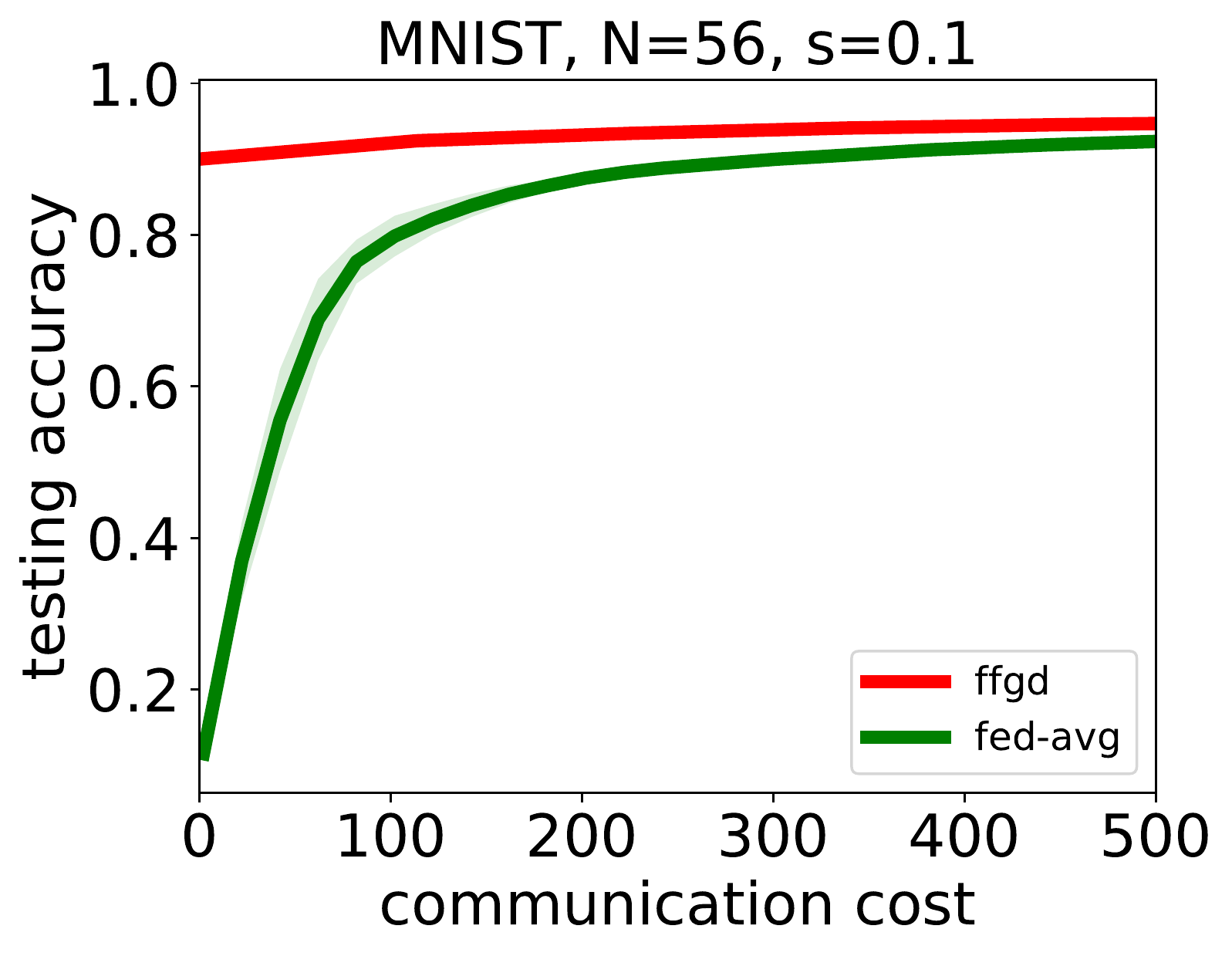} &
		\includegraphics[width=.45\columnwidth]{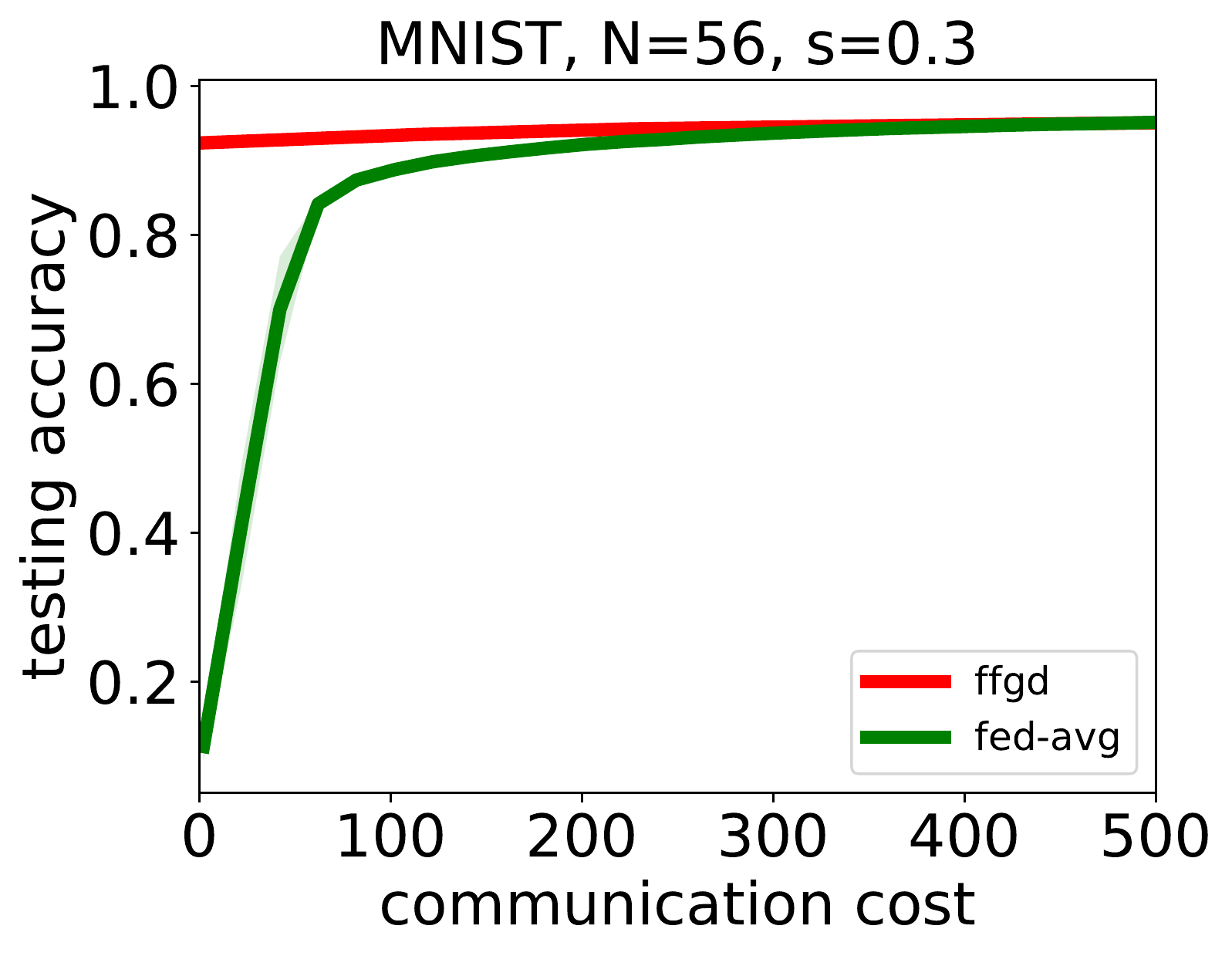}
	\end{tabular}
	\caption{Testing accuracy vs communication cost on MNIST.}
	\label{fig_mnist_appendix}
\end{figure}

\subsection{Implementing the Weak Learning Oracles}
We now discuss the implementations of the weak learning oracles.
In our experiments, we only use \ffgd and hence only the weak learning oracle $\mathrm{WO}^2_\alpha$ is actually implemented.
We discuss the implementation of the oracles $\mathrm{WO}^\infty_\alpha$ and $\mathrm{WO}^\lip_\alpha$ for completeness, but the suggested schemes may not be efficient in practice.

\paragraph{Implementing $\mathrm{WO}^2_\alpha$.}
Let $\phi$ be the input to the oracle and let $h_\theta$ be the candidate weak learner to be trained.
Here $h_\theta$ is a neural network with parameter $\theta$.

We can implement the oracle by solving
\begin{equation}
	\min_\theta \sum_{x\in\supp(\alpha)} \|\phi(x) - h_\theta(x)\|^2.
\end{equation}
In our experiment, we run Adam for $1000$ steps to solve the above problem using initial step size $0.005$.

\paragraph{Implementing $\mathrm{WO}^\infty_\alpha$.}
Let $\phi$ be the input to the oracle and let $h_\theta$ be the candidate weak learner to be trained.
Here $h_\theta$ is a neural network with parameter $\theta$.

We can implement the oracle by solving
\begin{equation}
	\min_\theta  \max_{x\in\supp(\alpha)} \|\phi(x) - h_\theta(x)\|^2.
\end{equation}

\paragraph{Implementing $\mathrm{WO}^\lip_\alpha$.}
Let $\phi$ be the input to the oracle and let $h_\theta$ be the candidate weak learner to be trained.
Here $h_\theta$ is a neural network with parameter $\theta$.

We can implement the oracle by solving
\begin{equation}
	\min_\theta  \left(\max_{x\in\supp(\alpha)} \|\phi(x) - h_\theta(x)\|^2\right) + \int_{\XM} \|\nabla_x \phi(x) - \nabla_x h_\theta(x)\|^2 d x.
\end{equation}
Note that the gradient of the input $\phi$ is available in \ffgdl as we have the explicit expression of $\phi$ in line \ref{eqn_residual_plus_gradient_L} of Algorithm \ref{algorithm_ffgd_L} for all $k=1, \ldots, K$.
The above scheme is similar to the Sobolev training scheme (1) in \citep{czarnecki2017sobolev}.


\section{Proof of Theorem \ref{thm_ffgd}} \label{appendix_proof_theorem_ffgd}
\begin{theorem}[Theorem \ref{thm_ffgd} restated.]
	Let $f^0$ be the global initializer function. Suppose that Assumption \ref{ass_bounded_gradient} holds, and suppose the weak learning oracle $\mathrm{WO}_\alpha^2$ satisfies \eqref{eqn_weak_oracle_contraction} with constant $\gamma$.
	Using the step size $\eta^{k, t} = \frac{2}{\mu(tK+k+1)}$, the output of \ffgd satisfies
	\begin{equation*}
		\begin{aligned}
			\| f^{T} - f^*\|_\alpha^2 = O\Bigg(\frac{\| f^{0} - f^*\|_\alpha^2}{KT} + \frac{KG^2\log (KT)}{T\gamma^2\mu^2} + \frac{(1-\gamma)G^2}{K\mu^2\gamma^2} + \frac{(1-\gamma)G^2\log (KT)}{KT\mu\gamma^2}\Bigg).
		\end{aligned}
	\end{equation*}
\end{theorem}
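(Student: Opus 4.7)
The plan is to treat \ffgd as a noisy version of local SGD on the strongly convex functional $\FM$, where the noise arises from the weak-oracle approximation and is tamed by the error-feedback residual. Crucially, in the semi-heterogeneous setting $\alpha_i = \alpha$ for all $i$, so the inner products $\langle\cdot,\cdot\rangle_{\alpha_i}$ and $\langle\cdot,\cdot\rangle_\alpha$ coincide, and the Tikhonov regularizer makes $\FM$ a $\mu$-strongly convex functional on $\LM^2(\alpha)$. I would introduce the virtual average $\bar g^{k,t} \defi \frac{1}{N}\sum_i g_i^{k,t}$; since $\bar g^{1,t} = f^t$ and $\bar g^{K+1,t} = f^{t+1}$ by the server update, it suffices to track $\|\bar g^{k,t} - f^*\|_\alpha^2$ across all $k$ and $t$.

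The first step is to rewrite the client update as an exact (non-approximated) gradient step plus a residual increment: combining lines \ref{eqn_update_variable} and \ref{eqn_update_residual} of Algorithm \ref{algorithm_ffgd} yields
\begin{equation*}
g_i^{k+1,t} = g_i^{k,t} - \eta^{k,t}\,\nabla \FM_i[g_i^{k,t}] + \eta^{k,t}\bigl(\Delta_i^{k,t} - \Delta_i^{k-1,t}\bigr),
\end{equation*}
because $h_i^{k,t} + \mu g_i^{k,t} = \nabla \FM_i[g_i^{k,t}] - (\Delta_i^{k,t} - \Delta_i^{k-1,t})$ by definition of the residual. Iterating the weak-oracle contraction \eqref{eqn_weak_oracle_contraction} together with Assumption \ref{ass_bounded_gradient} gives the uniform bound $\|\Delta_i^{k,t}\|_\alpha \leq (1-\gamma)G/\gamma$, because the residual resets to zero at the start of each round and $\|\nabla\RM_i[g_i^{k,t}]\|_\alpha \leq G$. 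A similar triangle inequality also shows $\|h_i^{k,t}\|_\alpha = O(G/\gamma)$, which controls the effective noise variance.

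Averaging over clients and invoking strong convexity of $\FM$ then yields a one-step recursion of the form
\begin{equation*}
\|\bar g^{k+1,t} - f^*\|_\alpha^2 \leq (1 - \mu\eta^{k,t})\|\bar g^{k,t} - f^*\|_\alpha^2 + O\!\left(\tfrac{(\eta^{k,t})^2 G^2}{\gamma^2}\right) + \text{(drift)} + \text{(residual)},
\end{equation*}
where the drift term controls $\frac{1}{N}\sum_i \|g_i^{k,t} - \bar g^{k,t}\|_\alpha^2$ (bounded by $O(K^2(\eta^{k,t})^2 G^2/\gamma^2)$ via the boundedness of $h_i^{k,t}$), and the residual term is the cross-product $2\eta^{k,t}\langle \bar g^{k,t} - f^*, \bar\Delta^{k,t} - \bar\Delta^{k-1,t}\rangle_\alpha$. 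I would handle the residual piece by Abel summation across the $K$ local steps of each round: since $\Delta_i^{0,t}=0$, the telescoping collapses into a boundary term of order $\eta^{K,t}\|\Delta_i^{K,t}\|_\alpha \lesssim \eta^{K,t}(1-\gamma)G/\gamma$, plus a smaller correction from the step-size differences $\eta^{k,t} - \eta^{k+1,t}$. Finally, unrolling the recursion with $\eta^{k,t} = 2/(\mu(tK+k+1))$ converts the product $\prod(1-\mu\eta^{j,s})$ into $(tK+k+1)^{-2}$-type weights, and standard harmonic-sum bounds $\sum 1/j = O(\log(KT))$ produce the four terms in the stated inequality.

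The main technical obstacle will be the handling of the error-feedback residual under the decreasing step size. A naive bound $\|\bar\Delta^{k,t} - \bar\Delta^{k-1,t}\|_\alpha = O(G)$ would make the per-round residual contribution grow linearly in $K$ and wipe out the $1/K$ improvement in the third term of the theorem. The Abel summation argument is what replaces the $K$ raw residual jumps by a single aggregate boundary term controlled by the weak-learner contraction factor $(1-\gamma)/\gamma$ rather than by $G$ itself, and this cancellation is precisely what produces the $(1-\gamma)G^2/(K\mu^2\gamma^2)$ scaling. In the limit $\gamma \to 1$ all residual-driven terms vanish and the bound degenerates to the standard local-SGD rate of \citet{li2019convergence}, consistent with the remark in the main text.
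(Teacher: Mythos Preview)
Your proposal is correct and follows essentially the same route as the paper's proof. Both arguments track the virtual average $\bar g^{k,t}$, bound the residual via the contraction $\|\Delta_i^{k,t}\|_\alpha \le \tfrac{1-\gamma}{\gamma}G$, control client drift by $O\bigl(K^2(\eta^{k,t})^2 G^2/\gamma^2\bigr)$, and handle the cross term with the residual by an Abel/telescoping summation over the $K$ local steps; your rewriting $h_i^{k,t}+\mu g_i^{k,t} = \nabla\FM_i[g_i^{k,t}] - (\Delta_i^{k,t}-\Delta_i^{k-1,t})$ is just an equivalent way to arrive at the paper's third term $\langle f^*-g_i^{k,t},\hat h_i^{k,t}-\nabla\FM_i[g_i^{k,t}]\rangle_\alpha$. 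One small point to tighten: the Abel boundary term is $\langle f^*-g_i^{K,t},-\Delta_i^{K,t}\rangle_\alpha$ (not simply $\eta^{K,t}\|\Delta_i^{K,t}\|_\alpha$), and the paper deals with it by Young's inequality, absorbing the resulting $\tfrac{\mu}{2}\|f^*-g_i^{K,t}\|_\alpha^2$ into the negative term $-\tfrac{\mu}{2N}\sum_i\|f^*-g_i^{k,t}\|_\alpha^2$ that was reserved from the strong-convexity step; this is what yields the clean $\tfrac{(1-\gamma)G^2}{K\mu^2\gamma^2}$ contribution.
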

\begin{proof}

	Since we are considering the semi-heterogeneous case where $\alpha = \alpha_i$, we ignore the subscript $i$ and simply write $\|\cdot\|_{\alpha}$ and $\langle\cdot, \cdot\rangle_{\alpha}$ for the norm and the inner product in $\LM^2(\alpha)$.

	For simplicity, we denote $\hat h_i^{k, t} = h_i^{k, t} + \mu g_i^{k, t}$.
	
	We define two hypothetical global average sequences $\bar g^{k, t} = \frac{1}{N}\sum_{i=1}^{N} g_{i}^{k, t}$ and $\bar h^{k, t} = \frac{1}{N}\sum_{i=1}^{N} \hat h_i^{k, t}$.
	In particular, we have $\bar{g}^{1, t} = f^{t}$.
	From the update rule in line \eqref{eqn_update_variable} of Algorithm \ref{algorithm_ffgd}, we write
	\begin{equation} \label{eqn_expande_update_rule}
		\|\bar g^{k+1, t} - f^*\|_\alpha^2 = \|\bar g^{k, t} - f^*\|_\alpha^2 + (\eta^{k, t})^2\|\bar h^{k, t}\|_\alpha^2 - 2\eta^{k, t}\langle \bar g^{k, t} - f^*, \bar h^{k, t}\rangle_\alpha
	\end{equation}
	The last term of  \eqref{eqn_expande_update_rule} can be split as
	\begin{align}
		-2\langle \bar g^{k, t} - f^*, \bar h^{k, t}\rangle_{\alpha}=&\ -\frac{2}{N}\sum_{i=1}^{N}\langle \bar g^{k, t} - f^*, \hat h_i^{k, t}\rangle_{\alpha}= -\frac{2}{N}\sum_{i=1}^{N}\langle \bar g^{k, t} - g_{i}^{k, t}, \hat h_i^{k, t}\rangle_{\alpha}+ \langle g_{i}^{k, t} - f^*, \hat h_i^{k, t}\rangle_{\alpha}\notag \\
		=&\ \frac{2}{N}\sum_{i=1}^{N}\langle  g_{i}^{k, t} - \bar g^{k, t}, \hat h_i^{k, t}\rangle_{\alpha}
		+ \langle f^* - g_{i}^{k, t}, \nabla \FM_{i}[g_{i}^{k, t}]\rangle_{\alpha}
		+ \langle f^* - g_{i}^{k, t}, \hat h_i^{k, t} - \nabla \FM_{i}[g_{i}^{k, t}]\rangle_\alpha. \label{eqn_split_2}
	\end{align}
	The second term of  \eqref{eqn_split_2} can be bounded using the $\mu$-strong convexity of $\FM_{i}$
	\begin{align*}
		\frac{2}{N}\sum_{i=1}^{N}\langle f^* - g_{i}^{k, t}, \nabla \FM_{i}[g_{i}^{k, t}]\rangle_{\alpha}\leq&\ - \frac{2}{N}\sum_{i=1}^{N}\FM_{i}[g_{i}^{k, t}] - \FM_{i}[f^*] - \frac{2}{N}\sum_{i=1}^{N}\frac{\mu}{2}\|f^* - g_{i}^{k, t}\|^2_\alpha.
	\end{align*}
	Note that using the optimality of $f^*$ we have $$\sum_{i=1}^{N}\FM_{i}[g_{i}^{k, t}] - \FM_{i}[f^*] = \sum_{i=1}^{N}\FM_{i}[g_{i}^{k, t}] - \FM_{i}[\bar g^{k, t}] + \FM_{i}[\bar g^{k, t}] - \FM_{i}[f^*] \leq \sum_{i=1}^{N}\FM_{i}[g_{i}^{k, t}] - \FM_{i}[\bar g^{k, t}]$$
	and that by recalling that $\bar g^{k, t} = \frac{1}{N}\sum_{i=1}^{N} g_{i}^{k, t}$ and using the Cauchy–Schwarz inequality we have
	$$\frac{1}{N}\sum_{i=1}^{N}\|f^* - g_{i}^{k, t}\|_\alpha^2 \geq \|f^* - \bar g^{k, t}\|_\alpha^2.$$
	Therefore, we can bound
	\begin{align*}
		&\ \frac{2}{N}\sum_{i=1}^{N}\langle f^* - g_{i}^{k, t}, \nabla \FM_{i}[g_{i}^{k, t}]\rangle_{\alpha}\\
		\leq &\ - \frac{2}{N}\sum_{i=1}^{N}\FM_{i}[g_{i}^{k, t}] - \FM_{i}[\bar g^{k, t}] - \frac{\mu}{2}\|f^* - \bar g^{k, t}\|^2_\alpha - \frac{1}{N}\sum_{i=1}^{N}\frac{\mu}{2}\|f^* - g_{i}^{k, t}\|^2_\alpha \\
		\leq &\ - \frac{2}{N}\sum_{i=1}^{N}\langle \nabla \FM_{i}[\bar g^{k, t}], g_{i}^{k, t} - \bar g^{k, t}\rangle_{\alpha}- \frac{\mu}{2}\|f^* - \bar g^{k, t}\|^2_\alpha - \frac{1}{N}\sum_{i=1}^{N}\frac{\mu}{2}\|f^* - g_{i}^{k, t}\|^2_\alpha \\
		\leq &\ \frac{1}{N}\sum_{i=1}^{N} \eta^{k, t}\|\nabla \FM_{i}[\bar g^{k, t}]\|^2_\alpha + \frac{1}{\eta^{k, t}}\|g_{i}^{k, t} - \bar g^{k, t}\|^2_\alpha - \frac{\mu}{2}\|f^* - \bar g^{k, t}\|^2_\alpha - \frac{1}{N}\sum_{i=1}^{N}\frac{\mu}{2}\|f^* - g_{i}^{k, t}\|^2_\alpha,
	\end{align*}
	where we use Young's inequality in the last inequality.
	
	Besides, recall that $\|\nabla \RM_i[g]\|_\alpha\leq G$ from Assumption \ref{ass_bounded_gradient}. 
	Together with the property of the oracle, we have
	\begin{equation}
		\|\Delta_i^{k, t}\|_\alpha \leq (1-\gamma) \left(\|\Delta_i^{k-1, t}\|_\alpha + G\right)\ \mathrm{and}\ \|\Delta_i^{0, t}\|_\alpha = 0 \Rightarrow \forall k, \|\Delta_i^{k, t}\|_\alpha\leq \frac{1-\gamma}{\gamma} G.
	\end{equation}
	Consequently, we also have
	\begin{align*}
		\|\bar h^{k}\|_\alpha \leq \frac{1}{N}\sum_{i=1}^{N}\|h_i^{k, t}\|_\alpha = \frac{1}{N}\sum_{i=1}^{N}\|\nabla \RM_{i}[g_{i}^{k, t}] + \Delta_{i}^{k-1}\|_\alpha \leq \frac{2-\gamma}{\gamma}G.
	\end{align*}
	From line \ref{eqn_update_variable} of Algorithm \ref{algorithm_ffgd}, we have $g_i^{k+1, t} = (1-\mu\eta^{k, t}) g_i^{k, t} + \eta^{k, t} h_i^{k, t}$ and therefore
	\begin{equation}
		\|g_i^{k+1, t}\|_{\alpha} - \frac{2G}{\gamma\mu} \leq (1-\mu\eta^{k, t}) \left(\|g_i^{k+1, t}\|_{\alpha} - \frac{2G}{\gamma\mu}\right),
	\end{equation}
	where we use $\|h_i^{k, t}\|_{\alpha} \leq 2G/\gamma$.
	Therefore, if we have initially $\|f^t\|_{\alpha}\leq \frac{2G}{\gamma\mu}$, we always have $\|g_i^{k, t}\|_{\alpha} \leq \frac{2G}{\gamma\mu}$ (hence so is $f^{t+1}$ as it is the global average $\bar g^{K+1, t}$).
	Further, together with $\|h_i^{k, t}\|_{\alpha} \leq 2G/\gamma$, we have $\|\hat h_i^{k, t}\|_{\alpha} \leq 4G/\gamma$.
	
	Additionally, $\frac{1}{N}\sum_{i=1}^{N}\|g_{i}^{k, t} - \bar g^{k, t}\|^2_\alpha$ can be bounded by (we use $E[\left(X-E[X]\right)^2]\leq E[X^2]$)
	\begin{align*}
		\frac{1}{N}\sum_{i=1}^{N}\|g_{i}^{k, t} - \bar g^{k, t}\|^2_\alpha =&\ \frac{1}{N}\sum_{i=1}^{N}\|g_{i}^{k, t} - g_i^{1, t} + g_i^{1, t} - \bar g^{k, t}\|^2_\alpha \\
		\leq &\ \frac{1}{N}\sum_{i=1}^{N}\|g_{i}^{k, t} - g_i^{1, t}\|^2_\alpha \leq \sum_{\kappa=1}^{k}16(\eta^{\kappa, t})^2G^2/\gamma^2 \leq 64(\eta^{k, t})^2K^2G^2/\gamma^2,
	\end{align*}
	where we use $\eta^{\kappa, t} \leq 2\eta^{k, t}$ for any $t\geq0$ and $1\leq \kappa\leq k$.
	Therefore we can bound the first term of  \eqref{eqn_split_2} by
	\begin{align*}
		\frac{2}{N}\sum_{i=1}^{N}\langle  g_{i}^{k, t} - \bar g^{k, t}, \hat h_i^{k, t}\rangle_{\alpha}\leq \frac{1}{N}\sum_{i=1}^{N}\frac{1}{\eta^{k, t}}\|g_{i}^{k, t} - \bar g^{k, t}\|^2_\alpha + \frac{\eta^{k, t}}{N}\sum_{i=1}^{N}\|\hat h_i^{k, t}\|^2_\alpha \leq (64K^2+16)\eta^{k, t}G^2/\gamma^2
	\end{align*}
	Plug in the above results into  \eqref{eqn_expande_update_rule} to yield
	\begin{align*}
		\|\bar g^{k+1, t} - f^*\|_\alpha^2 \leq&\ (1-\frac{\mu\eta^{k, t}}{2})\|\bar g^{k, t} - f^*\|_\alpha^2 + O\left((\eta^{k, t})^2K^2G^2/\gamma^2\right)\\ &\
		+ \frac{2\eta^{k, t}}{N}\sum_{i=1}^N\langle f^* - g_{i}^{k, t}, \hat h_i^{k, t} - \nabla \FM_{i}[g_{i}^{k, t}]\rangle_{\alpha}- \frac{\eta^{k, t}}{N}\sum_{i=1}^{N}\frac{\mu}{2}\|f^* - g_{i}^{k, t}\|^2.
	\end{align*}
	Recall that  $\eta^{k, t} = \frac{2}{\mu(tK + k + 1)}$ and multiply both sides by $(tK + k + 1)$
	\begin{align*}
		(tK + k + 1)\|\bar g^{k+1, t} - f^*\|^2_\alpha \leq&\ (tK + k)\|\bar g^{k, t} - f^*\|^2_\alpha + O(\frac{K^2G^2}{\gamma^2\mu^2(tK + k + 1)}) \\
		&\ + \frac{4}{\mu N}\sum_{i=1}^N\langle f^* - g_{i}^{k, t}, \hat h_i^{k, t} - \nabla \FM_{i}[g_{i}^{k, t}]\rangle_{\alpha}- \frac{2}{\mu N}\sum_{i=1}^{N}\frac{\mu}{2}\|f^* - g_{i}^{k, t}\|^2_\alpha.
	\end{align*}
	Sum from $k=1$ to $K$
	\begin{align}
		(tK + K + 1)\|\bar g^{K+1, t} - f^*\|^2_\alpha \leq&\ (tK+1)\|\bar g^{1, t} - f^*\|^2_\alpha + O(\frac{K^2G^2}{\gamma^2\mu^2}) \left(\log(tK+K) - \log(tK+1)\right) 
		\notag \\
		&\	 + \frac{4}{\mu N}\sum_{i=1}^N\sum_{k=1}^{K}\langle f^* - g_{i}^{k, t}, \hat h_i^{k, t} - \nabla \FM_{i}[g_{i}^{k, t}]\rangle_{\alpha}- \frac{2}{\mu N}\sum_{i=1}^{N}\sum_{k=1}^{K}\frac{\mu}{2}\|f^* - g_{i}^{k, t}\|^2_\alpha. \label{eqn_appendix_i}
	\end{align}
	We now focus on the first term of the second line above
	\begin{align}
		&\ \sum_{k=1}^{K} \langle f^* - g_{i}^{k, t}, \hat h_i^{k, t} - \nabla \FM_{i}[g_{i}^{k, t}]\rangle_\alpha = \sum_{k=1}^{K} \langle f^* - g_{i}^{k, t}, h_i^{k, t} - \nabla \RM_{i}[g_{i}^{k, t}]\rangle_\alpha \notag \\
		=&\ \sum_{k=1}^{K}\langle f^* - g_{i}^{k, t}, h_i^{k, t} - (\nabla \RM_{i}[g_{i}^{k, t}] + \Delta_{i}^{k-1})\rangle_{\alpha}+ \sum_{k=1}^{K}\langle f^* - g_{i}^{k, t}, \Delta_{i}^{k-1}\rangle_{\alpha}\notag \\
		=&\ \sum_{k=1}^{K}\langle f^* - g_{i}^{k, t}, - \Delta_{i}^{k}\rangle_{\alpha}+ \sum_{k=2}^{K}\langle f^* - g_{i}^{k, t}, \Delta_{i}^{k-1}\rangle_{\alpha}+ \langle f^* - g_{i}^{1}, \Delta_{i}^{0}\rangle_{\alpha}&&\& \Delta_{i}^0 = 0\notag \\
		=&\ \sum_{k=1}^{K}\langle f^* - g_{i}^{k, t}, - \Delta_{i}^{k}\rangle_{\alpha}+ \sum_{k=1}^{K-1}\langle f^* - g_{i}^{k+1}, \Delta_{i}^{k}\rangle_{\alpha}\notag \\
		=&\ \sum_{k=1}^{K}\langle f^* - g_{i}^{k, t}, - \Delta_{i}^{k}\rangle_{\alpha}+ \sum_{k=1}^{K-1}\langle f^* - g_{i}^{k, t}, \Delta_{i}^{k}\rangle_{\alpha}+ \sum_{k=1}^{K-1}\langle \eta^{k}_{t} h_i^{k, t}, \Delta_{i}^{k}\rangle_{\alpha}\notag \\
		=&\ \langle f^* - g_{i}^{K, t}, -\Delta_{i}^{K}\rangle_{\alpha}+ \sum_{k=1}^{K-1}\langle \eta^{k}_{t} h_i^{k, t}, \Delta_{i}^{k}\rangle_{\alpha}\notag \\
		\leq &\ \frac{\mu}{2}\|f^* - g_{i}^{K, t}\|^2 + \frac{1}{2\mu}(\frac{1-\gamma}{\gamma})^2G^2 + \frac{(1-\gamma)(2-\gamma)}{\gamma^2}G^2\sum_{k=1}^{K-1} \eta^{k, t}\notag  \\
		\leq&\ \frac{\mu}{2}\|f^* - g_{i}^{K, t}\|^2 + \frac{1}{2\mu}(\frac{1-\gamma}{\gamma})^2G^2 + \frac{(1-\gamma)(2-\gamma)}{\gamma^2}G^2(\log(tK+K) - \log(tK + 2)). \label{eqn_residual_cancel}
	\end{align}
	Using this result, we obtain (we cancel $\frac{\mu}{2}\|f^* - g_{i}^{K, t}\|^2$ with the last term of \eqref{eqn_appendix_i})
	\begin{align*}
		\left(K(t+1) + 1\right)\| f^{t+1} - f^*\|^2_\alpha \leq&\ (Kt+1)\| f^{t} - f^*\|^2_\alpha + O(\frac{K^2G^2}{\gamma^2\mu^2}) (\log(K(t+1) + 1) - \log(Kt + 1)) 
		\\&\	 + \frac{4}{\mu} (\frac{1}{2\mu}(\frac{1-\gamma}{\gamma})^2G^2 + \frac{(1-\gamma)(2-\gamma)}{\gamma^2}G^2(\log(K(t+1) -1 ) - \log(Kt+1)))
	\end{align*}
	Sum from $t = 0$ to $T-1$ to yield
	\begin{align*}
		(KT + 1)\| f^{T} - f^*\|^2_\alpha \leq \| f^{0} - f^*\|^2_\alpha + O(\frac{K^2G^2\log (KT)}{\gamma^2\mu^2}) + O(\frac{(1-\gamma)TG^2}{\mu^2\gamma^2}) + O(\frac{(1-\gamma)G^2\log (KT)}{\mu\gamma^2}),
	\end{align*}
	and hence
	\begin{align*}
		\| f^{T} - f^*\|^2_\alpha \leq O(\frac{\| f^{0} - f^*\|^2_\alpha}{KT}) + O(\frac{KG^2\log (KT)}{T\gamma^2\mu^2}) + O(\frac{(1-\gamma)G^2}{K\mu^2\gamma^2}) + O(\frac{(1-\gamma)G^2\log (KT)}{KT\mu\gamma^2}).
	\end{align*}
\end{proof}
\section{Proof of Theorem \ref{thm_ffgdc}}
We restate \ffgdc in Algorithm \ref{algorithm_ffgdc_appendix} as there is a typo in Algorithm \ref{algorithm_ffgdc}. 
Specifically, in line \ref{eqn_update_variable_c}, we missed the term $\mu g_i^{k, t}$. 
This is marked in red in Algorithm \ref{algorithm_ffgdc_appendix}. 
Our result further requires that $B\geq \frac{2G}{\mu\gamma}$.
Since the Tikhonov regularization ensures the feasibility of $f^t$ (and $g_i^{k, t}$), we hence remove the clipping operation in line \ref{eqn_zero_initialization_c} of Algorithm \ref{algorithm_ffgdc} (see line \ref{eqn_zero_initialization_c_appendix} of Algorithm \ref{algorithm_ffgdc_appendix}).
\begin{algorithm}[t]
	\caption{\textsc{Client} procedure for Federated Functional Gradient Descent with Clipping (\ffgdc)}
	\begin{algorithmic}[1]
		\Procedure {Client}{$i$, $t$, $f$}
		\State $\Delta_{i}^{0, t} = 0$, $g_i^{1, t} = {\color{red}f^t}$ \label{eqn_zero_initialization_c_appendix}
		\For{$k\leftarrow 1$ to $K$}
		\State $h_{i}^{k, t}\ := \mathrm{WO}_{\alpha_i}^\infty(\Delta_{i}^{k-1, t} + \nabla \RM_i[g_{i}^{k, t}])$ \label{eqn_residual_plus_gradient_c_appendix}
		\State $g_{i}^{k+1, t} := g_{i}^{k, t} - {\eta^{k, t}}\cdot \left(\Clip{ G^2_\gamma}\left(h_{i}^{k, t}\right) {\color{red} + \mu g_i^{k, t}}\right)$ \label{eqn_update_variable_c_appendix}
		\State $\Delta_{i}^{k, t} := \Clip{G^1_\gamma}\left(\Delta_{i}^{k-1, t} + \nabla \RM_i[g_{i}^{k, t}] - h_{i}^{k, t}\right)$ \label{eqn_update_residual_c_appendix}
		\EndFor
		\State
		\Return $g_i^{K+1, t}$.
		\EndProcedure
	\end{algorithmic}
	\label{algorithm_ffgdc_appendix}
\end{algorithm}
\begin{theorem}[Theorem \ref{thm_ffgdc} restated]
	Let $f^0$ be the global initializer function. Let $\omega = \frac{1}{N}\sum_{i=1}^{N} \TV(\alpha, \alpha_i)$. Set $G_\gamma^1 = \frac{1-\gamma}{\gamma}\cdot G$ and $ G_\gamma^2 = \frac{2-\gamma}{\gamma}\cdot G$.
	Under Assumption \ref{ass_bounded_gradient_c}, and supposing the weak learning oracle $\mathrm{WO}_\alpha^\infty$ satisfies \eqref{eqn_weak_oracle_contraction_c} with constant $\gamma$ and $B\geq \frac{2G}{\mu\gamma}$, using the step sizes $\eta^{k, t} = \frac{4}{\mu(tK+k+1)}$, the output of \ffgdc satisfies
	\begin{equation*}
		\begin{aligned}
			\|f^T -  f^*\|_\alpha^2 = O\bigg({\frac{\|f^0 - f^*\|_\alpha^2}{KT}} + {\frac{KG^2 log(KT)}{T\mu^2\gamma^2}}
			+ \frac{(1-\gamma)^2G^2}{K\mu^2\gamma^2} + {\frac{GB\omega}{\mu\gamma}}\bigg)
		\end{aligned}
	\end{equation*}
\end{theorem}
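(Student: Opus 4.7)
The plan is to adapt the proof of Theorem \ref{thm_ffgd} to the fully-heterogeneous setting, with the essential new ingredient being the bound \eqref{eqn_change_inner_product_TV}, which trades a local inner product for the global one at the cost of a term proportional to $\TV(\alpha,\alpha_i)$. As a preparatory step I would verify three invariants maintained by the algorithm by induction on $(t,k)$: (i) $\|\Delta_i^{k,t}\|_{\alpha_i,\infty} \leq G_\gamma^1$ and (ii) $\|h_i^{k,t}\|_{\alpha_i,\infty} \leq G_\gamma^2$, both consequences of the weak-oracle contraction \eqref{eqn_weak_oracle_contraction_c} and Assumption \ref{ass_bounded_gradient_c}; and (iii) $\|g_i^{k,t}\|_\infty \leq B$, which follows from the Tikhonov contraction factor $(1-\mu\eta^{k,t})$ together with the assumption $B\geq 2G/(\mu\gamma)$. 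Invariants (i) and (ii) imply that both $\Clip{\cdot}$ operations in Algorithm \ref{algorithm_ffgdc_appendix} act as the identity on $\supp(\alpha_i)$, so the update behaves exactly as an unclipped \ffgd step when read through any $\alpha_i$-inner product; the clippings matter only for controlling values off $\supp(\alpha_i)$, which is precisely what is needed to apply \eqref{eqn_change_inner_product_TV}.

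Following the structure of the FFGB proof I would define $\bar g^{k,t} = \tfrac{1}{N}\sum_i g_i^{k,t}$, $\hat h_i^{k,t} = \Clip{G_\gamma^2}(h_i^{k,t}) + \mu g_i^{k,t}$, and $\bar h^{k,t} = \tfrac{1}{N}\sum_i \hat h_i^{k,t}$, and expand $\|\bar g^{k+1,t} - f^*\|_\alpha^2$ analogously to \eqref{eqn_expande_update_rule}. The cross term decomposes into a consensus-drift part, handled exactly as in the FFGB proof and contributing the $KG^2/\gamma^2$ summand, and a descent part $\tfrac{1}{N}\sum_i \langle g_i^{k,t} - f^*, \hat h_i^{k,t}\rangle_\alpha$. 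The key new step is to convert each descent term from the $\alpha$-inner product to the $\alpha_i$-inner product via \eqref{eqn_change_inner_product_TV}: invariant (iii) gives $\|g_i^{k,t}-f^*\|_\infty \leq 2B$, and invariants (i), (ii) together with $B\geq 2G/(\mu\gamma)$ give $\|\hat h_i^{k,t}\|_\infty = O(G/\gamma)$, so the conversion costs $O(GB\cdot \TV(\alpha,\alpha_i))$ per client, which upon averaging and multiplying by $\eta^{k,t}$ produces the heterogeneity term $GB\omega/(\mu\gamma)$ after the subsequent summation.

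Once inside the $\alpha_i$-inner product, the FFGB analysis applies essentially verbatim. Strong convexity of $\FM_i$ on $\LM^2(\alpha_i)$ supplies a nonpositive suboptimality term together with $-\tfrac{\mu}{2}\|g_i^{k,t}-f^*\|_{\alpha_i}^2$, and the residual telescoping identity \eqref{eqn_residual_cancel} carries over because $\Delta_i^{k,t}$ coincides with its unclipped counterpart on $\supp(\alpha_i)$ and is globally bounded by $G_\gamma^1 = (1-\gamma)G/\gamma$ in the $\alpha_i,\infty$ norm; the boundary term of the telescoping then gives the $(1-\gamma)^2 G^2/(K\mu^2\gamma^2)$ contribution. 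Converting the negative quadratic $-\tfrac{\mu}{2}\|g_i^{k,t}-f^*\|_{\alpha_i}^2$ back to the $\alpha$-norm with \eqref{eqn_change_inner_product_TV} (taking $f=g=g_i^{k,t}-f^*$) introduces an $O(B^2\TV(\alpha,\alpha_i))$ error which, since $B = O(G/(\mu\gamma))$, merges into the same $GB\omega/(\mu\gamma)$ term.

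The endgame is the same potential argument as in the proof of Theorem \ref{thm_ffgd}: multiply the one-step inequality by $(tK+k+1)$, sum $k$ from $1$ to $K$ and $t$ from $0$ to $T-1$, and use the harmonic bound $\sum 1/(tK+k+1) = O(\log(KT))$ to produce the $\log(KT)$ factors in the first two summands of the claim. The main obstacle I anticipate is the bookkeeping: I must verify that $G_\gamma^1$ and $G_\gamma^2$ are tight enough that invariants (i)--(iii) survive simultaneously at every step, and that each TV-induced error term produced by a norm conversion is either independent of $t$ or is damped by $\eta^{k,t}$ enough to be absorbed into the single $GB\omega/(\mu\gamma)$ summand, rather than producing a spurious $\omega\log(KT)$ term in the final bound.
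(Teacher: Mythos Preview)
Your plan tracks the paper's proof closely: the invariants (i)--(iii) are exactly the content of the paper's two lemmas, the TV conversion of the first-order cross term $\langle \bar g^{k,t}-f^*,\hat h_i^{k,t}\rangle$ is the same, and the residual telescoping and potential argument are identical. One step, however, does not go through as written. When you propose to convert the negative quadratic $-\tfrac{\mu}{2}\|g_i^{k,t}-f^*\|_{\alpha_i}^2$ back to the $\alpha$-norm via \eqref{eqn_change_inner_product_TV}, the error incurred is $O(\mu B^2\,\TV(\alpha,\alpha_i))$, which after the potential summation contributes $O(B^2\omega)$ to the final bound. Your claim that ``$B = O(G/(\mu\gamma))$'' is in the wrong direction: the hypothesis is $B \geq 2G/(\mu\gamma)$, a \emph{lower} bound on $B$, so $B^2\omega$ is not dominated by $GB\omega/(\mu\gamma)$ in general and the resulting bound is weaker than the one claimed.

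The paper avoids this by never applying a TV conversion to the quadratic term. Instead it uses the exact identity $\tfrac{1}{N}\sum_i \|\phi\|_{\alpha_i}^2 = \|\phi\|_\alpha^2$, valid for any $i$-independent function $\phi$ because $\alpha = \tfrac{1}{N}\sum_i\alpha_i$. Concretely, half of $-\tfrac{\mu}{N}\sum_i\|f^*-g_i^{k,t}\|_{\alpha_i}^2$ is reserved to cancel the boundary term from the residual telescoping, and to the other half one applies $\|f^*-g_i^{k,t}\|_{\alpha_i}^2 \geq \tfrac{1}{2}\|f^*-\bar g^{k,t}\|_{\alpha_i}^2 - \|g_i^{k,t}-\bar g^{k,t}\|_{\alpha_i}^2$; averaging over $i$ turns the first piece into $\tfrac{1}{2}\|f^*-\bar g^{k,t}\|_\alpha^2$ exactly, with no heterogeneity cost, and the drift piece is already controlled. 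With this replacement your argument matches the paper and yields the stated bound.
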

\begin{proof}
	For simplicity, in this proof, we define $\hat h_i^{k, t} \defi \Clip{G_\gamma^2}(h_{i}^{k, t}) + \mu g_i^{k, t}$.
	
	For a fixed communication round $t$, we define two hypothetical sequences $\bar g^{k, t} = \frac{1}{N}\sum_{i=1}^{N} g_{i}^{k, t}$
	and $\bar h^{k, t} = \frac{1}{N}\sum_{i=1}^{N} \hat h^{k, t}_i$.
	Note that $\bar{g}^{1, t} = f^{t}$.
	From the update rule in line \ref{eqn_update_variable_c} of Algorithm \ref{algorithm_ffgdc}, we write
	\begin{equation} \label{eqn_expand_update_rule_non_iid}
		\|\bar g^{k+1, t} - f^*\|_{\alpha}^2 = \|\bar g^{k, t} - f^*\|_{\alpha}^2 + (\eta^{k, t})^2\|\bar h^{k, t}\|_{\alpha}^2 - 2\eta^{k, t}\langle \bar g^{k, t} - f^*, \bar h^{k, t}\rangle_{\alpha}.
	\end{equation}
	For the second term, we have $\|\bar h^{k, t}\|_{\alpha}^2 \leq \|\bar h^{k, t}\|_{\infty}^2 \leq 4G^2/\gamma^2 = O(\frac{G^2}{\gamma^2})$ due to the clip operation.
	
	The last term of  \eqref{eqn_expand_update_rule_non_iid} can be split as
	\begin{align}
		-2\langle \bar g^{k, t} - f^*, \bar h^{k, t}\rangle_{\alpha}	
		=&\ -\frac{2}{N}\sum_{i=1}^{N}\langle \bar g^{k, t} - f^*, \hat h_{i}^{k, t}\rangle_{\alpha_i} + \left(\langle \bar g^{k, t} - f^*, \hat h_{i}^{k, t}\rangle_{\alpha}- \langle \bar g^{k, t} - f^*, \hat h_{i}^{k, t}\rangle_{\alpha_i}\right) \notag\\
		=&\ \frac{2}{N}\sum_{i=1}^{N}\langle  g_{i}^{k, t} - \bar g^{k, t}, \hat h^{k, t}\rangle_{\alpha_i} 
		+ \langle f^* - g_{i}^{k, t}, \nabla \FM_{i}[g_{i}^{k, t}]\rangle_{\alpha_i}
		+ \langle f^* - g_{i}^{k, t}, \hat h^{k, t} - \nabla \FM_{i}[g_{i}^{k, t}]\rangle_{\alpha_i} \notag \\
		&\ \qquad + \left(\langle \bar g^{k, t} - f^*, \hat h^{k, t}\rangle_{\alpha_i} - \langle \bar g^{k, t} - f^*, \hat h^{k, t}\rangle_{\alpha}\right).  \label{eqn_split_2_non_iid}
	\end{align}
	We first introduce the following lemmas that characterize the boundedness of $h_i^{k, t}$, $g_i^{k, t}$, and $\hat h_i^{k, t}$.
	\begin{lemma} \label{lemma_boundedness_u_i^{k, t}}
		For a fixed $i\in[N]$, we have the following property.
		\begin{enumerate}
			\item On the support of $\alpha_i$, we have $\Delta_{i}^{k, t} := \Delta_{i}^{k-1, t} + \nabla \RM_i[g_{i}^{k, t}] - h_{i}^{k, t}$, that is, on the support of $\alpha_i$, the clip operation has no effect on the support of $\alpha_i$ in line \ref{eqn_update_residual_c} of Algorithm \ref{algorithm_ffgdc}.
			\item  On the support of $\alpha_i$, we have $\Clip{G_\gamma^2}{\left(h_i^{k, t}\right)} = h_i^{k, t}$, that is, on the support of $\alpha_i$, the clip operation has no effect on the support of $\alpha_i$ in line \ref{eqn_update_variable_c} of Algorithm \ref{algorithm_ffgdc}.
		\end{enumerate}
	\end{lemma}
	\begin{proof}
		Note that $\Delta_i^{0, t} \equiv 0$. Besides, for $x\in \supp(\alpha_i)$, in each iteration $|\Delta_i^{k, t}(x)|$ is first increased at most by $G$ after adding $\nabla \RM_i[g_i^{k, t}]$ and is then reduced by at least $1-\gamma$ after subtracting the weak learner $h_i^{k, t}$. 
		Consequently, $|\Delta_i^{k, t}(x)|\leq \frac{1-\gamma}{\gamma}G$ for $x\in\supp(\alpha_i)$:
		\begin{equation}
			\|\Delta_i^{k, t}\|_{\alpha_i, \infty} \leq (1-\gamma) \left(\|\Delta_i^{k-1, t}\|_{\alpha_i, \infty} + G\right)\ \mathrm{and}\ \|\Delta_i^{0, t}\|_{\infty} = 0 \Rightarrow \forall k, \|\Delta_i^{k, t}\|_{\alpha_i, \infty} \leq \frac{1-\gamma}{\gamma} G = G_\gamma^1.
		\end{equation}
		Therefore, the $\mathrm{Clip}$ operation does not affect the values of $\Delta_i^{k, t}$ on $\supp(\alpha_i)$ as they will never exceed $\frac{1-\gamma}{\gamma} G$.
		Further, $\|\Delta_i^{k-1, t} + \nabla \RM_i[g_i^{k, t }]\|_{\alpha_i, \infty} \leq G + \frac{1-\gamma}{\gamma} G = \frac{G}{\gamma}$ and hence 
		\begin{equation}
			\|h_i^{k, t}\|_{\alpha_i, \infty}\leq \|\Delta_i^{k-1, t} + \nabla \RM_i[g_i^{k, t }] - h_i^{k, t}\|_{\alpha_i, \infty} + \|\Delta_i^{k-1, t} + \nabla \RM_i[g_i^{k, t }]\|_{\alpha_i, \infty} \leq \frac{2-\gamma}{\gamma}G = G_\gamma^2.
		\end{equation}
		Therefore, the $\mathrm{Clip}$ operation does not affect the values of $h_i^{k, t}$ on $\supp(\alpha_i)$ neither as they will never exceed $G_\gamma^2$.
	\end{proof}
	\begin{lemma} \label{lemma_boundedness_of_iterate}
		Assume that the initial function satisfies $\|f^0\|_\infty \leq \frac{2G}{\gamma\mu}$.
		Then for all $1\leq k\leq K$ and $t\geq 0$,
		$\|\bar g^{k, t}\|_{\infty} \leq \frac{2G}{\gamma\mu}$.
	\end{lemma}
	\begin{proof}
		For $t=0$, $\bar g^{1, 0} = \Clip{B}\left(f^0\right) = f^0$ (since $B\geq \frac{2G}{\gamma\mu}$) and hence $\|\bar g^{1, 0}\|_{\infty}\leq \frac{2G}{\gamma\mu}$ due to the initialization.
		Now assume that for $t=\tau$ the statement holds. Therefore $\|f^{\tau+1}\|_\infty\leq \frac{2G}{\gamma\mu}$.
		So for $t=\tau+1$, $\|\bar g^{1, t}\|_\infty\leq \frac{2G}{\gamma\mu}$.
		From the update rule in line \eqref{eqn_update_variable_c_appendix}, we have
		\begin{equation}
			\bar g^{k+1, t} = (1-\mu \eta^{k, t})\bar g^{k, t} + \frac{1}{N}\sum_{i=1}^{N} \eta^{k, t}\Clip{G_\gamma^2}(h_{i}^{k, t})
		\end{equation}
		Recursively, we have $$\|\bar g^{k+1, t}\|_\infty - \frac{2G}{\gamma\mu} \leq (1-\mu \eta^{k, t})\left(\|\bar g^{k, t}\|_\infty - \frac{2G}{\gamma\mu}\right),$$ 
		which leads to the conclusion.
	\end{proof}
	Combing the above two lemmas, we have the boundedness of $\hat h_i^{k, t}$ and $\nabla F_i[g_i^{k, t}]$.
	\begin{lemma}
		$\|\hat h_i^{k, t}\|_\infty \leq \frac{4G}{\gamma}$ and $\|\nabla \FM_i[g_i^{k, t}]\|_\infty\leq \frac{4G}{\gamma}$.
	\end{lemma}

	Using the variational formulation of the TV norm, one has (clearly $\|\bar g^{k, t}\|_{\infty}\leq \frac{2G}{\mu\gamma}\leq B$)
	\begin{equation}
		|\langle \bar g^{k, t} - f^*, h_i^{k, t}\rangle_{\alpha}- \langle \bar g^{k, t} - f^*, h_i^{k, t}\rangle_{\alpha_i}|
		\leq O\left(BG/\gamma\cdot\TV(\alpha, \alpha_i)\right).
	\end{equation}
	Denote $\omega \defi \frac{1}{N} \sum_{i=1}^{N} \TV(\alpha, \alpha_i)$. We hence have 
	\begin{equation}
		\frac{2}{N}\sum_{i=1}^{N}\left(\langle \bar g^{k, t} - f^*, h_i^{k, t}\rangle_{\alpha_i} - \langle \bar g^{k, t} - f^*, h_i^{k, t}\rangle_\alpha\right) \leq \delta \defi O(BG\omega/\gamma).
	\end{equation}
	
	The first term of \eqref{eqn_split_2_non_iid} can be bounded by (note that we simply use $h_i^{k, t}$ since $\Clip{G_\gamma^2}(h_i^{k, t}) = h_i^{k, t}$ on $\supp(\alpha_i)$ due to Lemma \ref{lemma_boundedness_u_i^{k, t}})
	\begin{align*}
		\frac{2}{N}\sum_{i=1}^{N}\langle  g_{i}^{k, t} - \bar g^{k, t}, h_i^{k, t}\rangle_{\alpha_i} \leq&\ \frac{1}{N}\sum_{i=1}^{N}\eta^{k, t} \|h_i^{k, t}\|_{\alpha_i}^2 + \frac{1}{\eta^{k, t}}\|g_i^k - \bar g^k\|^2_{\alpha_i} \\
		\leq&\ \eta^{k, t} 4G^2/\gamma^2 + \frac{1}{N}\sum_{i=1}^{N}\frac{1}{\eta^{k, t}}\|g_i^k - \bar g^k\|^2_{\alpha_i} = O(\frac{\eta^{k, t}G^2}{\gamma^2}) + \frac{1}{\eta^{k, t}}\cdot\frac{1}{N}\sum_{i=1}^{N}\|g_i^k - \bar g^k\|^2_{\alpha_i}.
	\end{align*}
	The second term of \eqref{eqn_split_2_non_iid} can be bounded by using the $\mu$-strong convexity of $\FM_{i}$:
	\begin{equation*}
		\frac{2}{N}\sum_{i=1}^{N}\langle f^* - g_{i}^{k, t}, \nabla \FM_{i}[g_{i}^{k, t}]\rangle_{\alpha_i} 
		\leq - \frac{2}{N}\sum_{i=1}^{N}\left(\FM_{i}[g_{i}^{k, t}] - \FM_{i}[f^*]\right) - \frac{2}{N}\sum_{i=1}^{N}\frac{\mu}{2}\|f^* - g_{i}^{k, t}\|_{\alpha_i}^2.
	\end{equation*}
	For the first term above, using the optimality of $f^*$, we have
	\begin{equation*}
		- \frac{2}{N}\sum_{i=1}^{N}\left(\FM_{i}[g_{i}^{k, t}] - \FM_{i}[f^*]\right) = - \frac{2}{N}\sum_{i=1}^{N}\left(\FM_{i}[g_{i}^{k, t}] - \FM_{i}[\bar g^{k, t}] + \FM_{i}[\bar g^{k, t}] - \FM_{i}[f^*]\right) \leq - \frac{2}{N}\sum_{i=1}^{N}\left(\FM_{i}[g_{i}^{k, t}] - \FM_{i}[\bar g^{k, t}]\right).
	\end{equation*}
	For the second term, we have
	\begin{equation*}
		\|f^* - g_{i}^{k, t}\|_{\alpha_i}^2 \leq 2\|f^* - \bar g^{k, t}\|_{\alpha_i}^2 + 2\|\bar g^{k, t} - g_{i}^{k, t}\|_{\alpha_i}^2.
	\end{equation*}
	Combine the above inequality to yield
	\begin{align}
		&\ \frac{2}{N}\sum_{i=1}^{N}\langle f^* - g_{i}^{k, t}, \nabla \FM_{i}[g_{i}^{k, t}]\rangle_{\alpha_i} \notag \\
		\leq &\ - \frac{2}{N}\sum_{i=1}^{N}\left(\FM_{i}[g_{i}^{k, t}] - \FM_{i}[\bar g^{k, t}]\right) - \frac{\mu}{4}\|f^* - \bar g^{k, t}\|_{\alpha}^2 + \frac{\mu}{2N}\sum_{i=1}^{N}\|g_{i}^{k, t} - \bar g^{k, t}\|_{\alpha_i} ^2  - \frac{\mu}{2N}\sum_{i=1}^{N}\|f^* - g_{i}^{k, t}\|_{\alpha_i} ^2\notag\\
		\leq &\ - \frac{2}{N}\sum_{i=1}^{N}\langle \nabla \FM_{i}[\bar g^{k, t}], g_{i}^{k, t} - \bar g^{k, t}\rangle_{\alpha_i} - \frac{\mu}{4}\|f^* - \bar g^{k, t}\|_{\alpha}^2 + \frac{\mu}{2N}\sum_{i=1}^{N}\|g_{i}^{k, t} - \bar g^{k, t}\|_{\alpha_i} ^2 - \frac{\mu}{2N}\sum_{i=1}^{N}\|f^* - g_{i}^{k, t}\|_{\alpha_i} ^2\notag\\
		\leq &\ \frac{1}{N}\sum_{i=1}^{N} \eta^{k, t}\|\nabla \FM_{i}[\bar g^{k, t}]\|_{\alpha_i} ^2 + \frac{1}{\eta^{k, t}}\|g_{i}^{k, t} - \bar g^{k, t}\|_{\alpha_i} ^2 - \frac{\mu}{4}\|f^* - \bar g^{k, t}\|_{\alpha}^2 + \frac{\mu}{2N}\sum_{i=1}^{N}\|g_{i}^{k, t} - \bar g^{k, t}\|_{\alpha_i} ^2 - \frac{\mu}{2N}\sum_{i=1}^{N}\|f^* - g_{i}^{k, t}\|_{\alpha_i} ^2\notag\\
		\leq &\ \eta^{k, t}\frac{16G^2}{\gamma^2} + (\frac{\mu}{2} + \frac{1}{\eta^{k, t}})\frac{1}{N}\sum_{i=1}^{N} \|g_{i}^{k, t} - \bar g^{k, t}\|_{\alpha_i} ^2 - \frac{\mu}{4}\|f^* - \bar g^{k, t}\|_{\alpha}^2 - \frac{\mu}{2N}\sum_{i=1}^{N}\|f^* - g_{i}^{k, t}\|_{\alpha_i} ^2 \label{eqn_analysis_i}
	\end{align}

	Note that $\|\bar g^{k, t} - f^t\|_{\alpha_i}^2 = \|\sum_{\kappa = 1}^{k-1} \eta^{\kappa, t}\bar{h}^{\kappa, t}\|^2_{\alpha_i}$ and $\eta^{t, \kappa}\leq 2\eta^{t, k}$ for $\kappa\leq k$.
	Therefore, $\frac{1}{N}\sum_{i=1}^{N}\|g_{i}^{k, t} - \bar g^{k, t}\|_{\alpha_i}^2$ can be bounded by (we use $\|\cdot\|_{\alpha_i}\leq\|\cdot\|_\infty$ in the following)
	\begin{align*}
		\frac{1}{N}\sum_{i=1}^{N}\|g_{i}^{k, t} - \bar g^{k, t}\|_{\alpha_i}^2 =&\ \frac{1}{N}\sum_{i=1}^{N}\|g_{i}^{k, t} - f^{t} + f^{t} - \bar g^{k, t}\|_{\alpha_i}^2 \\
		\leq &\ \frac{1}{N}\sum_{i=1}^{N}2\|g_{i}^{k, t} - f^{t}\|_{\alpha_i}^2 + 2\|f^{t} - \bar g^{k, t}\|_{\alpha_i}^2 \leq 36\sum_{\kappa=1}^{k-1}(\eta^{\kappa, t})^2G^2/\gamma^2 \leq 144(\eta^{k, t})^2K^2G^2/\gamma^2 \\
		=&\ O(\frac{(\eta^{k, t})^2K^2G^2}{\gamma^2}).
	\end{align*}

	Plug in the above results into  \eqref{eqn_expand_update_rule_non_iid} to yield (note that $\hat h_i^{k, t} - \nabla \FM_{i}[g_{i}^{k, t}] = h_i^{k, t} - \nabla \RM_{i}[g_{i}^{k, t}]$)
	\begin{align*}
		\|\bar g^{k+1, t} - f^*\|_\alpha^2 \leq & (1-\frac{\mu\eta^{k, t}}{4})\|\bar g^{k, t} - f^*\|_\alpha^2  
		+ O(\frac{(\eta^{k, t})^2K^2G^2}{\gamma^2}) + \frac{2\eta^{k, t}}{N}\sum_{i=1}^N\langle f^* - g_{i}^{k, t}, h_i^{k, t} - \nabla \RM_{i}[g_{i}^{k, t}]\rangle_{\alpha_i}  \\
		& + \eta^{k, t} \delta  - \frac{\mu}{2N}\sum_{i=1}^{N}\|f^* - g_{i}^{k, t}\|_{\alpha_i}^2
	\end{align*}
	Recall that  $\eta^{k,  t} = \frac{4}{\mu(tK + k + 1)}$ and multiply both sides by $(Kt + k + 1)$
	\begin{align*}
		(Kt + k + 1)\|\bar g^{k+1, t} - f^*\|^2 \leq&\ (Kt + k)\|\bar g^{k, t} - f^*\|^2 + O(\frac{K^2G^2}{\mu^2\gamma^2(Kt + k + 1)}) \\
		&\ + \frac{4}{\mu N}\sum_{i=1}^N\langle f^* - g_{i}^{k, t}, h_i^{k, t} - \nabla \RM_{i}[g_{i}^{k, t}]\rangle_{\alpha_i} + \frac{2\delta}{\mu}  - \frac{\mu}{2N}\sum_{i=1}^{N}\|f^* - g_{i}^{k, t}\|_{i}^2
	\end{align*}
	Sum from $k=1$ to $K$
	\begin{align*}
		(Kt + K+1)\|\bar g^{k+1, t} - f^*\|^2 \leq&\ (Kt+1)\|\bar g^{1, t} - f^*\|^2 + O\left(\frac{K^2G^2}{\mu^2\gamma^2} (\log(Kt+K+1) - \log(Kt+1)) \right)
		\\&\	 + \frac{4}{\mu N}\sum_{i=1}^N\sum_{k=1}^{K}\langle f^* - g_{i}^{k, t}, h_i^{k, t} - \nabla \RM_{i}[g_{i}^{k, t}]\rangle_{\alpha_i} + \frac{2\delta K}{\mu}  - \frac{2}{N}\sum_{k=1}^{K}\sum_{i=1}^{N}\|f^* - g_{i}^{k, t}\|_{\alpha_i} ^2
	\end{align*}
	For the first on the second line above, the following equality holds for the same reason as \eqref{eqn_residual_cancel} (note that the equality holds due to Lemma \ref{lemma_boundedness_u_i^{k, t}} and the fact that the inner product only depends on values on $\supp(\alpha_i)$)
	\begin{align*}
		 \sum_{k=1}^{K} \langle f^* - g_{i}^{k, t}, h_i^{k, t} - \nabla \RM_{i}[g_{i}^{k, t}]\rangle_{\alpha_i}
		= \frac{\mu}{2}\|f^* - g_{i}^{k, t}\|_{\alpha_i}^2 +  O( G^2\frac{(1-\gamma)^2}{\mu\gamma^2}) + O\left(G^2\frac{1-\gamma}{\gamma^2}(\log(KT+K) - \log(Kt))\right).
	\end{align*}
	Using this result, we obtain
	\begin{align*}
		(K(t+1) + 1)\| f^{t+1} - f^*\|_\alpha^2 \leq&\ (Kt+1)\| f^{t} - f^*\|_\alpha^2 + O\left(\frac{K^2G^2}{\mu^2\gamma^2} (\log(K(t+1)+1) - \log(Kt+1)) \right)
		\\&\	 + O( G^2\frac{(1-\gamma)^2}{\mu^2\gamma^2}) + O\left(G^2\frac{1-\gamma}{\mu\gamma^2}(\log(K(t+1)) - \log(Kt))\right) + \frac{2\delta K}{\mu}
	\end{align*}
	Sum from $t = 0$ to $T-1$ and use the non-expensiveness of the clip operation to yield
	\begin{align*}
		(KT + 1)\| f^{T} - f^*\|_\alpha^2 \leq&\ (k_0+1)\| f^{0} - f^*\|_\alpha^2 + O\left(\frac{K^2G^2\log(KT + 1)}{\mu^2\gamma^2} \right) + O( \frac{(1-\gamma)G^2\log(TK)}{\mu\gamma^2})
		\\&\	+ O( TG^2\frac{(1-\gamma)^2}{\mu^2\gamma^2}) + O(\frac{GTBK\omega}{\mu\gamma}),
	\end{align*}
	and hence
	\begin{equation}
		\|f^T - f^*\|_\alpha^2 = O({\frac{\|f^0 - f^*\|_\alpha^2}{KT}} + {\frac{KG^2 log(KT)}{T\mu^2\gamma^2}} + \frac{(1-\gamma)^2G^2}{K\mu^2\gamma^2} + {\frac{GB\omega}{\mu\gamma}}).
	\end{equation}
\end{proof}
\section{Proof of Theorem \ref{thm_ffgdl}}
We first prove \eqref{eqn_W1_variational_formulation}.

Recall \eqref{eqn_Wasserstein_distance} where the Wasserstein-1 distance between two discrete distribution $\mu$ and $\nu$ can be written as
\begin{align*}
	W_1(\mu, \nu) = \min_{\Pi\geq 0} \int_{\XM^2} \|x-y\|d\Pi(x, y), \quad s.t. \sharp_1 \Pi = \mu, \sharp_2 \Pi = \nu.
\end{align*}
Note that the constraint of the above problem implies that $\supp(\Pi)\subseteq \supp(\mu)\times\supp(\nu)$, otherwise $\Pi$ must be infeasible.
The above minimization problem is equivalent to 
\begin{align*}
	W_1(\mu, \nu) = \min_{\Pi\geq 0} \max_{\phi, \psi} \int_{\XM^2} \|x-y\|d\Pi(x, y) + \int_{\XM} \phi(x) d\left(\mu - \sharp_1 \Pi\right)(x) + \int_{\XM} \psi(y) d\left(\nu - \sharp_2 \Pi\right)(y).
\end{align*}
Change the order of min-max to max-min (due to convexity) and rearrange terms:
\begin{align*}
	W_1(\mu, \nu) = \max_{\phi, \psi} \left\{\min_{\Pi\geq 0}  \int_{\XM^2} \|x-y\| - \phi(x) - \psi(y) d\Pi(x, y)\right\} + \int_{\XM} \phi(x) d \mu(x) + \int_{\XM} \psi(y) d \nu(y).
\end{align*}
Therefore, we must have that for $(x, y) \in \supp(\Pi)\subseteq \supp(\mu)\times\supp(\nu)$, $\phi(x) + \psi(y) \leq \|x- y\|$ which leads to 
\begin{align*}
	W_1(\mu, \nu) = \max_{\phi, \psi} \int_{\XM} \phi(x) d \mu(x) + \int_{\XM} \psi(y) d \nu(y), \quad s.t.\ \phi(x) + \psi(y) \leq \|x- y\|, \forall (x, y)\in \supp(\mu)\times\supp(\nu).
\end{align*}

Now, recall that every local distribution $\alpha_i$ is described by a set of data feature points: $\alpha_i = \frac{1}{M}\sum_{j=1}^{M} \delta_{x_{i,j}}$, where $\delta_x$ is the Dirac distribution;
and the global distribution $\alpha$ is described by the union of all these points: $\alpha = \frac{1}{MN}\sum_{i,j=1}^{N,M} \delta_{x_{i,j}}$.
Clearly we have $\supp(\alpha_i)\subseteq\supp(\alpha)$. 
Using Proposition 6.1. of \citep{peyre2019computational} with $\XM = \supp(\alpha)$, the above bi-variable problem is equivalent to the single-variable problem
\begin{align*}
	W_1(\mu, \nu) = \max_{\phi} \int_{\XM} \phi(x) d \mu(x) - \int_{\XM} \phi(y) d \nu(y), \quad s.t.\ \|\phi(x) - \phi(y)\| \leq \|x- y\|, \forall (x, y)\in \supp(\mu)\times\supp(\nu).
\end{align*}

Recall that in \eqref{eqn_W1_variational_formulation}, $\phi(x) = f(x)g(x)$ and $\xi = \|g\|_{\lip}\|f\|_\infty + \|f\|_{\lip}\|g\|_\infty$.
One can check that
\begin{equation*}
	\|\phi(x)/\xi - \phi(y)/\xi\| = \|f(x)\left(g(x) - g(y)\right) + g(y)\left(f(x) - f(y) \right)\|/\xi \leq \left(\|g\|_{\lip}\|f\|_\infty + \|f\|_{\lip}\|g\|_\infty\right)/\xi\|x-y\| = \|x-y\|.
\end{equation*}
Therefore 
\begin{align*}
	W_1(\mu, \nu) \geq |\int_{\XM} f(x)g(x)/\xi d \alpha (x) - \int_{\XM} f(x)g(x)/\xi d \alpha_i(y)|,
\end{align*}
which is just \eqref{eqn_W1_variational_formulation}.

\begin{theorem}[Theorem \ref{thm_ffgdl} restated]
	Let $f^0$ be the global initializer function. Let $\omega = \frac{1}{N}\sum_{i=1}^{N}\mathrm{W}_1(\alpha, \alpha_i)$ and $G^2 = \frac{2L^2}{N^2}\sum_{i, s=1}^{N} \mathrm{W}_2^2(\alpha_s, \alpha_i) + 2B^2$.
	Consider the federated functional least square minimization problem \eqref{eqn_fed_functional_minimization_L}.
	Under Assumption \ref{ass_bounded_value_h}, and supposing the weak learning oracle $\mathrm{WO}_\alpha^{\lip}$ satisfies \eqref{eqn_weak_oracle_contraction_l1} and \eqref{eqn_weak_oracle_contraction_l2} with constant $\gamma$, for a certain choice of step sizes $\eta^{k,t}$, the output of \ffgdl satisfies
	\begin{equation}
		\|f^T - f^*\|^2 = O\left({\frac{\|f^0 - f^*\|^2}{KT}} + {\frac{K\left(L(LD+B)\omega+ G^2+B^2\right) log(KT)}{T\mu^2\gamma^2}} + {\frac{(1-\gamma)^2B^2}{\mu^2\gamma^2 K}}+ \frac{L(LD+B)\omega}{\gamma^2\mu}\right).
	\end{equation}
\end{theorem}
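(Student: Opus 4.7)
My plan is to mirror the proofs of \ffgd\ and \ffgdc\ but upgrade the $\mathcal{L}^2$/$\mathcal{L}^\infty$ control of iterates to \emph{Lipschitz} control, which is what permits invoking the $\mathrm{W}_1$ variational bound \eqref{eqn_W1_variational_formulation} to cope with feature heterogeneity. I would organize the argument in three blocks.

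\emph{Block 1 (invariants on $\Delta_i^k$, $h_i^k$, $g_i^{k,t}$).} By induction on $k$, I would establish $\|\Delta_i^k\|_{\lip}\le\tfrac{1-\gamma}{\gamma}L$ and $\|h_i^k\|_{\lip}\le\tfrac{L}{\gamma}$, and similarly $\|\Delta_i^k\|_{\alpha_i,\infty}\le\tfrac{1-\gamma}{\gamma}B$ and $\|h_i^k\|_{\alpha_i,\infty}\le\tfrac{2-\gamma}{\gamma}B$. The oracle contractions \eqref{eqn_weak_oracle_contraction_l1}, \eqref{eqn_weak_oracle_contraction_l2} combined with $\Delta_i^0=0$, $\|u_i\|_{\lip}\le L$, and $|u_i|\le B$ on $\supp(\alpha_i)$ give
\[
\|\Delta_i^k\|_{\lip} \le (1-\gamma)\bigl(\|\Delta_i^{k-1}\|_{\lip}+L\bigr),\qquad \|\Delta_i^k\|_{\alpha_i,\infty}\le (1-\gamma)\bigl(\|\Delta_i^{k-1}\|_{\alpha_i,\infty}+B\bigr),
\]
whose fixed points yield the announced bounds. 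Because $g_i^{k+1,t}=(1-\eta^{k,t})g_i^{k,t}+\eta^{k,t}h_i^k$ is a convex combination once $\eta^{k,t}\le 1$, the Lipschitz constant of $g_i^{k,t}$ is preserved at $L/\gamma$, so $g_i^{k,t}\in\mathcal{C}$ for all $k,t$. Moreover, values of $g_i^{k,t}$ are $O(B/\gamma)$ on $\supp(\alpha_i)$ and, using the diameter bound in Assumption \ref{ass_bounded_value_h}, $O((B+LD)/\gamma)$ on $\supp(\alpha)$.

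\emph{Block 2 (one-step recursion).} Let $\bar g^{k,t}=\tfrac{1}{N}\sum_i g_i^{k,t}$ and $\bar d^{k,t}=\tfrac{1}{N}\sum_i(g_i^{k,t}-h_i^k)$. Expanding $\|\bar g^{k+1,t}-f^*\|_\alpha^2$ and splitting the cross term following the \ffgdc\ proof produces (i) a client-drift term $\tfrac{2}{N}\sum_i\langle g_i^{k,t}-\bar g^{k,t},g_i^{k,t}-h_i^k\rangle_{\alpha_i}$ bounded via local-global deviation $O(\eta^{k,t}KL/\gamma)$; (ii) a descent term $\tfrac{2}{N}\sum_i\langle f^*-g_i^{k,t},g_i^{k,t}-u_i\rangle_{\alpha_i}$ handled by strong convexity of the square loss; (iii) a residual term $\tfrac{2}{N}\sum_i\langle f^*-g_i^{k,t},u_i-h_i^k\rangle_{\alpha_i}$; and (iv) a heterogeneity discrepancy $\tfrac{2}{N}\sum_i\bigl(\langle\bar g^{k,t}-f^*,g_i^{k,t}-h_i^k\rangle_{\alpha_i}-\langle\bar g^{k,t}-f^*,g_i^{k,t}-h_i^k\rangle_\alpha\bigr)$. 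Piece (iv) is where \eqref{eqn_W1_variational_formulation} is invoked: by Block 1 the prefactor $\xi$ there is $O\bigl(L(B+LD)/\gamma^2\bigr)$, so (iv) contributes at most $O\bigl(L(B+LD)\omega/\gamma^2\bigr)$ per step.

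\emph{Block 3 (telescoping and conclusion).} Term (iii) telescopes as in \eqref{eqn_residual_cancel}: rewriting $u_i-h_i^k=-(\Delta_i^k-\Delta_i^{k-1})$ via the residual update and summing by parts in $k$ leaves only a boundary contribution (absorbed by strong convexity) plus residual pieces of order $\tfrac{(1-\gamma)^2 B^2}{\mu\gamma^2}$ and $\tfrac{(1-\gamma)B^2\log(TK)}{\gamma^2}$. Multiplying the resulting one-step recursion by $tK+k+1$ and telescoping in $k$ then $t$, with $\eta^{k,t}=4/(\mu(tK+k+1))$, gives the stated rate after dividing by $KT$. The main obstacle is Block 1: without the Lipschitz invariant on $\Delta_i^k$, the Lipschitz constant of $h_i^k$ could grow with $k$, making $\xi$ in \eqref{eqn_W1_variational_formulation} unbounded and destroying the heterogeneity bound. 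The residual update is designed precisely so that $\Delta_i^k$ is (up to sign) an oracle error and is therefore contracted by $(1-\gamma)$, which is exactly what keeps both norms of $\Delta_i^k$ uniformly bounded.
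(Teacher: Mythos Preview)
Your three-block plan is essentially the paper's own proof: Block~1 matches the Lipschitz and $\LM^\infty(\alpha_i)$ invariants the paper establishes before the main recursion; Block~2 is the same four-way split of the cross term (the paper's \eqref{eqn_split_2_non_iid_l}); and Block~3 is the same Abel-summation/telescoping argument.

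There is one ingredient you leave out that is needed to recover the statement \emph{as written}. To control the drift $\tfrac{1}{N}\sum_i\|g_i^{k,t}-\bar g^{k,t}\|_{\alpha_i}^2$ (and the term $\|\nabla\FM_i[\bar g^{k,t}]\|_{\alpha_i}$ that appears when you unpack piece~(ii)), you need to bound the \emph{global} averages $\|\bar g^{k,t}\|_{\alpha_s}$ and $\|\bar h^{k}\|_{\alpha_s}$ under each \emph{local} norm. The paper does this via a $\mathrm{W}_2$ optimal-transport argument: for $x\in\supp(\alpha_s)$ one couples $g_i^{k,t}(x)$ to its values on $\supp(\alpha_i)$ through a $\mathrm{W}_2$-optimal plan between $\alpha_s$ and $\alpha_i$ and uses the $L/\gamma$-Lipschitz invariant, yielding $\|\bar g^{k,t}\|_{\alpha_s}^2\le G_s^2/\gamma^2$ with $G_s^2=\tfrac{2L^2}{N}\sum_i \mathrm{W}_2^2(\alpha_s,\alpha_i)+2B^2$. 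This is precisely where the constant $G^2$ in the theorem originates. Your sup-norm route (values are $O((B+LD)/\gamma)$ on $\supp(\alpha)$) would also close the argument, but it produces $(B+LD)^2$ in place of $G^2$ in the second term, which is a different and generally looser bound than the one stated. Relatedly, your claimed drift estimate ``$O(\eta^{k,t}KL/\gamma)$'' is too coarse: after the Young split the drift term contributes $O\bigl((\eta^{k,t})^2K^2G^2/\gamma^2\bigr)$ per step, and this is what feeds the $KG^2\log(KT)/(T\mu^2\gamma^2)$ piece of the final rate.
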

\begin{proof}
	For a fixed communication round $t$, we define a hypothetical sequence $\bar g^{k, t} = \frac{1}{N}\sum_{i=1}^{N} g_{i}^{k, t}$.
	We also define $\bar h^{k} = \frac{1}{N}\sum_{i=1}^{N} h_{i}^{k}$.
	Note that $\bar{g}^{1, t} = f^{t}$.

	
	From the construction \eqref{eqn_lipschitz_extension}, we have $\|u_i\|_\lip\leq L$. Additionally, with property \eqref{eqn_weak_oracle_contraction_l2} of the oracle, the residual is inductively proved to be $\frac{(1-\gamma)}{\gamma}L$-Lipschitz continuous as follows. For the base case, note that $\|\Delta_i^{0}\|_\lip \equiv 0$. Now, assume that for some $k \geq 1$, we have $\|\Delta_i^{k-1}\|_\lip \leq \frac{(1-\gamma)}{\gamma}L$. Then
	\begin{align*}
		\|\Delta_i^{k}\|_\lip\! =\! \|  h_{i}^{k} - (u_i - \Delta_{i}^{k-1})\|_\lip\! \leq (1-\gamma)\|u_i - \Delta_{i}^{k-1}\|_\lip
		\leq (1-\gamma)(\|\Delta_i^{k-1}\|_\lip + L)
		\leq \tfrac{(1-\gamma)}{\gamma}L.
	\end{align*}
	Therefore, the query to the weak learning oracle is also Lipschitz continuous: $\|\Delta_{i}^{k-1} - u_i\|_\lip\leq L/{\gamma}$, and so is the output, $\|h_i^{k}\|_\lip\leq L/{\gamma}$. Now, the update rule of $g_i^{k, t}$ (line \ref{eqn_update_variable_L}), 
	and the boundedness of $\|h_i^{k}\|_\lip$ imply the boundedness of $\|g_i^{k, t}\|_\lip$ for sufficiently small $\eta^{k, t}$:
	\begin{equation*}
		\begin{aligned}
			\|g_i^{k+1, t}\|_\lip =&\ \|(1-\eta^{k, t}) g_i^{k, t} +  \eta^{k, t} h_i^{k}\|_\lip  
			\leq (1-\eta^{k, t}) \|g_i^{k, t}\|_\lip + {L}/{\gamma}\cdot\eta^{k, t}  \\ 
			\Rightarrow&\ \|g_i^{k, t}\|_\lip \leq {L}/{\gamma} \ (\text{via induction using} \|g_i^{1, t}\|_\lip\leq {L}/{\gamma}).
		\end{aligned}
	\end{equation*}

	\begin{lemma} \label{lemma_appendix_h}
		The residual $\Delta_i^k$ and the output $h_i^k$ of the oracle $\mathrm{WO}_{\alpha_i}^\lip$ are bounded under the $\LM^{\infty}(\alpha_i)$ norm: $$\|\Delta_i^k\|_{\alpha_i, \infty} \leq \frac{(1-\gamma) B}{\gamma} \text{ and } \|h_i^k\|_{\alpha_i, \infty} \leq {B}/{\gamma}.$$
	\end{lemma}
	\begin{proof}
		From property \eqref{eqn_weak_oracle_contraction_l1} of the weak leaner oracle $\mathrm{WO}_{\alpha_i}^\lip$, we have
		\begin{equation}
			\|\Delta_i^{k}\|_{\alpha_i, \infty} = \|\Delta_i^{k-1} - u_i + h_i^k\|_{\alpha_i, \infty} \leq (1-\gamma)\|\Delta_i^{k-1} - u_i\|_{\alpha_i, \infty} \leq (1-\gamma)\|\Delta_i^{k-1}\|_{\alpha_i, \infty} + (1-\gamma) B,
		\end{equation}
		where the second inequality uses the boundedness of $y_{i, j} = f_i^*(x_{i,j})$ in Assumption \ref{ass_bounded_value_h}.
		We hence have
		\begin{equation}
			\|\Delta_i^{k}\|_{\alpha_i, \infty} - \frac{(1-\gamma) B}{\gamma} \leq (1-\gamma)\left(\|\Delta_i^{k-1}\|_{\alpha_i, , \infty}  - \frac{(1-\gamma) B}{\gamma} \right) \Rightarrow \|\Delta_i^{k}\|_{\alpha_i, \infty} \leq \frac{(1-\gamma) B}{\gamma}.
		\end{equation}
		The boundedness of $\|h_i^k\|_{\alpha_i, \infty}$ can be obtained from the above inequality: $\|h_i^k\|_{\alpha_i, \infty} \leq \|u_i\|_{\alpha_i, \infty} + \|\Delta_i^{k}\|_{\alpha_i, \infty} \leq {B}/{\gamma}$.
	\end{proof}
	\begin{lemma} \label{lemma_appendix_g}
		The local variable function $g_i^{k, t}$ is bounded under the $\LM^{\infty}(\alpha_i)$ norm: $\|g_i^{k, t}\|_{\alpha_i, \infty} \leq B/\gamma$.
	\end{lemma}
	\begin{proof}
	Using the update rule in line \ref{eqn_update_variable_L} of Algorithm \ref{algorithm_ffgd_L}, we have
	\begin{equation*}
		\|g_i^{k+1, t}\|_{\alpha_i, \infty} = \|(1-\eta^{k, t}) g_i^{k, t} + \eta^{k, t} h_i^{k}\|_{\alpha_i, \infty} \leq (1-\eta^{k, t}) \|g_i^{k, t}\|_{\alpha_i, \infty} + \eta^{k, t} \|h_i^{k}\|_{\alpha_i, \infty} \leq (1-\eta^{k, t}) \|g_i^{k, t}\|_{\alpha_i, \infty} + \eta^{k, t} B/\gamma.
	\end{equation*}
	Inductively, we have the boundedness of $\|g_i^{k+1, t}\|_{\alpha_i, \infty}$
	\begin{equation}
		\|g_i^{k+1, t}\|_{\alpha_i, \infty} - B/\gamma \leq (1-\eta^{k, t}) \left(\|g_i^{k, t}\|_{\alpha_i, \infty} - B/\gamma\right) \Rightarrow \|g_i^{k, t}\|_{\alpha_i, \infty} \leq B/\gamma.
	\end{equation}
	\end{proof}
	\begin{lemma}
		The local variable function $g_i^{k, t}$, the global average function $\bar g^{k, t}$, and the output of the oracle $\mathrm{WO}_{\alpha_i}^\lip$ are $(LD+B)/\gamma$-bounded under the $\LM^{\infty}(\alpha)$ norm.
	\end{lemma}
	\begin{proof}
		From Lemmas \ref{lemma_appendix_g} and \ref{lemma_appendix_h}, $g_i^{k, t}$, $\bar g^{k, t}$ and $h_i^{k, t}$ are $B/\gamma$ on the support of $\alpha_i$.
		Using Assumption \ref{ass_bounded_value_h} together with the $L/\gamma$-Lipschitz continuity of $g_i^{k, t}$, $\bar g^{k, t}$ and $h_i^{k, t}$, we have the results.
	\end{proof}
	While the above lemma implies the boundedness of $\bar g^{k, t}$ under the $\LM^2(\alpha)$ norm, we can tighten the analysis with the following lemma.
	Important, the following result does not depend on the constant $D$ in Assumption \ref{ass_bounded_value_h}.
	\begin{lemma}
		The hypothetical global sequences $\bar g^{k, t}$ and $\bar h^{k, t}$ are bounded under the local norm $\|\cdot\|_{\alpha_s}$:
		Denote $G_s^2 = \frac{2L^2}{N}\sum_{i=1}^{N} \mathrm{W}_2^2(\alpha_s, \alpha_i) + 2B^2$. We have that $\|\bar g^{k, t}\|_{\alpha_s} \leq G_s^2/\gamma^2$ and $\|\bar h^{k, t}\|_{\alpha_s} \leq G_s^2/\gamma^2$, where $\mathrm{W}_2(\alpha_s, \alpha_i)$ is the Wasserstein-2 distance between measures $\alpha_i$ and $\alpha_s$.
		Consequently, we have $\bar g^{k, t}$ and $\bar h^{k, t}$ are $G$-bounded under the $\LM^2(\alpha)$ norm, where we further denote $G^2 = \frac{1}{N}\sum_{s=1}^{N} G_s^2$.
	\end{lemma}
	\begin{proof}
	Let $\Pi^{s, i} \in \RBB_+^{M\times M}$ be the Wasserstein-2 optimal transport plan (matrix) between $\alpha_s$ and $\alpha_i$. The entry $\Pi_{j_1, j_2}^{s, i}$ denotes the portion of mass that should be transported from $x_{s, j_1}\in\supp(\alpha_s)$ to $x_{i, j_2}\in\supp(\alpha_i)$.
	Note that in $\alpha_i$ and $\alpha_s$, the entries $x_{s, j_1}$ and $x_{i, j_2}$ have uniform weight $1/M$.
	As a transport plan, any row or column of $\Pi^{s, i}$ sums up to $1/M$.
	We now show that $\|\bar g^{k, t}\|_{\alpha_{s}}$ is bounded using the Lipschitz continuity of $g^{k, t}_i$.
	\begin{equation} \label{eqn_norm_of_average_function}
		\|\bar g^{k, t}\|^2_{\alpha_{s}} = \frac{1}{M} \sum_{j=1}^{M}\|\frac{1}{N}\sum_{i=1}^{N} g^{k, t}_i(x_{s, j})\|^2 \leq \frac{1}{M} \sum_{j=1}^{M}\frac{1}{N}\sum_{i=1}^{N}\| g^{k, t}_i(x_{s, j})\|^2 = \frac{1}{N}\sum_{i=1}^{N} \frac{1}{M} \sum_{j=1}^{M}\| g^{k, t}_i(x_{s, j})\|^2.
	\end{equation}
	We analyze the summand as follows.
	\begin{align*}
		\frac{1}{M} \sum_{j=1}^{M}\| g^{k, t}_i(x_{s, j})\|^2 =&\ \sum_{j_1=1}^{M} \sum_{j_2=1}^{M} \Pi^{s, i}_{j_1, j_2}\| g^{k, t}_i(x_{s, j_1}) - g^{k, t}_i(x_{i, j_2}) + g^{k, t}_i(x_{i, j_2}) \|^2 \\
		\leq&\  \sum_{j_1=1}^{M} \sum_{j_2=1}^{M} \Pi^{s, i}_{j_1, j_2} \left( 2 \| g^{k, t}_i(x_{s, j_1}) - g^{k, t}_i(x_{i, j_2})\|^2 + 2 \|g^{k, t}_i(x_{i, j_2}) \|^2\right) \\
		\leq&\  \sum_{j_1=1}^{M} \sum_{j_2=1}^{M} \Pi^{s, i}_{j_1, j_2} \left( 2 L^2/\gamma^2\cdot \|x_{s, j_1} - x_{i, j_2}\|^2 + 2\|g^{k, t}_i(x_{i, j_2}) \|^2 \right) \\
		=&\ 2L^2/\gamma^2\cdot \mathrm{W}_2^2(\alpha_s, \alpha_i) + \frac{2}{M} \sum_{j_2=1}^{M} \|g^{k, t}_i(x_{i, j_2}) \|^2 \\
		=&\ 2L^2/\gamma^2\cdot \mathrm{W}_2^2(\alpha_s, \alpha_i) + 2 \|g^{k, t}_i\|_{\alpha_i}^2,
	\end{align*}
	where we used the definition of the Wasserstein-2 distance.
	Therefore, \eqref{eqn_norm_of_average_function} can be bounded by
	\begin{equation}
		\|\bar g^{k, t}\|^2_{\alpha_{s}} \leq \frac{1}{N}\sum_{i=1}^{N} 2L^2/\gamma^2\cdot \mathrm{W}_2^2(\alpha_s, \alpha_i) + 2 \|g^{k, t}_i\|_{\alpha_i}^2 \leq \frac{2L^2}{N\gamma^2}\sum_{i=1}^{N} \mathrm{W}_2^2(\alpha_s, \alpha_i) + 2B^2/\gamma^2.
	\end{equation}
	Following the similar proof above, we have the same bound for $\|\bar h^{k, t}\|_{\alpha_{s}}$ as $h_i^{k, t}$ is also $B/\gamma$- bounded and $L/\gamma$-Lipschitz continuous:
	\begin{equation}
		\|\bar h^{k, t}\|^2_{\alpha_{s}} \leq \frac{2L^2}{N\gamma^2}\sum_{i=1}^{N} \mathrm{W}_2^2(\alpha_s, \alpha_i) + 2B^2/\gamma^2.
	\end{equation}
\end{proof}
	
	We now present the convergence analysis of Algorithm \ref{algorithm_ffgd_L}.
	From the update rule in line \ref{eqn_update_variable_L} of Algorithm \ref{algorithm_ffgd_L}, we write
	\begin{equation} \label{eqn_expand_update_rule_non_iid_l}
		\|\bar g^{k+1, t} - f^*\|_{\alpha}^2 = \|\bar g^{k, t} - f^*\|_{\alpha}^2 + (\eta^{k, t})^2\|\bar g^{k, t} - \bar h^{k}\|_{\alpha}^2 - 2\eta^{k, t}\langle \bar g^{k, t} - f^*, \bar g^{k, t} - \bar h^{k}\rangle_{\alpha}.
	\end{equation}
	To bound the second term, note that 
	\begin{equation}
		\|\bar g^{k, t} - \bar h^{k}\|_{\alpha}^2 \leq \frac{1}{N}\sum_{i=1}^{N} \|g_i^{k, t} - h_{i}^{k}\|_{\alpha}^2.
	\end{equation}
	For each individual term on the R.H.S. of the above inequality, we have
	\begin{equation}
		\|g_i^{k, t} - h_{i}^{k}\|_{\alpha}^2 = \left(\|g_i^{k, t} - h_{i}^{k}\|_{\alpha}^2 - \|g_i^{k, t} - h_{i}^{k}\|_{\alpha_i}^2\right) + \|g_i^{k, t} - h_{i}^{k}\|_{\alpha_i}^2 \leq O(L(LD+B)/\gamma^2\cdot\mathrm{W}_1(\alpha, \alpha_i)) + O(B^2/\gamma^2),
	\end{equation} 
	where we use the variational formulation \eqref{eqn_W1_variational_formulation} of the Wasserstein-1 distance as well as the Lipschitz continuity and boundedness of $g_i^{k, t}$ and $h_i^{k, t}$ under the $\LM^2(\alpha)$ norm.
	Therefore the second term is bounded by
	\begin{equation}
		\|\bar g^{k+1, t} - f^*\|_{\alpha}^2 \leq O(L(LD+B)/\gamma^2\cdot\omega) + O(B^2/\gamma^2), \quad \omega = \frac{1}{N}\sum_{i=1}^{N} \mathrm{W}_1(\alpha, \alpha_i).
	\end{equation}
	
	The third term of \eqref{eqn_expand_update_rule_non_iid_l} can be split as
	\begin{align}
		&\ - 2\langle \bar g^{k, t} - f^*, \bar g^{k, t} - \bar h^{k}\rangle_{\alpha}	\\
		=&\ -\frac{2}{N}\sum_{i=1}^{N}\langle \bar g^{k, t} - f^*, g_i^{k, t} -  h_i^{k}\rangle_{\alpha_i} + \left(\langle \bar g^{k, t} - f^*, g_i^{k, t} -  h_i^{k}\rangle_{\alpha}- \langle \bar g^{k, t} - f^*, g_i^{k, t} -  h_i^{k}\rangle_{\alpha_i}\right) \notag\\
		=&\ \frac{2}{N}\sum_{i=1}^{N}\langle  g_{i}^{k, t} - \bar g^{k, t}, g_i^{k, t} -  h^{k}_i\rangle_{\alpha_i} 
		+ \langle f^* - g_{i}^{k, t}, g_{i}^{k, t} - u_i\rangle_{\alpha_i}
		+ \langle f^* - g_{i}^{k, t}, u_i - h_{i}^{k, t}\rangle_{\alpha_i} \notag \\
		&\ \qquad + \left(\langle \bar g^{k, t} - f^*, g_i^{k, t} -  h_i^{k}\rangle_{\alpha}- \langle \bar g^{k, t} - f^*, g_i^{k, t} -  h_i^{k}\rangle_{\alpha_i}\right).  \label{eqn_split_2_non_iid_l}
	\end{align}
	The last term of R.H.S. of the above equality can be bounded using the Lipschitz continuity and the boundedness of $(\bar g^{k, t} - f^*)$ and $(g_i^{k, t} -  h_i^{k})$ and the variational formulation of $\mathrm{W}_1$ (see \eqref{eqn_W1_variational_formulation}):
	\begin{equation}
		\frac{1}{N}\sum_{i=1}^{N}\left(\langle \bar g^{k, t} - f^*, g_i^{k, t} -  h_i^{k}\rangle_{\alpha}- \langle \bar g^{k, t} - f^*, g_i^{k, t} -  h_i^{k}\rangle_{\alpha_i}\right) = O(L(LD+B)/\gamma^2\cdot\omega), \quad \omega = \frac{1}{N}\sum_{i=1}^{N} \mathrm{W}_1(\alpha, \alpha_i).
	\end{equation}
	The first term of of the R.H.S. of \eqref{eqn_split_2_non_iid_l} can be bounded by
	\begin{align*}
		&\ \frac{2}{N}\sum_{i=1}^{N}\langle  g_{i}^{k, t} - \bar g^{k, t}, g_i^{k, t} -  h_i^{k}\rangle_{\alpha_i} \leq \frac{1}{N}\sum_{i=1}^{N}\eta^{k, t} \|g_i^{k, t} -  h_i^{k}\|_{\alpha_i}^2 + \frac{1}{\eta^{k, t}}\|g_i^{k, t} - \bar g^{k, t}\|^2_{\alpha_i} \\
		\leq&\ O(\eta^{k, t} L(LD+B)/\gamma^2\cdot\mathrm{W}_1(\alpha, \alpha_i)) + O( \eta^{k, t} B^2/\gamma^2) + \frac{1}{\eta^{k, t}}\cdot\frac{1}{N}\sum_{i=1}^{N}\|g_i^{k, t} - \bar g^k\|^2_{\alpha_i}.
	\end{align*}

	The second term of \eqref{eqn_split_2_non_iid_l} can be bounded by using the $\mu$-strong convexity of $\RM_{i}$ (note that $\mu=1$ and we use $\nabla \RM_{i}[g_{i}^{k, t}]$ to denote $(g_i^{k, t} - u_i)$ as they are identical on the support of $\alpha_i$).
	The following inequality holds for the same reason as \eqref{eqn_analysis_i}.
	\begin{align*}
		&\ \frac{2}{N}\sum_{i=1}^{N}\langle f^* - g_{i}^{k, t}, \nabla \RM_{i}[g_{i}^{k, t}]\rangle_{\alpha_i} \\
		\leq &\ O\left(\eta^{k, t}G^2/\gamma^2\right) + (\frac{\mu}{2} + \frac{1}{\eta^{k, t}})\frac{1}{N}\sum_{i=1}^{N} \|g_{i}^{k, t} - \bar g^{k, t}\|_{\alpha_i} ^2 - \frac{\mu}{4}\|f^* - \bar g^{k, t}\|^2 - \frac{\mu}{2N}\sum_{i=1}^{N}\|f^* - g_{i}^{k, t}\|_{\alpha_i} ^2
	\end{align*}


Note that $\|f^t - \bar g^{k, t}\|_{\alpha_i}^2 = \|\sum_{\kappa = 1}^k \eta^{\kappa, t}\left(\bar g^{\kappa, t} - \bar{h}^{\kappa}\right)\|^2_{\alpha_i}$ and $\eta^{t, \kappa}\leq 2\eta^{t, k}$ for $\kappa\leq k$.
Therefore, $\frac{1}{N}\sum_{i=1}^{N}\|g_{i}^{k, t} - \bar g^{k, t}\|_{\alpha_i}^2$ can be bounded by
\begin{align*}
	\frac{1}{N}\sum_{i=1}^{N}\|g_{i}^{k, t} - \bar g^{k, t}\|_{\alpha_i}^2 =&\ \frac{1}{N}\sum_{i=1}^{N}\|g_{i}^{k, t} - f^{t} + f^{t} - \bar g^{k, t}\|_{\alpha_i}^2 \\
	\leq &\ \frac{1}{N}\sum_{i=1}^{N}2\|g_{i}^{k, t} - f^{t}\|_{\alpha_i}^2 + 2\|f^{t} - \bar g^{k, t}\|_{\alpha_i}^2 = O(\sum_{\kappa=1}^{k}(\eta^{\kappa, t})^2G^2/\gamma^2) = O(\eta^{k, t})^2K^2G^2/\gamma^2 \\
	=&\ O(\frac{(\eta^{k, t})^2K^2G^2}{\gamma^2}).
\end{align*}

Plug in the above results into  \eqref{eqn_expand_update_rule_non_iid_l} to yield 
\begin{align*}
	\|\bar g^{k+1, t} - f^*\|^2_\alpha \leq &\ (1-\frac{\mu\eta^{k, t}}{2})\|\bar g^{k, t} - f^*\|^2_\alpha  
	+ O\left(\frac{(\eta^{k, t})^2K^2}{\gamma^2}\left(L(LD+B)\omega+ G^2+B^2\right)\right) + \frac{2\eta^{k, t}}{N}\sum_{i=1}^N\langle f^* - g_{i}^{k, t}, u_i - h_i^{k}\rangle_{\alpha_i} \\
	&\ + O(\eta^{k, t}L(LD+B)/\gamma^2\cdot\omega) - \frac{\mu\eta^{k, t}}{2N}\sum_{i=1}^{N}\|f^* - g_{i}^{k, t}\|_{\alpha_i}^2
\end{align*}
Set  $\eta^{k,  t} = \frac{4}{\mu(Kt + k + 1)}$ and multiply both sides by $(Kt + k + 1)$
\begin{align*}
	(Kt + k + 1)\|\bar g^{k+1, t} - f^*\|^2 \leq&\ (Kt + k)\|\bar g^{k, t} - f^*\|^2 + O(\frac{K^2\left(L(LD+B)\omega+ G^2+B^2\right)}{\mu^2\gamma^2(Kt + k + 1)}) \\
	&\ + \frac{4}{\mu N}\sum_{i=1}^N\langle f^* - g_{i}^{k, t}, u_i - h_i^{k}\rangle_{\alpha_i} +
	O\left((L(LD+B)\cdot\omega/(\gamma^2\mu)\right) - \frac{1}{N}\sum_{i=1}^{N}\|f^* - g_{i}^{k, t}\|_{i}^2
\end{align*}
Sum from $k=1$ to $K$
\begin{align*}
	(Kt + K+1)\|\bar g^{k+1, t} - f^*\|^2 \leq&\ (Kt+1)\|\bar g^{1, t} - f^*\|^2 + O\left(\frac{K^2\left(L(LD+B)\omega+ G^2+B^2\right)}{\mu^2\gamma^2} (\log(Kt+K+1) - \log(Kt+1)) \right)
	\\&\	 + \frac{4}{\mu N}\sum_{i=1}^N\sum_{k=1}^{K}\langle f^* - g_{i}^{k, t}, u_i - h_i^{k}\rangle_{\alpha_i} + O\left(KL(LD+B)\cdot\omega/(\gamma^2\mu)\right) - \frac{1}{N}\sum_{k=1}^K\sum_{i=1}^{N}\|f^* - g_{i}^{k, t}\|_{\alpha_i} ^2
\end{align*}
We now focus on the last term (note that the equality holds due to Lemma \ref{lemma_boundedness_u_i^{k, t}} and the fact that the inner product only depends on values on $\supp(\alpha_i)$)
\begin{align*}
	&\ \sum_{k=1}^{K} \langle f^* - g_{i}^{k, t}, u_i - h_i^{k}\rangle_{\alpha_i}\\
	=&\ \sum_{k=1}^{K}\langle f^* - g_{i}^{k, t}, u_i - (u_i - \Delta_{i}^{k-1} + \Delta_i^k)\rangle_{\alpha_i} = \sum_{k=1}^{K}\langle f^* - g_{i}^{k, t}, \Delta_{i}^{k-1} - \Delta_i^k\rangle_{\alpha_i}\\
	=&\ \sum_{k=1}^{K}\langle f^* - g_{i}^{k, t}, - \Delta_{i}^{k}\rangle_{\alpha_i} + \sum_{k=2}^{K}\langle f^* - g_{i}^{k, t}, \Delta_{i}^{k-1}\rangle_{\alpha_i} + \langle f^* - g_{i}^{1}, \Delta_{i}^{0}\rangle_{\alpha_i} &&\& \Delta_{i}^0 = 0\\
	=&\ \sum_{k=1}^{K}\langle f^* - g_{i}^{k, t}, - \Delta_{i}^{k}\rangle_{\alpha_i} + \sum_{k=1}^{K-1}\langle f^* - g_{i}^{k+1}, \Delta_{i}^{k}\rangle_{\alpha_i} \\
	=&\ \sum_{k=1}^{K}\langle f^* - g_{i}^{k, t}, - \Delta_{i}^{k}\rangle_{\alpha_i} + \sum_{k=1}^{K-1}\langle f^* - g_{i}^{k, t}, \Delta_{i}^{k}\rangle_{\alpha_i} + \sum_{k=1}^{K-1}\langle \eta^{k}_{t} \left(g_i^{k, t} - h_i^k\right), \Delta_{i}^{k}\rangle_{\alpha_i} \\
	=&\ \langle f^* - g_{i}^{K, t}, -\Delta_{i}^{K}\rangle_{\alpha_i} + \sum_{k=1}^{K-1}\langle \eta^{k}_{t} \left(g_i^{k, t} - h_i^k\right), \Delta_{i}^{k}\rangle_{\alpha_i} \\
	\leq &\ \mu\|f^* - g_i^{k, t}\|_{\alpha_i}^2 + O((1-\gamma)^2B^2/(\gamma^2\mu)) + O(G^2\frac{1-\gamma}{\gamma^2})\sum_{k=1}^{K-1} \eta^{k, t}\\
	 =&\ \mu\|f^* - g_i^{k, t}\|_{\alpha_i}^2 +  O((1-\gamma)^2B^2/(\gamma^2\mu)) + O\left(G^2\frac{1-\gamma}{\gamma^2}(\log(KT+K) - \log(Kt))\right).
\end{align*}
Using this result, we obtain
\begin{align*}
	&\ (K(t+1) + 1)\| f^{t+1} - f^*\|^2 \\
	\leq&\ (Kt+1)\| f^{t} - f^*\|^2 + O\left(\frac{K^2\left(L(LD+B)\omega+ G^2+B^2\right)}{\mu^2\gamma^2} (\log(K(t+1)+1) - \log(Kt+1)) \right)
	\\&\	 + \frac{2}{\mu} (O((1-\gamma)^2B^2/(\gamma^2\mu)) + O\left(G^2\frac{1-\gamma}{\gamma^2}(\log(K(t+1)) - \log(Kt))\right)) + O\left(KL(LD+B)\cdot\omega/(\gamma^2\mu)\right)
\end{align*}
Sum from $t = 0$ to $T-1$ and use the non-expensiveness of the projection operation (note that $\CM$ is a convex set) to yield
\begin{align*}
	(KT + 1)\| f^{T} - f^*\|^2 \leq&\ (k_0+1)\| f^{0} - f^*\|^2 + O\left(\frac{K^2\left(L(LD+B)\omega+ G^2+B^2\right)\log(KT + 1)}{\mu^2\gamma^2} \right) + O(\frac{(1-\gamma)^2TB^2}{\mu^2\gamma^2} \\
	&\ + \frac{(1-\gamma)G^2\log(TK)}{\mu\gamma^2}) + O\left(TKL^2\cdot\omega/(\gamma^2\mu)\right),
\end{align*}
and hence
\begin{equation}
	\|f^T - f^*\|^2 = O({\frac{\|f^0 - f^*\|^2}{KT}} + {\frac{K\left(L(LD+B)\omega+ G^2+B^2\right) log(KT)}{T\mu^2\gamma^2}} + {\frac{(1-\gamma)^2B^2}{\mu^2\gamma^2 K}}+ \frac{L(LD+B)\omega}{\gamma^2\mu}).
\end{equation}
\end{proof}

\section{Partial Device Participation}
\label{app_client_sampling}
In this section, we consider the setting of partial device participation.
In the following discussion, we take Algorithm \ref{algorithm_ffgd} for example. 
Similar arguments hold for Algorithms \ref{algorithm_ffgdc} and \ref{algorithm_ffgd_L}.

In round $t$ of Algorithm \ref{algorithm_ffgd}, we randomly sample without replacement a subset $\SM_t \subseteq [N]$ of clients and only compute the average of their returns to update the global variable function.
We assume that all $\SM_t$ has the same cardinality $m$.
Conceptually, we can imagine all the clients are participating in the update, but we only utilize the results in the set $\SM_t$.

Similar to the proof for the setting of full device participation, we define the hypothetical global average function $\bar g^{k, t} = \frac{1}{N}\sum_{i=1}^{N} g_{i}^{k, t}$.
In particular, we have $\bar g^{1, t} = f^t$.
From the derivation therein (see Section \ref{appendix_proof_theorem_ffgd}), we have
\begin{align*}
	(tK + K + 1)\|\bar g^{K+1, t} - f^*\|^2_\alpha \leq&\ (tK + 1)\|\bar g^{1, t} - f^*\|^2_\alpha + O(\frac{K^2G^2}{\gamma^2\mu^2}) (\log(tK + K + 1) - \log(tK + +1)) 
	\\&\	 + \frac{4}{\mu} (\frac{1}{2\mu}(\frac{1-\gamma}{\gamma})^2G^2 + \frac{(1-\gamma)(2-\gamma)}{\gamma^2}G^2(\log(tK + K) - \log(tK)))
\end{align*}
However, unlike the setting of full device participation, we do not have $f^{t+1} = \bar g^{K+1, t}$.
Instead, $f^{t+1} = \frac{1}{m}\sum_{i\in\SM^t} g_i^{K+1}$.
We have the following simple but useful lemma. The proof of this lemma is similar to scheme II of Lemma 5 in \citep{li2019convergence}.
\begin{lemma} \label{lemma_sampling_without_replacement}
	$\EBB_{\SM_t} \| f^{t+1} - \bar g^{K+1, t}\|_\alpha^2 = O\left(\frac{N-m}{N-1}\frac{(\eta^{K, t})^2 K^2G^2}{m\gamma^2}\right)$.
\end{lemma}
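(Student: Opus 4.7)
The plan is to write $f^{t+1}-\bar g^{K+1,t}$ as a finite-population sample mean of mean-zero vectors and then compute its variance using the sampling-without-replacement second-moment formula. Concretely, with $v_i \defi g_i^{K+1,t} - \bar g^{K+1,t}$, we have $\sum_{i=1}^N v_i = 0$ by definition of $\bar g^{K+1,t}$, and $f^{t+1}-\bar g^{K+1,t} = \frac{1}{m}\sum_{i\in\SM_t} v_i$.

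Next I would introduce the indicators $Z_i = \mathbf{1}[i\in \SM_t]$. Since $\SM_t$ is drawn uniformly without replacement of size $m$ from $[N]$, we have $\EBB[Z_i]=m/N$, $\EBB[Z_i^2]=m/N$, and $\EBB[Z_iZ_j]=m(m-1)/(N(N-1))$ for $i\ne j$. Expanding
\begin{equation*}
	\EBB_{\SM_t}\Big\| \tfrac{1}{m}\sum_{i\in\SM_t} v_i\Big\|_\alpha^2 = \tfrac{1}{m^2}\Bigl(\tfrac{m}{N}\sum_i \|v_i\|_\alpha^2 + \tfrac{m(m-1)}{N(N-1)}\sum_{i\ne j}\langle v_i,v_j\rangle_\alpha\Bigr),
\end{equation*}
and substituting $\sum_{i\ne j}\langle v_i,v_j\rangle_\alpha = \|\sum_i v_i\|_\alpha^2 - \sum_i \|v_i\|_\alpha^2 = -\sum_i \|v_i\|_\alpha^2$ gives the clean identity
\begin{equation*}
	\EBB_{\SM_t}\|f^{t+1}-\bar g^{K+1,t}\|_\alpha^2 = \frac{N-m}{m(N-1)}\cdot\frac{1}{N}\sum_{i=1}^N \|g_i^{K+1,t}-\bar g^{K+1,t}\|_\alpha^2.
\end{equation*}

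Finally, I would close the argument by reusing the drift bound that was already established inside the proof of Theorem \ref{thm_ffgd}: starting from the common $g_i^{1,t}=f^t$, each local update increments $g_i^{k,t}$ by at most $\eta^{k,t}\|\hat h_i^{k,t}\|_\alpha \le 4\eta^{k,t}G/\gamma$, so after $K$ steps
\begin{equation*}
	\tfrac{1}{N}\sum_{i=1}^N\|g_i^{K+1,t}-\bar g^{K+1,t}\|_\alpha^2 \le \tfrac{1}{N}\sum_{i=1}^N\|g_i^{K+1,t}-f^t\|_\alpha^2 = O\!\left(\tfrac{(\eta^{K,t})^2 K^2 G^2}{\gamma^2}\right),
\end{equation*}
using that $\eta^{\kappa,t}\le 2\eta^{K,t}$ for $\kappa\le K$. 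Plugging this into the identity above yields the claimed rate.

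The main obstacle is purely bookkeeping: one has to be careful with the $(N-m)/(N-1)$ factor coming from sampling without replacement (versus the naive $1/m$ one would get with replacement), since getting this constant right is exactly what justifies the advertised dependence on the ratio $(N-m)/(N-1)$. Everything else is either a direct substitution or a reuse of a drift estimate already proved in Section \ref{appendix_proof_theorem_ffgd}.
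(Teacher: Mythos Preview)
Your proposal is correct and matches the approach the paper points to: the paper does not spell out a proof but simply refers to ``scheme II of Lemma 5 in \citep{li2019convergence},'' which is precisely the sampling-without-replacement variance identity you derive, followed by the same drift bound $\tfrac{1}{N}\sum_i\|g_i^{K+1,t}-\bar g^{K+1,t}\|_\alpha^2 = O\!\big((\eta^{K,t})^2K^2G^2/\gamma^2\big)$ already established in Section~\ref{appendix_proof_theorem_ffgd}. Your write-up is a faithful unpacking of that reference.
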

Moreover, $f^{t+1}$ is an unbiased estimator of $\bar g^{K+1, t}$. Therefore $\EBB_{\SM^t} \langle f^{t+1} - \bar g^{K+1, t}, \bar g^{K+1, t} - f^*\rangle_\alpha = 0$ and
\begin{equation}
	\EBB_{\SM^t}\|f^{t+1} - f^*\|^2_\alpha = \EBB_{\SM^t}\|f^{t+1} - \bar g^{K+1, t}\|^2 + \|\bar g^{K+1, t} - f^*\|^2_\alpha.
\end{equation}
Recall that  $\eta^{k, t} = \frac{2}{\mu(tK + k + 1)}$.
Combining the above results, we have
\begin{align*}
	\left((t+1)K+1\right)\EBB_{\SM_t}\|f^{t+1} - f^*\|^2_\alpha \leq&\ (tK + 1)\|f^t - f^*\|^2_\alpha + O(\frac{K^2G^2}{\gamma^2\mu^2}) (\log(tK + K + 1) - \log(tK +1)) \\
	&\	 + \frac{4}{\mu} (\frac{1}{2\mu}(\frac{1-\gamma}{\gamma})^2G^2 + \frac{(1-\gamma)(2-\gamma)}{\gamma^2}G^2(\log(tK + K) - \log(tK))) \\
	&\ + O\left(\frac{N-m}{N-1}\frac{\eta^{K, t} K^2G^2}{\mu m\gamma^2}\right).
\end{align*}
Sum the above results from $t=0$ to $T$, we have
\begin{align*}
	\left((T+1)K+1\right)\EBB\|f^{T+1} - f^*\|^2_\alpha \leq&\ \|f^0 - f^*\|^2_\alpha + O(\frac{K^2G^2\log(KT)}{\gamma^2\mu^2}) + O(\frac{TG^2}{\mu^2}(\frac{1-\gamma}{\gamma})^2) \\
	&\ + O\left(\frac{(1-\gamma)}{\mu\gamma^2}G^2\log(TK + K)\right) + O(\frac{N-m}{N-1}\frac{KG^2\log T}{\mu^2 m\gamma^2}) \\
	\Rightarrow \EBB\|f^{T+1} - f^*\|^2_\alpha = O\bigg( \frac{\|f^0 - f^*\|^2_\alpha}{KT} +&\ \frac{KG^2\log(KT)}{T\gamma^2\mu^2} + \frac{G^2}{K\mu^2}(\frac{1-\gamma}{\gamma})^2 + \frac{(1-\gamma)G^2\log (TK)}{KT\mu\gamma^2} + \frac{N-m}{N-1}\frac{G^2\log T}{T\mu^2 m\gamma^2} \bigg).
\end{align*}

\begin{theorem}
	Let $f^0$ be the global initializer function.
	Suppose that Assumption \ref{ass_bounded_gradient} holds, and suppose that the weak learning oracle $\mathrm{WO}_\alpha^2$ satisfies \eqref{eqn_weak_oracle_contraction} with constant $\gamma$.
	We pick the step size $\eta^{k, t} = \frac{2}{\mu(tK+k+1)}$ and in each round the server randomly selects a subset $\SM_t\subseteq[N]$ without replacement with $|\SM_t|=m$.
	The output of \ffgd\ (Algorithm \ref{algorithm_ffgd}) satisfies
	\begin{equation*}
		\| f^{T} - f^*\|_\alpha^2 = O\left(\frac{\| f^{0} - f^*\|_\alpha^2}{KT} + \frac{KG^2\log (KT)}{T\gamma^2\mu^2} + \frac{(1-\gamma)G^2}{K\mu^2\gamma^2} + \frac{(1-\gamma)G^2\log (KT)}{KT\mu\gamma^2} + \frac{N-m}{N-1}\frac{G^2\log T}{T\mu^2 m\gamma^2}\right).
	\end{equation*}
\end{theorem}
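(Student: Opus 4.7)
The plan is to reduce the analysis to the full-participation setting of Theorem~\ref{thm_ffgd} by introducing a hypothetical global average over \emph{all} clients, and then pay a separate price for the variance introduced by only averaging over the sampled subset. Conceptually, I imagine that every client runs its local \textsc{Client} procedure starting from $f^t$ (which is consistent with the algorithm, since unsampled clients' computations can be thought of as occurring but being discarded), and define $\bar g^{k,t} \defi \frac{1}{N}\sum_{i=1}^N g_i^{k,t}$ as in Appendix~\ref{appendix_proof_theorem_ffgd}. Since the local dynamics do not depend on $\SM_t$, the entire derivation from the proof of Theorem~\ref{thm_ffgd} carries over verbatim up to the per-round recursion
\begin{equation*}
(tK+K+1)\|\bar g^{K+1,t} - f^*\|_\alpha^2 \le (tK+1)\|f^t - f^*\|_\alpha^2 + \mathcal{E}(t),
\end{equation*}
where $\mathcal{E}(t)$ collects the same logarithmic and residual error terms appearing in the original proof.

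Next, I would exploit the fact that $f^{t+1} = \frac{1}{m}\sum_{i\in\SM_t}g_i^{K+1,t}$ is an \emph{unbiased} estimator of $\bar g^{K+1,t}$ under uniform sampling without replacement. This immediately gives the orthogonal decomposition
\begin{equation*}
\EBB_{\SM_t}\|f^{t+1} - f^*\|_\alpha^2 \;=\; \|\bar g^{K+1,t} - f^*\|_\alpha^2 \;+\; \EBB_{\SM_t}\|f^{t+1} - \bar g^{K+1,t}\|_\alpha^2,
\end{equation*}
reducing the task to controlling the sampling variance on the right. Using the standard formula for sampling without replacement, this variance equals $\tfrac{N-m}{(N-1)m}\cdot\tfrac{1}{N}\sum_{i=1}^N\|g_i^{K+1,t}-\bar g^{K+1,t}\|_\alpha^2$, i.e.\ it is governed by the client drift. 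This is precisely the content of Lemma~\ref{lemma_sampling_without_replacement}.

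To bound the client drift, I would reuse the estimate already derived in the proof of Theorem~\ref{thm_ffgd}, namely $\frac{1}{N}\sum_i\|g_i^{k,t}-\bar g^{k,t}\|_\alpha^2 = O\!\left((\eta^{K,t})^2 K^2 G^2/\gamma^2\right)$, which was established from the uniform bound $\|\hat h_i^{k,t}\|_\alpha = O(G/\gamma)$ (itself a consequence of the residual construction together with Assumption~\ref{ass_bounded_gradient}). Plugging this into the variance expression yields $\EBB_{\SM_t}\|f^{t+1}-\bar g^{K+1,t}\|_\alpha^2 = O\!\left(\tfrac{N-m}{N-1}\cdot\tfrac{K^2(\eta^{K,t})^2 G^2}{m\gamma^2}\right)$.

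Finally, I would assemble the recursion: multiplying through by $(tK+K+1)$ so that step-size factors telescope, substituting $\eta^{K,t} = \Theta(1/(\mu(tK+K+1)))$, and summing from $t=0$ to $T-1$. The new sampling-variance term contributes $O\!\left(\tfrac{N-m}{N-1}\cdot\tfrac{KG^2\log T}{\mu^2 m\gamma^2}\right)$ to the cumulative bound, which upon division by $KT+1$ becomes the $O\!\left(\tfrac{N-m}{N-1}\cdot\tfrac{G^2\log T}{T m\mu^2\gamma^2}\right)$ term in the theorem; the other four terms are inherited unchanged from Theorem~\ref{thm_ffgd}. The main (and essentially only nontrivial) obstacle is verifying Lemma~\ref{lemma_sampling_without_replacement}, which reduces to applying the sampling-without-replacement variance identity to the already-bounded client drift; once that is in hand, the rest is bookkeeping on the existing recursion.
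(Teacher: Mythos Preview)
Your proposal is correct and matches the paper's approach essentially step for step: the paper likewise defines the hypothetical full-client average $\bar g^{k,t}$, reuses the per-round recursion from Theorem~\ref{thm_ffgd} verbatim, invokes unbiasedness of $f^{t+1}$ to get the orthogonal decomposition, and then applies Lemma~\ref{lemma_sampling_without_replacement} (proved via the sampling-without-replacement variance formula and the existing client-drift bound) to control the extra term before telescoping. The accounting you give for how the sampling-variance term becomes $O\!\bigl(\tfrac{N-m}{N-1}\cdot\tfrac{G^2\log T}{Tm\mu^2\gamma^2}\bigr)$ after summation and division by $KT+1$ is exactly what the paper does.
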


\begin{theorem} 
	Let $f^0$ be the global initializer function. Let $\omega = \frac{1}{N}\sum_{i=1}^{N} \TV(\alpha, \alpha_i)$. Set $G_\gamma^1 = \frac{1-\gamma}{\gamma}\cdot G$ and $ G_\gamma^2 = \frac{2-\gamma}{\gamma}\cdot G$.
	We pick the step size $\eta^{k, t} = \frac{4}{\mu(tK+k+1)}$ and in each round the server randomly selects a subset $\SM_t\subseteq[N]$ without replacement with $|\SM_t|=m$.
	Under Assumption \ref{ass_bounded_gradient_c}, and supposing the weak learning oracle $\mathrm{WO}_\alpha^\infty$ satisfies \eqref{eqn_weak_oracle_contraction_c} with constant $\gamma$, the output of \ffgdc satisfies
	\begin{equation*}
		\begin{aligned}
			\|f^T - f^*\|_\alpha^2 = O\bigg({\frac{\|f^0 - f^*\|_\alpha^2}{KT}} + {\frac{KG^2 log(KT)}{T\mu^2\gamma^2}} 
			 + \frac{(1-\gamma)^2G^2}{K\mu^2\gamma^2} + {\frac{GB\omega}{\mu\gamma}} + {\frac{G^2\log(TK)\omega}{\gamma^2\mu^2 T}} + \frac{N-m}{N-1}\frac{G^2\log T}{T\mu^2 m\gamma^2}\bigg).
		\end{aligned}
	\end{equation*}
\end{theorem}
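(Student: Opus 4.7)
The plan is to adapt the partial device participation argument already carried out for \ffgd in the preceding theorem to the \ffgdc setting, reusing the per-round inequality established in the proof of Theorem~\ref{thm_ffgdc} for full participation. First I would introduce the hypothetical global average $\bar g^{k,t} = \tfrac{1}{N}\sum_{i=1}^N g_i^{k,t}$ computed as if all $N$ clients participated, noting $\bar g^{1,t} = f^t$, and then invoke the full-participation analysis verbatim up to the point where $f^{t+1}$ would be identified with $\bar g^{K+1,t}$. This yields, after multiplication by $(tK+k+1)$ and summation over $k=1,\dots,K$, a recursion of the form
\begin{equation*}
(tK+K+1)\,\|\bar g^{K+1,t} - f^*\|_\alpha^2 \;\leq\; (tK+1)\,\|f^t - f^*\|_\alpha^2 + \Phi^t,
\end{equation*}
where $\Phi^t$ collects exactly the error terms bounding the gradient-drift, the strong-convexity slack, the telescoping residual sum, and the TV-heterogeneity contribution $\tfrac{2\delta K}{\mu}$ that appear in the proof of Theorem~\ref{thm_ffgdc}.

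Next I would relate $f^{t+1} = \tfrac{1}{m}\sum_{i\in\SM^t} g_i^{K+1,t}$ to the hypothetical $\bar g^{K+1,t}$ through the unbiasedness of sampling without replacement. Since $\EBB_{\SM^t}[f^{t+1}] = \bar g^{K+1,t}$, the Pythagorean identity gives
\begin{equation*}
\EBB_{\SM^t}\|f^{t+1} - f^*\|_\alpha^2 \;=\; \EBB_{\SM^t}\|f^{t+1} - \bar g^{K+1,t}\|_\alpha^2 + \|\bar g^{K+1,t} - f^*\|_\alpha^2.
\end{equation*}
The sampling variance is then bounded using the \ffgdc analogue of Lemma~\ref{lemma_sampling_without_replacement}: each local deviation $g_i^{K+1,t} - f^t = -\sum_{k=1}^K \eta^{k,t}\bigl(\Clip{G_\gamma^2}(h_i^{k,t}) + \mu g_i^{k,t}\bigr)$ is uniformly bounded in $\LM^2(\alpha)$ by $O(K\eta^{K,t}G/\gamma)$ thanks to the explicit clipping constant $G_\gamma^2$ and to the $\LM^\infty$ boundedness of $g_i^{k,t}$ established in Lemma~\ref{lemma_boundedness_of_iterate}, so the standard variance bound for sampling without replacement yields $\EBB_{\SM^t}\|f^{t+1} - \bar g^{K+1,t}\|_\alpha^2 = O\!\left(\tfrac{N-m}{N-1}\tfrac{(\eta^{K,t})^2 K^2 G^2}{m\gamma^2}\right)$.

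Combining the two estimates and substituting $\eta^{K,t} = \Theta(1/(\mu(tK+K+1)))$ folds an additional term $O\!\left(\tfrac{N-m}{N-1}\tfrac{K^2 G^2}{m\gamma^2 \mu^2 (tK+K+1)}\right)$ into the weighted recursion. Telescoping from $t=0$ to $T-1$ produces a $\log T$ accumulation from this term, and dividing through by the final weight $KT+1$ contributes precisely the extra $\tfrac{N-m}{N-1}\tfrac{G^2\log T}{T\mu^2 m\gamma^2}$ summand in the theorem statement, while the remaining error terms reproduce those of Theorem~\ref{thm_ffgdc} verbatim.

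The main obstacle is ensuring that the boundedness and clipping-compatibility lemmas continue to hold under partial participation. Lemma~\ref{lemma_boundedness_u_i^{k, t}} is entirely per-client and carries over unchanged. Lemma~\ref{lemma_boundedness_of_iterate}, however, was stated for $\bar g^{k,t}$ and its induction invokes $\|f^t\|_\infty \leq 2G/(\gamma\mu)$; under partial participation, $f^t$ is only a subset average, so I would instead prove the same induction at the level of each individual $g_i^{k,t}$ (which is immediate from line~\ref{eqn_update_variable_c_appendix} because $\|\Clip{G_\gamma^2}(h_i^{k,t})\|_\infty \leq 2G/\gamma$), and then conclude that any convex combination over $i$ — including both $\bar g^{k,t}$ and the actual $f^{t+1}$ — inherits the bound $2G/(\gamma\mu)$. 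This preserves the TV-variational inequality \eqref{eqn_change_inner_product_TV} needed to control $|\langle\cdot,\cdot\rangle_{\alpha_i} - \langle\cdot,\cdot\rangle_\alpha|$, so the remainder of the Theorem~\ref{thm_ffgdc} analysis applies to $\bar g^{k,t}$ intact and the only new contribution is the sampling-variance term identified above.
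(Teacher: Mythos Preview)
Your proposal is correct and follows the same approach as the paper, which in fact does not give a separate proof for this theorem but simply states that ``similar arguments hold for Algorithms~\ref{algorithm_ffgdc} and~\ref{algorithm_ffgd_L}'' after working out the \ffgd case in detail. You reproduce that template faithfully: reuse the full-participation per-round recursion from the proof of Theorem~\ref{thm_ffgdc}, decouple $f^{t+1}$ from $\bar g^{K+1,t}$ via unbiasedness and the Pythagorean identity, and control the sampling variance by the analogue of Lemma~\ref{lemma_sampling_without_replacement}. Your additional observation that Lemma~\ref{lemma_boundedness_of_iterate} must be re-established at the per-client level (so that the partial average $f^{t+1}$ inherits the $\LM^\infty$ bound needed for the next round) is a point the paper glosses over, and your fix via the individual recursion $\|g_i^{k+1,t}\|_\infty - 2G/(\gamma\mu) \le (1-\mu\eta^{k,t})(\|g_i^{k,t}\|_\infty - 2G/(\gamma\mu))$ is exactly right.
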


\begin{theorem}
	Let $f^0$ be the global initializer function. Let $\omega = \frac{1}{N}\sum_{i=1}^{N}\mathrm{W}_1(\alpha, \alpha_i)$ and $G^2 = \frac{2L^2}{N^2}\sum_{i, s=1}^{N} \mathrm{W}_2^2(\alpha_s, \alpha_i) + 2B^2$.
	Consider the federated functional least square minimization problem \eqref{eqn_fed_functional_minimization_L}.
	We pick the step size $\eta^{k, t} = \frac{4}{\mu(tK+k+1)}$ and in each round the server randomly selects a subset $\SM_t\subseteq[N]$ without replacement with $|\SM_t|=m$.
	Under Assumption \ref{ass_bounded_value_h}, and supposing the weak learning oracle $\mathrm{WO}_\alpha^\infty$ satisfies \eqref{eqn_weak_oracle_contraction_l1} and \eqref{eqn_weak_oracle_contraction_l2} with constant $\gamma$, the output of \ffgdl satisfies
	\begin{equation*}
		\begin{aligned}
			\|f^T - f^*\|_\alpha^2 =  O\Bigg({\frac{\|f^0 - f^*\|^2}{KT}}&\ + {\frac{K\left(L(LD+B)\omega+ G^2+B^2\right) log(KT)}{T\mu^2\gamma^2}} \\
			&\ + {\frac{(1-\gamma)^2B^2}{\mu^2\gamma^2 K}}+ \frac{L(LD+B)\omega}{\gamma^2\mu} + \frac{N-m}{N-1}\frac{G^2\log T}{T\mu^2 m\gamma^2}\Bigg).
		\end{aligned}
	\end{equation*}
\end{theorem}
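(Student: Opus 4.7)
The plan is to combine the full-participation analysis of \ffgdl (Theorem \ref{thm_ffgdl}) with the sampling-without-replacement device set argument used in Appendix \ref{app_client_sampling} for \ffgd. Define the hypothetical full-device average $\bar g^{K+1,t} = \frac{1}{N}\sum_{i=1}^N g_i^{K+1,t}$ just as in the proof of Theorem \ref{thm_ffgdl}. The actual server update is $f^{t+1} = \frac{1}{m}\sum_{i \in \SM^t} g_i^{K+1,t}$, and uniform sampling without replacement gives $\EBB_{\SM^t}[f^{t+1}] = \bar g^{K+1,t}$. Therefore the cross-term vanishes and
\begin{equation*}
	\EBB_{\SM^t}\|f^{t+1} - f^*\|_\alpha^2 = \|\bar g^{K+1,t} - f^*\|_\alpha^2 + \EBB_{\SM^t}\|f^{t+1} - \bar g^{K+1,t}\|_\alpha^2.
\end{equation*}

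First I would reproduce, verbatim from the proof of Theorem \ref{thm_ffgdl}, the one-round recursion for $(tK+K+1)\|\bar g^{K+1,t} - f^*\|_\alpha^2$ in terms of $(tK+1)\|f^t - f^*\|_\alpha^2$, picking up the per-round terms $O(K^2(L(LD+B)\omega + G^2 + B^2)/(\mu^2\gamma^2)(\log(tK+K+1) - \log(tK+1)))$, $O((1-\gamma)^2 B^2/(\mu\gamma^2))$, $O(G^2(1-\gamma)/(\mu\gamma^2)(\log(K(t+1)) - \log(Kt)))$, and $O(KL(LD+B)\omega/(\gamma^2\mu))$. Nothing in that derivation uses full participation; only the bounds on $\|g_i^{k,t}\|_\lip$, $\|g_i^{k,t}\|_{\alpha_i,\infty}$, $\|h_i^k\|_{\alpha_i,\infty}$ and the boundedness of $\bar g^{k,t}$ under $\|\cdot\|_{\alpha_s}$ (yielding $G^2 = \frac{2L^2}{N^2}\sum_{i,s}\mathrm{W}_2^2(\alpha_s,\alpha_i) + 2B^2$) are needed, and these hold \emph{per client}, independent of the sampling.

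The second step is the variance bound analogous to Lemma \ref{lemma_sampling_without_replacement}. For sampling $m$ out of $N$ indices without replacement,
\begin{equation*}
	\EBB_{\SM^t}\bigl\|f^{t+1} - \bar g^{K+1,t}\bigr\|_\alpha^2 \leq \frac{N-m}{N-1}\cdot\frac{1}{m}\cdot\frac{1}{N}\sum_{i=1}^{N}\|g_i^{K+1,t} - \bar g^{K+1,t}\|_\alpha^2.
\end{equation*}
I would bound the summands by telescoping along the local trajectory $g_i^{k,t} = f^t - \sum_{\kappa=1}^{k-1}\eta^{\kappa,t}(g_i^{\kappa,t} - h_i^{\kappa})$; because each descent direction $g_i^{\kappa,t} - h_i^\kappa$ has $\LM^2(\alpha)$ norm $O(\sqrt{L(LD+B)\omega + G^2 + B^2}/\gamma)$ (this is exactly the bound derived inside the Theorem~\ref{thm_ffgdl} proof when controlling $\|\bar g^{k,t} - \bar h^k\|_\alpha^2$) and $\eta^{\kappa,t} \leq 2\eta^{K,t}$, the scatter is $O((\eta^{K,t})^2 K^2 (L(LD+B)\omega + G^2 + B^2)/\gamma^2)$. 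Multiplying through by $(tK+K+1)$ with $\eta^{K,t} = 4/(\mu(tK+K+1))$ gives a per-round sampling penalty of $O\!\bigl(\frac{N-m}{N-1}\cdot\frac{K(L(LD+B)\omega+G^2+B^2)}{m\mu^2\gamma^2(tK+K+1)}\bigr)$.

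Finally I would telescope the per-round recursion from $t=0$ to $T-1$, using $\sum_{t=0}^{T-1}\frac{1}{tK+K+1} = O(\log(KT)/K)$ on the sampling term, and invoking the residual-cancellation manipulation from the proof of Theorem \ref{thm_ffgdl} (the identity $\sum_k \langle f^* - g_i^{k,t}, u_i - h_i^k\rangle_{\alpha_i}$ collapses via $\Delta_i^k = \Delta_i^{k-1} - u_i + h_i^k$) to absorb the $\frac{1}{N}\sum_i \|f^* - g_i^{k,t}\|_{\alpha_i}^2$ terms. Dividing by $(KT+1)$ yields the stated bound, with the sampling contribution appearing as the extra summand $\frac{N-m}{N-1}\frac{G^2\log T}{T\mu^2 m \gamma^2}$ (after absorbing the $L(LD+B)\omega + B^2$ factors into the generic constant $G^2$ as in the statement).

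The main obstacle, and the only genuine departure from combining the two existing proofs, is verifying that the sampled average $f^{t+1}$ inherits the Lipschitz and boundedness properties needed as the new initializer $g_i^{1,t+1}$ for the next round. Since $f^{t+1}$ is a uniform average over a subset of functions that are all $L/\gamma$-Lipschitz and $B/\gamma$-bounded on $\supp(\alpha_i)$, the Lipschitz bound is preserved, and the $\LM^\infty(\alpha)$ bound $(LD+B)/\gamma$ carries over by the same Assumption \ref{ass_bounded_value_h} argument; so the inductive setup of the full-participation proof still applies round-to-round, and the rest of the argument is routine bookkeeping.
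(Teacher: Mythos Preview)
Your proposal is correct and follows exactly the approach the paper takes: the paper's Appendix~\ref{app_client_sampling} only works out the partial-participation argument in detail for \ffgd and then states the analogous theorems for \ffgdc and \ffgdl with the remark ``Similar arguments hold,'' so your plan---reuse the full-participation one-round recursion from the proof of Theorem~\ref{thm_ffgdl}, add the sampling-without-replacement variance term via Lemma~\ref{lemma_sampling_without_replacement}, and telescope---is precisely what the paper intends. Your observation that the sampled average $f^{t+1}$ must inherit the $L/\gamma$-Lipschitz and boundedness properties for the next-round induction is a detail the paper leaves implicit, and your justification (averaging preserves these bounds) is correct.
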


\end{document}